\crefname{assumption}{Assumption}{Assumptions}
\crefname{equation}{Eq.}{Eqs.}
\crefname{figure}{Fig.}{Figs.}
\crefname{table}{Table}{Tables}
\crefname{section}{Sec.}{Secs.}
\crefname{theorem}{Thm.}{Thms.}
\crefname{lemma}{Lemma}{Lemmas}
\crefname{corollary}{Cor.}{Cors.}
\crefname{example}{Example}{Examples}
\crefname{appendix}{Appendix}{Appendixes}
\crefname{remark}{Remark}{Remark}
\renewenvironment{proof}[1][\proofname]{{\bfseries #1.}}{\qed \\ }
\newcounter{remark}[section]
\newcommand{\relinterior}{\operatorname{relint}}
\newcommand{\interior}{\operatorname{int}}
\newcommand{\calV}{\mathcal{V}}
\newcommand{\calX}{\mathcal{X}}
\newcommand{\calY}{\mathcal{Y}}
\newcommand{\calZ}{\mathcal{Z}}
\newcommand{\calE}{\mathcal{E}}
\newcommand{\calD}{\mathcal{D}}
\newcommand{\calH}{\mathcal{H}}
\newcommand{\calG}{\mathcal{G}}
\newcommand{\calM}{\mathcal{M}}
\newcommand{\calR}{\mathcal{R}}
\newcommand{\ERL}{\calE}
\newcommand{\ERS}{\calR}
\DeclareMathAlphabet{\mathsfsl}{OT1}{cmss}{m}{sl}
\renewcommand{\phi}{\varphi}
\newcommand{\Rspace}[1]{\mathbb{R}^{#1}}
\newcommand{\R}{\mathbb{R}}
\newcommand{\N}{\mathbb{N}}
\newcommand{\fstar}{f^\star}
\newcommand{\gstar}{g^\star}
\newcommand{\mustar}{\mu^\star}
\newcommand{\fhat}{\widehat{f}}
\newcommand{\ghat}{\widehat{g}}
\newcommand{\muhat}{\widehat{\mu}}
\newcommand*{\defeq}{\mathrel{\vcenter{\baselineskip0.5ex \lineskiplimit0pt
                     \hbox{\scriptsize.}\hbox{\scriptsize.}}}%
                     =}
\newcommand{\eqal}[1]{
\begin{align}
#1
\end{align}
}
\newcommand{\argmin}{\operatorname*{arg\; min}}
\newcommand{\argmax}{\operatorname*{arg\; max}}
\newcommand{\Expect}{\operatorname{\mathbb{E}}}
\theoremstyle{plain}  
\newtheorem{theorem}{Theorem}[section]
\newtheorem{definition}[theorem]{Definition}
\newtheorem{lemma}[theorem]{Lemma}
\newtheorem{proposition}[theorem]{Proposition}
\newtheorem{remark}[theorem]{Remark}
\newtheorem{example}[theorem]{Example}
\begin{document}
\title{A General Theory for Structured Prediction with Smooth Convex Surrogates}

\author{\textbf{Alex Nowak-Vila, Francis Bach, Alessandro Rudi} \\ [2ex]
INRIA - D\'epartement d'Informatique de l'\'Ecole Normale Sup\'erieure \\
PSL Research University \\
 Paris, France \\\\
}

\maketitle

\begin{abstract}
    In this work we provide a theoretical framework for structured prediction that generalizes the existing theory of surrogate methods for binary and multiclass classification based on estimating conditional probabilities with smooth convex surrogates (e.g. logistic regression). The theory relies on a natural characterization of structural properties of the task loss and allows to
    derive statistical guarantees for many widely used methods in the context of multilabeling, ranking, ordinal regression and graph matching. In particular, we characterize the smooth convex surrogates compatible with a given task loss in terms of a suitable Bregman divergence composed with a link function. This allows to derive tight bounds for the calibration function and to obtain novel results on existing surrogate frameworks for structured prediction such as conditional random fields and quadratic surrogates.
\end{abstract}

\section{Introduction}

In statistical machine learning, we are usually interested in predicting an unobserved output element~$y$ from a discrete output space $\calY$ given an observed value $x$ from an input space $\calX$. This is done by estimating a function $f$ such that $f(x)\approx y$ from a finite set of example pairs $(x,y)$.  

In many practical domains such as natural language processing \cite{smith2011linguistic}, computer vision \cite{nowozin2011structured} and computational biology \cite{durbin1998biological}, the outputs are structured objects, such as sequences, images, graphs, etc. 
This structure is implicitly characterized by the loss function $L:\calY\times\calY\rightarrow\Rspace{}$ used to measure the error between the prediction and the observed output as $L(f(x),y)$. Unfortunately, as the outputs are discrete, the direct minimization of the loss function is known to be intractable even for the simplest losses such as the binary 0-1 loss \cite{arora1997hardness}. A common approach to the problem is to design a surrogate loss $S:\calV\times\calY\rightarrow\Rspace{}$ defined in a continuous surrogate space $\calV$ that can be minimized in practice and construct the functional $f$ by ``decoding'' the values from the continuous space to the discrete space of outputs.

In this paper, we construct a general theory for structured output prediction using smooth convex surrogates based on estimating the Bayes risk of the task loss. The methods we consider can be seen as a generalization of binary and multiclass methods based on estimating the conditional probabilities \cite{bartlett2006convexity, zhang2004statistical, zhang2004statisticalbehavior} to general discrete losses, and correspond to proper composite losses \cite{reid2010composite,vernet2011composite} for multiclass classification. Our construction is based on two main ingredients; first, the characterization of the structural properties of a loss function $L$ by means of an affine decomposition of the loss \cite{ramaswamy2013convex, nowak2018sharp}, which we present in \cref{sec:setting}, and second, the Bregman divergence characterization of proper scoring rules for eliciting linear properties of a distribution \cite{abernethy2012characterization, pmlr-v40-Frongillo15}, which has already been noted to have strong links with the design of consistent surrogate losses \cite{agarwal2015consistent}.

We put these two ideas together in \cref{sec:surrogateframework} to construct calibrated surrogates, which are \emph{consistent} smooth convex surrogates with two basic elements, namely, a differentiable and strictly convex \emph{potential} $h$ and a continuous invertible \emph{link function} $t$, which can be easily obtained from the surrogate loss. 
We showcase the generality of our construction by showing how general methods for structured prediction such as the quadratic surrogate \cite{ciliberto2016consistent, osokin2017structured, ciliberto2018localized} and conditional random fields (CRFs)  \cite{Lafferty:2001:CRF:645530.655813, settles2004biomedical}, and widely used methods in multiclass classification \cite{zhang2004statistical}, multilabel classification \cite{read2011classifier}, ordinal regression \cite{pedregosa2017consistency}, amongst others,  fall into our framework. Hinge-type surrogates such as the structured SVM \cite{crammer2001algorithmic}, which is known to be inconsistent \cite{tewari2007consistency}, are not included.

This theoretical framework allows to derive guarantees by relating the surrogate risk associated to $S$ (object that we can minimize) to the actual risk associated to $L$ (object that we want to minimize) by means of convex lower bounds on the \emph{calibration function} $\zeta_h$ \cite{bartlett2006convexity, steinwart2007compare, osokin2017structured}, which is a mathematical object that only depends on the surrogate loss through the potential $h$. In \cref{sec:theoreticalanalysis}, we provide an exact formula for the calibration function (\cref{th:exactcalibration}) and a user-friendly quadratic lower bound for strongly convex potentials (\cref{th:lowerboundcalibration}). 

There, we also analyze the role of the link function on the complexity of the surrogate method by studying the learning guarantees when the convex surrogate is minimized with a stochastic learning algorithm (\cref{th:ASGD}). In particular, we show that, while the relation between excess risks is related to the potential $h$, the approximation error is crucially related to the link function. More specifically, we discuss the benefits of logistic-type surrogates with respect to the quadratic-type ones.

Finally, those results are then used in \cref{sec:examples} to derive learning guarantees for specific methods on multiple tasks for the first time, while also recovering existing results. The most significant novel results on this direction being an exact expression for the calibration function for the quadratic surrogate (\cref{th:exactcalibrationquadraticsimple}) and a quadratic lower bound for CRFs (\cref{prop:calibrationCRFs}). 

\section{Setting}
    \label{sec:setting}
    \subsection{Supervised Learning} \label{sec:supervisedlearning}
    
    The problem of {\em supervised learning} consists in learning from examples a function relating inputs with observations/labels. More specifically, let $\calY$ be the space of observations, denoted {\em observation space} or {\em label space} and $\calX$ be the {\em input space}. The quality of the predicted output is measured by a given {\em loss function} $L$. 
    In many scenarios the output of the function lies in a different space than the observations, for instance in subset ranking losses \cite{chen2009ranking} or losses with an abstain option \cite{ramaswamy2015consistent}. We denote then by $\calZ$ the {\em output space}, so
    \begin{equation}\label{eq:loss_function}
        L:\calZ\times \calY\longrightarrow \R,
    \end{equation}
    where $L(z, y)$ measures the cost of predicting $z$ when the
    observed value is $y$. We assume that $\calY$ and $\calZ$ are discrete.
    Finally the data are assumed to be distributed according to a probability measure $\rho$ on $\calX \times \calY$. The goal of supervised learning is then to recover the function $\fstar$ 
    \footnote{In general $\fstar$ is not unique, as there might be $x\in\calX$ with more than one optimal outputs. For simplicity, we assume that we have a method to choose a unique output between the optimal ones. Note that this is always possible as $\calZ$ is discrete, so one can always construct this method using an ordering of the elements of $\calZ$.} minimizing the {\em expected risk} $\ERL(f)$ of the loss,
    \eqal{\label{eq:optimal_f}
    \fstar = \argmin_{f: \calX \to \calZ}\ERL(f), \quad \ERL(f) =\Expect_{(X,Y)\sim\rho}L(f(X), Y),
    \vspace*{-4pt}
    }
    given only a number of examples $(x_i, y_i)_{i=1}^n$, with $n \in \mathbb{N}$, sampled independently from $\rho$. 
    The quality of an estimator $\fhat$ for $\fstar$ is measured in terms of the {\em excess risk} $\ERL(\fhat) - \ERL(\fstar)$.
    
    It is known that $\fstar$ is characterized as \cite{steinwart2008support,ciliberto2016consistent},
    \begin{equation}\label{eq:bayes_optimum}
        \fstar(x)=\textstyle \argmin_{z\in\calZ}~\ell(z,\rho(\cdot|x)),
    \end{equation}
    where for any $q\in\operatorname{Prob}(\calY)$, the quantity $\ell(z, q) = \Expect_{Y\sim q}L(z,Y)$ is the 
    {\em Bayes risk}, defined as the expectation of the loss with respect to the distribution $q$ on the labels.
    We also define the \emph{excess Bayes risk} as 
    $\delta\ell(z,q)=\ell(z, q) - \min_{z'\in\calZ}\ell(z',q)\geq 0$.
        
    \subsection{Affine Decomposition of Discrete Losses and Marginal Polytope}\label{sec:affinedecomposition}
    
    Consider the following \emph{affine decomposition} of a loss $L$ \cite{ramaswamy2013convex, nowak2018sharp},
    \begin{equation}\label{eq:affinedecomposition}
        L(z,y) = \langle\psi(z), \phi(y)\rangle + c,
    \end{equation}
    where $\psi:\calZ \to \calH$ and $\phi: \calY \to \calH$ are embeddings to a vector space $\calH$ with Euclidean scalar product $\langle\cdot,\cdot\rangle$ and $c\in\Rspace{}$ is a scalar constant. 
    Note that by linearity of the inner product,
    \begin{equation*}
        \ell(z,q)
    = \Expect_{Y\sim q}\langle\psi(z),\phi(Y)\rangle + c
    = \langle\psi(z),\mu(q) \rangle + c,
    \end{equation*}
    with $\mu(q) = \Expect_{Y\sim q}\phi(Y)$ the vector of moments of the statistic $\phi$. If we denote $\mustar(x)=\mu(\rho(\cdot|x))$, then the excess Bayes risk takes the form 
    $\delta\ell(z,\rho(\cdot|x))=\langle\psi(z)-\psi(\fstar(x)), \mustar(x)\rangle\geq 0$.
   Note that the affine decomposition always exists, is not unique and it corresponds to a low-rank decomposition of the ``centered'' loss matrix $L-c\in\Rspace{\calZ\times\calY}$. The image of $\mustar$ lies inside the convex hull of the $\phi(y)'s$, that is,
    \begin{equation}
        \text{Im}(\mustar) \subseteq  \calM \defeq \operatorname{hull}(\phi(\calY)) \subset \calH.
    \end{equation}
    The set $\calM$ is the polytope corresponding to the convex hull of the finite set
    $\phi(\calY)\subset\calH$. We will refer to $\calM$ as the \emph{marginal polytope} associated to the statistic $\phi$, making an analogy to the literature on graphical models 
    \cite{wainwright2008graphical}. We denote by $k=\operatorname{dim}(\calH)$ the dimension of the embedding space and by $r=\operatorname{dim}(\calM)$ the dimension of the marginal polytope, defined as the dimension of its affine hull. Note that it can be the case that $r<k$, which means that $\calM$ is not full-dimensional in $\calH$. 
    
    \begin{example}[Multiclass and multilabel classification]\label[example]{ex:multiclassmultilabel}
    The 0-1 loss used for $k$-multiclass classification ($\calZ=\calY=\{1,\ldots,k\}$) can be decomposed as
    $L(z,y)=1(z\neq y) = 1 - \langle e_z,e_y\rangle$, where $\calH=\Rspace{k}$ and $e_z$ is the $z$-th vector of the canonical basis in $\Rspace{k}$. In this case, the loss matrix is full-rank and the marginal polytope is the simplex in $k$ dimensions, $\calM=\operatorname{hull}(\{e_y\}_{y=1}^k)=\Delta_k$, which is not full-dimensional and has dimension $r=k-1$.
    Another example is the Hamming loss used for multilabel classification ($\calZ=\calY=\{-1,1\}^k$). In this case, the loss matrix is extremely low-rank and can be decomposed as
    $L(z,y) = \frac{1}{k}\sum_{j=1}^k1(z_j\neq y_j)=\frac{1}{2}-\langle z/(2k), y\rangle$, where $k=\log|\calY|$.
    The marginal polytope is the cube $\calM=\operatorname{hull}(\{-1,1\}^k)=[-1,1]^k$ which is full dimensional in $\calH=\Rspace{k}$.
    \end{example}
    

\section{Surrogate Framework} \label{sec:surrogateframework}
    \subsection{Estimation of the Bayes Risk with Surrogate Losses}\label{sec:bayesrisk}
    
    The construction in \cref{sec:affinedecomposition} leads to a natural method in order to estimate $\fstar$ based on estimating the conditional expectation $\mustar$. Indeed, given an estimator $\muhat$ of $\mustar$, one can first construct an estimator of the Bayes risk as $\widehat{\ell}(z, \rho(\cdot|x)) \defeq \langle\psi(z), \muhat(x)\rangle + c$, and then define the resulting estimator as
    \begin{equation}
        \fhat(x) \defeq \argmin_{z\in\calZ}~\widehat{\ell}(z,\rho(\cdot|x)) = \argmin_{z\in\calZ}~ \langle\psi(z), \muhat(x)\rangle.
    \end{equation}
    In the following, we study a framework to construct estimators of $\mustar$ using surrogate losses. We consider estimators which are based on the minimization of the \emph{expected surrogate risk} $\ERS(g)$ of a \emph{surrogate loss} $S:\calV\times\calY\rightarrow\Rspace{}$ defined in a (unconstrained) vector space $\calV$,
        \begin{equation}\label{eq:surrogaterisk}
            \gstar = \underset{g:\calX\rightarrow\calV}{\argmin}~
            \ERS(g), \quad \ERS(g) = \Expect_{(X,Y)\sim\rho}S(g(X),Y).
        \end{equation}
        
        An estimator $\fhat$ of $\fstar$ is built from an estimator $\ghat$ of $\gstar$ using a
        \emph{decoding mapping} $d:\calV\rightarrow\calZ$ as~$\fhat=d\circ\ghat$.
        The pair $(S,d)$ constitutes a \emph{surrogate method} and
        we say that it is \emph{Fisher consistent}~\cite{lin2004note} to the loss $L$ if the minimizer of the expected surrogate risk \eqref{eq:surrogaterisk} leads to the minimizer of the true risk \eqref{eq:optimal_f} as $\fstar=d\circ\gstar$.
        
        Analogously to the quantities defined in \cref{sec:supervisedlearning} for the discrete loss $L$, we define the \emph{excess surrogate risk} as $\ERS(\ghat)-\ERS(\gstar)$, the \emph{Bayes surrogate risk}
        $s(v,q)=\Expect_{Y\sim q}S(v,Y)$ and the  \emph{excess Bayes surrogate risk} as $\delta s(v,q)=s(v,q)-\min_{v'\in\calV}s(v',q)\geq 0$. Similarly to \cref{eq:bayes_optimum}, $\gstar$ is characterized by $\gstar(x) = \argmin_{v\in\calV}s(v,\rho(\cdot|x))$, which we assume unique.
        
    
        We will now focus on surrogate losses for which $\mustar$ can be computed from the minimizer $\gstar$ through a continuous \emph{injective} mapping $t:\calM\rightarrow\calV$ called the \emph{link function}. More precisely, we ask
        \begin{equation}\label{eq:optimal_g}
        t(\mu(q))=\argmin_{v\in\calV}~s(v,q), \quad \forall q\in\operatorname{Prob}(\calY).
        \end{equation}
        Although \cref{eq:optimal_g} is the only property that we need from $S$ in order to build the theoretical framework, we assume in the following that $S$ is \emph{smooth} and \emph{convex}. This is justified in \cref{rk:smoothnessconvexity}.
        \begin{remark}[On smoothness and convexity requirement on $S$]\label[remark]{rk:smoothnessconvexity}
        Although we do not formalize any statement of that kind, the smoothness of $S$ is closely related to the injectivity of $t$. For instance, in the multiclass case where $\mu(q)=q$, if $v_0=\argmin_{v\in\calV}s(v,q_0)$ and $s(\cdot,q_0)$ is not differentiable at $v_0$, then one can find $q'\neq q_0$ such that $s(v_0,q')=s(v_0,q_0)$, and so the link is not injective. This is the case for hinge-type surrogates, which do not estimate conditional probabilities. A proper analysis in this direction can be formalized in terms of supporting hyperplanes on the so-called superdiction set associated to $S$ (see Sec. 5.3 in \cite{vernet2011composite}). The convexity requirement is made in order to be able to minimize in a tractable way the expected surrogate risk.
        \end{remark}
        If a surrogate loss satisfies \cref{eq:optimal_g}, then one can relate $\fstar$ and $\gstar$ using the decoding mapping $d_{\psi,t}:t(\calM)\rightarrow\calZ$ defined as
        \begin{equation}\label{eq:decoding}
            d_{\psi,t}(v) =\textstyle \argmin_{z\in\calZ}~\langle\psi(z), t^{-1}(v)\rangle.
        \end{equation}
        The role of the link function here is to deal with the fact that the image of $\mustar$ lives in $\calM$, which is a constrained, bounded, and possibly non full-dimensional set of $\calH$.  As in general it is not easy to impose a structural constraint on the hypothesis space,
        the goal of the link function is to encode this geometry by mapping points from a ``simpler" $t(\calM)\subseteq\calV$ to $\calM$. Note that $d_{\psi,t}$ is defined in $t(\calM)$, so if $t(\calM)\neq\calV$, we do not know how to map points from $\calV\backslash t(\calM)$ to $\calH$.
        In the next \cref{sec:BDrepresentation} we show that in the cases where $t(\calM)\neq\calV$, the link can be sometimes naturally extended to cover the whole vector space $\calV$. In order to do this, we first show that surrogates satisfying \cref{eq:optimal_g} have a very rigid structure in $t(\calM)$ in the form of a Bregman divergence representation. Then, we define $\phi$-calibrated surrogates as the ones such that the corresponding Bregman divergence representation can be extended to $\calV$.
        
        Assume for now that $t(\calM)=\calV$. The surrogate method $(S,d_{\psi,t})$ works as follows; in the learning phase, an estimator $\ghat$ is found by (regularized) empirical risk minimization on the smooth convex surrogate loss $S$, and then, given a new input element $x$, the decoding mapping $d_{\psi,t}$ computes the prediction $\fhat(x)$ from $\ghat$. Note that the computational complexity of inference can vary depending on the loss $L$ (see \cite{nowak2018sharp,ciliberto2016consistent}). See boxes below.
        \vspace{5pt}
        \begin{tcolorbox}[colframe=black!25, colback=black!2] \vspace{-5pt}
        \begin{center}
            \textbf{Learning}
        \end{center}
        \vspace{-9pt}
        \hrule
        \noindent
        \begin{itemize}
            \item[-] \textit{Given}: a functional hypothesis space $\calG\subset\{g:\calX\rightarrow\calV\}$, dataset $(x_i,y_i)_{1\leq i\leq n}$ and surrogate loss $S:\calV\times\calY\rightarrow\Rspace{}$.
            \vspace{-5pt}
            \item[-] \textit{Goal}: Minimize the expected surrogate risk $\ERS(g)$ as:
        \end{itemize}
        \vspace{-5pt}
            \begin{equation}\label{eq:learning}
                \widehat{g} = \argmin_{g \in \calG} \frac{1}{n}\sum_{i=1}^n S(g(x_i), y_i) + \lambda \|g\|^2_\calG.
            \end{equation}
        \vspace{-10pt}
        \end{tcolorbox}
        \begin{tcolorbox}[colframe=black!25, colback=black!2] \vspace{-5pt}
        \begin{center}
            \textbf{Inference}
        \end{center}
        \vspace{-7pt}
        \hrule
        \noindent
        \begin{itemize}
            \item[-] \textit{Given}: an input element $x\in\calX$, an estimator $\ghat\in\calG$, the inverse of the link function $t^{-1}:\calV\rightarrow\calH$ and an embedding $\psi:\calZ\rightarrow\calH$.
            \vspace{-5pt}
            \item[-] \textit{Goal}: Construct prediction $\fhat(x)\in\calZ$ as:
        \end{itemize}
        \vspace{-5pt}
        \begin{equation}\label{eq:inference}
            \widehat{f}(x) = d_{\psi,t}\circ\ghat(x) =  \argmin_{z \in\calZ}~\left\langle \psi(z),t^{-1}(\widehat{g}(x)) \right\rangle.
        \end{equation}
        \vspace{-10pt}
        \end{tcolorbox}
        
        \subsection{Bregman Divergence Representation}\label{sec:BDrepresentation}
        Let $\calD\subseteq\calH$ be a convex set. Recall that the Bregman divergence (BD) associated to a convex and differentiable function $h:\calD\subseteq\calH\xrightarrow{}\Rspace{}$ is defined as
        \begin{equation}\label{eq:bregmandivergence}
            D_{h}(u', u) = h(u') - h(u) - \langle u' - u, \nabla h(u)\rangle.
        \end{equation}
        We will say that a surrogate loss $S$ has a BD representation if the excess Bayes surrogate risk $\delta s(v,q)$ can be written as a BD by composition with the link function.
    \begin{definition}[BD Representation] \label[definition]{def:BDrepresentation} The surrogate loss $S$ has a $(h,t,\phi)$-BD representation in $\calV'\subset\calV$, if there exists a set $\calD\supseteq\calM$ containing the marginal polytope, a strictly convex and differentiable potential $h:\calD\subseteq\calH\rightarrow\Rspace{}$ and continuous invertible link $t:\calD\rightarrow\calV'$, such that the excess Bayes surrogate risk can be written as
    \begin{equation}
        \delta s(v,q) = D_h(\mu(q), t^{-1}(v)), \quad \forall v\in \calV'\subset\calV, \forall q\in\operatorname{Prob}(\calY).
    \end{equation}
    \end{definition}
    
    The following \cref{th:compositerepresentation} states that any surrogate loss satisfying \cref{eq:optimal_g} has a BD representation in $t(\calM)$, which justifies why we focus on these representations of losses.
    
    \begin{theorem}[BD Representation in $t(\calM)$]\label[theorem]{th:compositerepresentation}
        If the surrogate loss $S:\calV\times\calY\rightarrow\Rspace{}$ is continuous and satisfies 
        \cref{eq:optimal_g} for a continuous injective mapping $t:\calM\rightarrow\calV$, then
        it has a $(h,t,\phi)$-BD representation in $t(\calM)\subseteq\calV$.
    \end{theorem}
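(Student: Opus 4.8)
The plan is to reduce the claim to the known characterization of losses (equivalently, scoring rules) eliciting a \emph{linear} property of a distribution \cite{abernethy2012characterization, pmlr-v40-Frongillo15, agarwal2015consistent}, applied after pulling $S$ back through the link. Restrict attention to predictions $v\in t(\calM)$ and change variables: for $m\in\calM$ set $\tilde S(m,y):=S(t(m),y)$ and $\tilde s(m,q):=\Expect_{Y\sim q}\tilde S(m,Y)=s(t(m),q)$. Since $\mu(q)\in\calM$, the global minimizer $t(\mu(q))$ of $s(\cdot,q)$ over $\calV$ given by \cref{eq:optimal_g} already lies in $t(\calM)$, hence $\argmin_{m\in\calM}\tilde s(m,q)=\mu(q)$, and by the uniqueness assumed in \cref{eq:optimal_g} together with injectivity of $t$ this minimizer is unique. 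In other words, $\tilde S$ elicits the linear property $q\mapsto\mu(q)=\Expect_{Y\sim q}\phi(Y)$.

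Next, invoke the convex-potential characterization of losses eliciting a linear property: on $\calM$, any such $\tilde S$ must be of the form $\tilde S(m,y) = -h(m) - \langle \phi(y)-m,\ \nabla h(m)\rangle + a(y)$ for a strictly convex differentiable potential $h$ on $\calM$ and some function $a$ depending on $y$ alone (strict convexity is what makes $\mu(q)$ the \emph{unique} minimizer of $\tilde s(\cdot,q)$, differentiability is the smoothness counterpart discussed in \cref{rk:smoothnessconvexity}, and $a$ does not affect the $\argmin$). Take this $h$ as the potential and the given $t$ as the link. The representation then follows by a one-line computation: averaging over $Y\sim q$ and evaluating at $m=\mu(q)$ gives $\underline s(q):=\min_{v\in\calV}s(v,q)=\tilde s(\mu(q),q)=-h(\mu(q))+\Expect_{Y\sim q}a(Y)$, so that for every $v=t(m)\in t(\calM)$
\[
\delta s(v,q) \;=\; \tilde s(m,q)-\underline s(q) \;=\; h(\mu(q))-h(m)-\langle\, \mu(q)-m,\ \nabla h(m)\,\rangle \;=\; D_h\big(\mu(q),\,t^{-1}(v)\big),
\]
which is the $(h,t,\phi)$-BD representation on $\calV'=t(\calM)$, with $\calD=\calM$; continuity and invertibility of $t$ on $\calM$ are part of the hypothesis and $t^{-1}$ is automatically continuous since $\calM$ is compact. (Note that $a$ cancels, so $\delta s(v,q)$ indeed depends on $q$ only through $\mu(q)$, as it must.)

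The main obstacle is the second step --- making the linear-property-elicitation characterization applicable in this generality and in the stated form. The delicate points are: (i) $\calM$ may fail to be full-dimensional in $\calH$ (the case $r<k$), so $h$ and $\nabla h$ must be read relative to the affine hull of $\calM$, with some care at its relative boundary --- obtaining a genuinely full-dimensional domain $\calD\supsetneq\calM$ is exactly what the later notion of a $\phi$-calibrated surrogate is for; (ii) $\calM$ is compact, so the elicited value $\mu(q)$ can lie on $\partial\calM$ rather than in the relative interior, which calls for a relative-interior-plus-continuity argument; and (iii) $t$ is only assumed continuous and injective, so any smoothness used in the characterization must be drawn from the regularity of $S$ in its first argument rather than from $t$. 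Everything else is routine.
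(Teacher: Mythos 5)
Your overall route is the same as the paper's: pull $S$ back through the link to $\bar S(\mu,y)=S(t(\mu),y)$, observe that \cref{eq:optimal_g} means $\bar S$ elicits the linear property $q\mapsto\mu(q)=\Expect_{Y\sim q}\phi(Y)$, invoke the elicitation characterization to get a potential, and then the one-line cancellation giving $\delta s(v,q)=D_h(\mu(q),t^{-1}(v))$ on $t(\calM)$. That part of your computation is correct.

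The genuine gap is the differentiability of $h$. \cref{def:BDrepresentation} requires a \emph{strictly convex and differentiable} potential, but the characterization of losses eliciting a linear property \cite{abernethy2012characterization, frongillo2015vector} only delivers a strictly convex $h$ together with a \emph{selection of subgradients} $\{dh_\mu\}$, i.e.\ $\delta\bar s(\mu,q)=h(\mu(q))-h(\mu)-\langle\mu(q)-\mu,\,dh_\mu\rangle$ with $h$ possibly non-differentiable. You write the characterization directly with $\nabla h(m)$ and justify it by pointing to \cref{rk:smoothnessconvexity}, but that remark is explicitly informal ("we do not formalize any statement of that kind") and concerns smoothness of $S$ versus injectivity of $t$, not differentiability of the elicited potential; you then list this issue among "delicate points" and declare the rest routine, which leaves the key step unproved. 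The paper's proof spends its second half exactly here: it defines $A_{\mu'}(\mu)=h(\mu)-\langle\mu-\mu',dh_\mu\rangle$, notes it is continuous in $\mu$ because $t$ and $S$ are continuous (so $\delta\bar s(\cdot,q)$ is), and shows that two distinct subgradients at a point $\mu_0$ would force a jump of $A_{\mu_1}$ along a segment through $\mu_0$, a contradiction; hence $h$ is differentiable on $\calM$ (with gradients read in the tangent space of $\calM$, which also disposes of your full-dimensionality worry). Without an argument of this kind, your proposal establishes only a subgradient-Bregman representation, not the $(h,t,\phi)$-BD representation claimed in \cref{th:compositerepresentation}.
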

    The proof of \cref{th:compositerepresentation} can be found in \cref{app:BDrepresentation} and it is based on a characterization of scoring rules for linear properties of a distribution as Bregman divergences associated to strictly convex functions \cite{abernethy2012characterization, frongillo2015vector}. The differentiability of $h$ is derived from the continuity of the link $t$ and $S$.
    
    It is important to highlight the fact that the function $h$ is defined up to an additive affine term, as the BD is invariant under this transformation. Hence, we will say that $h$ and $h'$ are equivalent if and only if $h-h'$ is an affine function. Note that the function $h$ given by \cref{th:compositerepresentation} can be computed as
    \begin{equation}\label{eq:computingh}
        h(\mu(q)) = \delta s(v_0, q),
    \end{equation}
    for any $v_0\in t(\calM)$.
    Indeed, by \cref{th:compositerepresentation}, the dependence on $q$ of $\delta s(v_0, q)$ is only through the vector of moments $\mu(q)$ and $\delta s(v, q)-\delta s(v', q)$ is an affine function of $\mu(q)$, $\forall v,v'\in t(\calM)$.

    Observe that different surrogate losses can yield the same BD representation in $t(\calM)$. For instance, in binary classification, the square, squared hinge and modified Huber margin losses have the same BD representation in $[-1,1]$ \cite{zhang2004statisticalbehavior} (see \cref{app:binaryclassification}).
    
    \subsection{$\phi$-Calibrated Surrogates}
    
    Now, we define the concept of a $\phi$-calibrated loss by asking the surrogate loss satisfying \cref{eq:optimal_g} to extend (in the case that $t(\calM)\neq\calV$) its $(h,t,\phi)$-BD representation in $t(\calM)$ given by \cref{th:compositerepresentation} to $\calV$, which will allow us to define the decoding mapping $d_{\psi,t}$ to the whole vector space $\calV$.
    
    \begin{definition}[$\phi$-Calibrated Surrogates]\label[definition]{def:phicalibrated}
    Let $\phi:\calY\rightarrow\calH$.
    A smooth convex surrogate loss~$S:\calV\times\calY\rightarrow\Rspace{}$ is $\phi$-calibrated if it has a $(h,t,\phi)$-BD representation in the vector space $\calV$.
    \end{definition}
    
    There are many ways of building a continuous extension of $t$ to an invertible mapping in $\calV$ (and thus to extend $d_{\psi,t}$), however, the BD representation extension allows to prove guarantees for estimators $\ghat$ with $\operatorname{Im}(\ghat)\not\subset t(\calM)$ (see \cref{sec:theoreticalanalysis}).
    In general, it is not true that any surrogate loss satisfying \cref{eq:optimal_g} with $t(\calM)\subsetneq\calV$ has an extended BD representation in $\calV$, this is the case for squared hinge and modified Huber margin losses in binary classification (see \cref{app:binaryclassification}).
    \begin{example}[Quadratic, logistic and hinge surrogates]\label{ex:QSandlog}
    Let us provide some examples in binary classification where $\calM=\Delta_2\subset\calH=\Rspace{2}$.
    The quadratic surrogate is defined as 
    $S(v,y) = 1/2\cdot\|v-e_y\|_2^2$ with $\calV=\Rspace{2}$ and satisfies $q = \argmin_{v\in\Rspace{2}}s(v,q)$. It has $h(u)=1/2\cdot\|u\|_2^2$ and the link is $t=Id$. Note that although $t(\Delta_2)=\Delta_2\subsetneq\Rspace{2}$, the BD representation can be extended to $\Rspace{2}$, and in this case $\calD=\Rspace{2}$ and so it is $\phi$-calibrated. The logistic corresponds to
    $S(v,y) = \log(1 + e^{-yv})$ with $\calV=\Rspace{}$ and satisfies $\log(q_1/(1-q_1)) = \argmin_{v\in\Rspace{}}s(v,q)$. In this case the potential is minus the entropy $h(q)=-\operatorname{Ent}(q)$ and the link is $t(q)=\log(q_1/(1-q_1))$ with inverse $t^{-1}(v)=(1+e^{-v})^{-1}$. Note that we have $t(\Delta_2)=\Rspace{}$, so it is $\phi$-calibrated. Finally, consider the hinge margin loss $S(v,y)=\max(1-yv,0)$ with $\calV=\Rspace{}$, which satisfies 
    $\operatorname{sign}(2q_1-1) = \argmin_{v\in\Rspace{}}s(v,q)$, hence, $t$ is not injective, so $S$ is not $\phi$-calibrated.
    \end{example}
    
    Note that if a loss $S$ is $\phi$-calibrated for a statistic $\phi$, then the surrogate method $(S,d_{\psi,t})$ is Fisher consistent w.r.t $L(z,y)=\langle\psi(z),\phi(y)\rangle + c$. This implies that a $\phi$-calibrated loss can be used to consistently minimize different losses by simply changing the embedding $\psi$ at inference time. For instance, if $S$ is $\phi$-calibrated for the statistic $\phi(y)=e_y\in\Rspace{\calY}$, then it can be made consistent for any cost-sensitive matrix loss
    $L\in\Rspace{\calZ\times \calY}$ by setting $\psi(z)=L_z$, where $L_z$ is the $z$-th row of $L$. Indeed, in this case $\calM=\Delta_{\calY}$, and so one can estimate the Bayes risk of any loss with labels $\calY$.
    
    \vspace{5pt}
    \begin{tcolorbox}[colframe=black!25, colback=black!2] 
    \textbf{Summary. } The surrogate loss has two components; the potential $h:\calD\rightarrow\Rspace{}$ and the invertible link function $t:\calD\rightarrow\calV$, which compose the surrogate loss $S$. 
    In the learning phase (see \cref{eq:learning}), only the surrogate loss is needed to minimize $\ERS(g)$, while in the inference phase (see \cref{eq:inference}), one needs the inverse of the link to construct an estimate of $\mustar$, and the rest of the inference only depends on $\psi$.
    The potential function $h$ is not needed to define the surrogate method but it is the mathematical object providing the guarantees in order to relate both excess risks in 
    \cref{sec:theoreticalanalysis}. The link function also has implications in terms of learning complexity (see discussion in \cref{sec:ASGD}). See \cref{fig:diagram} in \cref{app:BDrepresentation} for an illustrative diagram.
    \end{tcolorbox}
    
    We now provide a recipe on how to check whether a surrogate loss is $\phi$-calibrated and to compute its corresponding $(h,t,\phi)$-BD representation if applicable.

    \paragraph{Computing the BD representation and checking $\phi$-calibration.} Given a statistic $\phi:\calY\rightarrow\calH$ and a surrogate $S:\calV\times\calY\rightarrow\Rspace{}$, the first thing to do is to check whether the minimizer of $s(v,q)$ satisfies \cref{eq:optimal_g} for a continuous injective $t$. If this is the case, the potential $h$ can be found up to an additive affine term by \cref{eq:computingh}. If $t(\calM)=\calV$, then $S$ is $\phi$-calibrated. Otherwise, one has to check if there exists an extension of $t$ and $h$ such that
    $\delta s(v, q) = D_h(\mu(q), t^{-1}(v))$ for all $v\in\calV,q\in\operatorname{Prob}(\calY)$. We provide numerous examples in \cref{sec:examples} and in the Appendix.
    
    We present now a special group of $\phi$-calibrated surrogates, whose potential $h$ is a function of Legendre-type and the link is the gradient of the potential $\nabla h$.
    \paragraph{$\phi$-Calibrated surrogates of Legendre-type.} 
    A function $h$ is of Legendre-type in $\calD\subseteq\calH$ if it is strictly convex in $\interior(\calD)$ and essentially smooth, which in particular requires $\lim_{u\to\partial\calD}\|\nabla h(u)\|_2=+\infty$, where $\partial\calD$ is the boundary of $\calD$.
    Given a Legendre-type function $h$ with domain $\calD\supseteq\calM$ including the marginal polytope, one can set the link function to $t=\nabla h$. We call it the \emph{canonical link}. It has the nice property that if $h$ is of Legendre-type in $\calD$, then its Fenchel conjugate $h^*$ is also of Legendre-type and its gradient is the inverse of the link function  $\nabla h^* = (\nabla h)^{-1}$. We denote the resulting loss $S:\operatorname{dom}(h^*)\times\calY\rightarrow\Rspace{}$ a \emph{surrogate loss of Legendre-type}, which is \emph{convex} and has the form:
    \begin{equation}\label{eq:legendretypeloss}
        S(v,y) = D_h(\phi(y), \nabla h^*(v)) = h^*(v) + h(\phi(y)) -
        \langle\phi(y), v\rangle,
    \end{equation}
    The excess Bayes surrogate risk can be written as a BD also in $\operatorname{dom}(h^*)$ as:
    \begin{equation}
        \delta s(v,q) = D_h(\mu(q), \nabla h^*(v)) = D_{h^*}(v, \nabla h(\mu(q))).
    \end{equation}
    Moreover, $\calD$ is bounded if and only
    $\operatorname{dom}(h^*)$ is a vector space and $h^*$ is Lipschitz. Those losses were studied by \cite{blondel2019learning} as a subset of Fenchel-Young losses, but without providing learning guarantees. The most important examples are the quadratic surrogate, where $\calD=\calH$, and CRFs, where $\calD=\calM$, both studied in detail in \cref{sec:QSandCRFs}. Further details on this construction can be found in \cref{app:canonicallink}.
    
\section{Theoretical Analysis}\label{sec:theoreticalanalysis}
    We know by construction that $\phi$-calibrated surrogate losses lead to Fisher consistent surrogate methods $(S,d_{\psi,t})$, which means that the minimizer of the surrogate risk $\ERS$ provides the minimizer of the true risk $\ERL$ as $\fstar=d\circ \gstar$. However, in practice we will never be able to minimize the surrogate risk to optimality. 
    The goal of this section is to \emph{calibrate} the excess surrogate risk to the true excess risk, i.e., quantify how much the excess surrogate risk has to be minimized so that the excess true risk is smaller than $\varepsilon$. This quantification is made by means of the \emph{calibration function} \cite{steinwart2007compare, osokin2017structured, bartlett2006convexity, duchi2010consistency}, which is the mathematical object that will allow us to relate the quantity we can directly minimize to the one that we are ultimately interested in. All the proofs from this section can be found in \cref{app:calibrationrisks}.
    \subsection{Calibrating Risks with the Calibration Function}
     The calibration function is defined as the largest function $\zeta:\Rspace{}_{+}\longrightarrow\Rspace{}_{+}$ that relates both excess Bayes risks as $\zeta(\delta\ell(d(v), q)) \leq \delta s(v, q)$, $\forall v\in\calV, \forall q\in\operatorname{Prob}(\calY)$.
    The calibration function for general losses is thus defined as follows.
    \begin{definition}[Calibration function \cite{osokin2017structured}]\label[definition]{def:calibrationfunction}
        The calibration function $\zeta:\Rspace{}_{+}\longrightarrow\Rspace{}_{+}$ is defined for~$\varepsilon\geq 0$ as the infimum of the excess Bayes surrogate risk when the excess Bayes risk is at least $\varepsilon$:
        \begin{equation}\label{eq:optimalcalibration}
            \zeta(\varepsilon) = \inf\delta s(v, q)\quad \text{such that} \quad
            \delta\ell(d(v), q)
            \geq\varepsilon,~q\in\operatorname{Prob}(\calY),~v\in\calV.
    \end{equation}
    We set $\zeta(\varepsilon)=\infty$ when the feasible set is empty.
    \end{definition}
    Note that $\zeta$ is non-decreasing on $[0, +\infty)$, not necessarily convex 
    (see Example 5 by \cite{bartlett2006convexity})
    and also $\zeta(0)=0$. Note that a larger $\zeta$ is better because we want a large $\delta s(v,q)$ to incur small $\delta\ell(d(v),q)$.
    
The calibration function $\zeta$ relates conditional risks. In order to calibrate risks $\ERS$ and $\ERL$ one needs to impose convexity so that the expectation with respect to the marginal distribution $\rho_{\calX}\in\operatorname{Prob}(\calX)$ can be moved outside of the calibration function. In \cref{th:calibrationrisks}, which can be found in \cite{osokin2017structured}, we calibrate the risks by taking a convex lower bound of $\zeta$.
\begin{theorem}[Calibration between risks \cite{osokin2017structured}]\label[theorem]{th:calibrationrisks} Let $\bar{\zeta}$ be a convex lower bound of $\zeta$.
We have 
\begin{equation}\label{eq:riskcalibration}
\bar{\zeta}(\ERL(d\circ\ghat) - \ERL(\fstar))\leq \ERS(\ghat) - \ERS(\gstar)
\end{equation}
for all $\ghat:\calX\rightarrow\calV$.
The tightest convex lower bound $\bar{\zeta}$ of $\zeta$ is its lower convex envelope which is defined by the Fenchel bi-conjugate $\zeta^{**}$ \footnote{The Fenchel bi-conjugate is characterized by $\operatorname{epi}(\zeta^{**})=\overline{\operatorname{hull}(\operatorname{epi}(\zeta}))$, where $\operatorname{epi}(\zeta)$ denotes the epigraph of the function $\zeta$ and 
$\overline{\operatorname{hull}(A)}$ is the closure of the convex hull of the set $A$.}.
\end{theorem}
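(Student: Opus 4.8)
The plan is to prove a \emph{conditional} version of the bound first --- one inequality per input $x$, i.e.\ per conditional distribution $\rho(\cdot|x)$ --- and then to push the expectation over the marginal $\rho_{\calX}$ inside $\bar\zeta$ using Jensen's inequality. Convexity of $\bar\zeta$ is used exactly at that last step, which is the whole point of passing to a convex lower bound of $\zeta$ instead of working with $\zeta$ itself.

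First I would record the pointwise calibration inequality $\zeta(\delta\ell(d(v),q))\le\delta s(v,q)$, valid for all $v\in\calV$ and $q\in\operatorname{Prob}(\calY)$; this is immediate from \cref{def:calibrationfunction}, since for fixed $(v,q)$ the pair $(v,q)$ is itself feasible for the infimum defining $\zeta(\varepsilon)$ at $\varepsilon=\delta\ell(d(v),q)$. Because $\bar\zeta\le\zeta$ pointwise on $\Rspace{}_{+}$ and $\delta\ell(d(v),q)\ge0$, the same inequality holds with $\bar\zeta$ in place of $\zeta$. Next, fixing $\ghat:\calX\to\calV$, I would instantiate this at $v=\ghat(x)$, $q=\rho(\cdot|x)$ for each $x$, take the expectation over $X\sim\rho_{\calX}$, and apply Jensen to the convex function $\bar\zeta$ to get
\begin{equation*}
\bar\zeta\Big(\Expect_{X}\,\delta\ell\big(d(\ghat(X)),\rho(\cdot|X)\big)\Big)\ \le\ \Expect_{X}\,\delta s\big(\ghat(X),\rho(\cdot|X)\big).
\end{equation*}
It then remains to identify the two sides with the excess risks. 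By the tower rule, $\Expect_{X}\,\ell(d(\ghat(X)),\rho(\cdot|X))=\ERL(d\circ\ghat)$ and $\Expect_{X}\,s(\ghat(X),\rho(\cdot|X))=\ERS(\ghat)$; and by the Bayes-optimality characterization \cref{eq:bayes_optimum} together with its surrogate analogue $\gstar(x)=\argmin_{v\in\calV}s(v,\rho(\cdot|x))$, the subtracted minima integrate to $\ERL(\fstar)$ and $\ERS(\gstar)$, so that $\Expect_{X}\,\delta\ell(d(\ghat(X)),\rho(\cdot|X))=\ERL(d\circ\ghat)-\ERL(\fstar)$ and likewise for $\delta s$. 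Substituting yields \cref{eq:riskcalibration}.

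For the final assertion I would invoke the standard convex-analysis fact that the lower convex envelope of a function minorized by an affine map coincides with its Fenchel biconjugate: since $\zeta\ge0$, the function $\zeta^{**}$ is proper, lower semicontinuous and convex, satisfies $\zeta^{**}\le\zeta$, and has epigraph $\overline{\operatorname{hull}(\operatorname{epi}\zeta)}$; hence it is an admissible choice of $\bar\zeta$, and any convex lower bound $\bar\zeta\le\zeta$ obeys $\bar\zeta=\bar\zeta^{**}\le\zeta^{**}$ by monotonicity of the biconjugate, so $\zeta^{**}$ dominates every admissible $\bar\zeta$ and is therefore the tightest.

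I do not expect a genuinely hard step here --- the core is ``Jensen plus the tower rule''. The two points that need care are: (i) the bookkeeping turning the conditional excess risks into $\ERL(d\circ\ghat)-\ERL(\fstar)$ and $\ERS(\ghat)-\ERS(\gstar)$, which presupposes measurability of $x\mapsto\delta\ell(d(\ghat(x)),\rho(\cdot|x))$ and of its surrogate counterpart so that Fubini/tower applies; and (ii) the fact that $\zeta$, hence $\bar\zeta$, may take the value $+\infty$, so the Jensen step and the envelope argument must be read with the usual extended-real conventions --- in particular $\zeta\ge0$ is what guarantees $\zeta^{**}$ is not identically $-\infty$.
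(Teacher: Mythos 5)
Your argument is correct and coincides with the paper's own proof in \cref{app:calibrationwithoutnoise}: the pointwise inequality $\zeta(\delta\ell(d(\ghat(x)),\rho(\cdot|x)))\leq \delta s(\ghat(x),\rho(\cdot|x))$ follows directly from \cref{def:calibrationfunction}, and the paper then passes to risks exactly via $\bar{\zeta}\leq\zeta$, Jensen's inequality for the convex $\bar{\zeta}$, and the tower-rule identification of the conditional excess risks with $\ERL(d\circ\ghat)-\ERL(\fstar)$ and $\ERS(\ghat)-\ERS(\gstar)$. The only difference is that you also sketch the biconjugate/envelope claim (which the paper states with a footnote and does not prove); that sketch is fine under the usual closed-convex-envelope convention, the one caveat being that $\bar{\zeta}=\bar{\zeta}^{**}$ requires $\bar{\zeta}$ to be lower semicontinuous, so ``tightest'' should be read among closed convex lower bounds.
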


Note that a surrogate method is Fisher consistent if and only if $\zeta^{**}(\varepsilon)>0$ for all $\varepsilon>0$, as this implies $\fstar=d\circ\gstar$. In the case that $\zeta^{**}\neq\zeta$, this property also translates to $\zeta$. See \cref{fig:calibrationfunctionandhamming}.
\subsection{Calibration Function for $\phi$-Calibrated Losses}
The computation of $\zeta$ (or a convex lower bound thereof) is known not to be easy and has been a central topic of study for many past works 
\cite{bartlett2006convexity, pires2013cost, osokin2017structured}.
 One of the main contributions of this work is to provide an exact formula for $\zeta$ for $\phi$-calibrated losses based on Bregman divergences between pairs of sets in $\calH$. This geometric interpretation of the calibration function will be used to compute the calibration function for existing surrogates which are widely used in practice.

First, let us define the \emph{calibration sets} $\calH_\varepsilon(z)$ for every $\varepsilon\geq 0$ and $z\in\calZ$ as
\begin{equation}
    \calH_\varepsilon(z) = \{u\in\calH~|~\langle\psi(z)-\psi(z'), u\rangle\leq \varepsilon, \forall z'\in\calZ\} \subset\calH.
\end{equation}
The points in $\calH_\varepsilon(z)$ are the ones whose Bayes risk is at least $\varepsilon$-close to have $z$ as optimal prediction. In particular, $\calH_0(z)$ is the set of points with optimal prediction $z$, which can be equivalently written as $\mustar(x)\in\calH_0(\fstar(x))$, $\forall x\in\calX$.
Note that $\calH_\varepsilon(z)$ is convex $\forall \varepsilon\geq 0, \forall z\in\calZ$. See \cref{fig:calibrationfunctionandhamming} for a visualization of the calibration sets for the Hamming loss in the context of multilabel classification (and \cref{fig:multiclass_calibration} in \cref{app:multiclassclassification} for the 0-1 loss for multiclass classification). 

\begin{theorem}[Calibration function for $\phi$-calibrated losses]\label[theorem]{th:exactcalibration}
Let $S$ be a $\phi$-calibrated surrogate with potential function $h:\calD\rightarrow\Rspace{}$ and let $L(z,y)=\langle\psi(z),\phi(y)\rangle+c$. The calibration function only depends on $S$ through $h$ and we denote it by $\zeta_h$. Moreover, it can be written as
\begin{equation}\label{eq:exactcalibration}
\zeta_h(\varepsilon) = \min_{z\in\calZ}D_h(\calH_{\varepsilon}(z)^c\cap\calM, \calH_{0}(z)\cap\calD),
\end{equation}
where the Bregman divergence between sets $A,B$ is defined as
$D_h(A,B) = \inf_{u\in A,v\in B}D_h(u, v)$.
\end{theorem}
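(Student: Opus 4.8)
The plan is to use the $(h,t,\phi)$-BD representation of $S$ to turn the variational problem defining $\zeta$ into a purely geometric ``Bregman projection between two sets'' problem in $\calH$, and then to read off the formula by splitting according to which label is decoded.

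The first step is to substitute the BD representation: since $S$ is $\phi$-calibrated, $\delta s(v,q)=D_h(\mu(q),t^{-1}(v))$ for all $v\in\calV$, $q\in\operatorname{Prob}(\calY)$. Plugging this into \cref{eq:optimalcalibration} and changing variables to $u=\mu(q)$ and $w=t^{-1}(v)$ is the key move: as $q$ runs over $\operatorname{Prob}(\calY)$ the moment $u=\mu(q)$ sweeps out all of $\calM=\hull(\phi(\calY))$, as $v$ runs over $\calV$ the point $w=t^{-1}(v)$ sweeps out all of $\calD$ because $t\colon\calD\to\calV$ is a bijection, and $q$ and $v$ are free of each other. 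By \cref{eq:decoding} the decoded label depends on $v$ only through $w$, being $\argmin_{z\in\calZ}\langle\psi(z),w\rangle$ (with a fixed tie-break), and — extending notation consistently with $\delta\ell(z,\rho(\cdot|x))=\langle\psi(z)-\psi(\fstar(x)),\mustar(x)\rangle$ — the excess Bayes risk becomes $\delta\ell(z,u)\defeq\langle\psi(z),u\rangle-\min_{z'\in\calZ}\langle\psi(z'),u\rangle=\max_{z'\in\calZ}\langle\psi(z)-\psi(z'),u\rangle$ once $u=\mu(q)$. This yields
\[
\zeta(\varepsilon)=\inf\bigl\{\,D_h(u,w)\ :\ u\in\calM,\ w\in\calD,\ \delta\ell\bigl(\argmin_{z\in\calZ}\langle\psi(z),w\rangle,\,u\bigr)\geq\varepsilon\,\bigr\}.
\]
Only $h$ and the fixed embedding $\psi$ appear, so $\zeta$ depends on $S$ only through $h$, which justifies the notation $\zeta_h$.

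The second step is to split by the decoded label. For $z\in\calZ$ let $W_z=\{w\in\calD:\argmin_{z'\in\calZ}\langle\psi(z'),w\rangle=z\}$; these blocks partition $\calD$, and on $W_z$ the constraint reads $\delta\ell(z,u)\geq\varepsilon$, which no longer couples $u$ and $w$, so the infimum factors inside each block and
\[
\zeta_h(\varepsilon)=\min_{z\in\calZ}D_h\bigl(\{u\in\calM:\delta\ell(z,u)\geq\varepsilon\},\,W_z\bigr).
\]
It then remains to identify the two sets. From the definitions, $u\notin\calH_\varepsilon(z)$ iff $\delta\ell(z,u)>\varepsilon$, and $\argmin_{z'}\langle\psi(z'),w\rangle=z$ forces $w\in\calH_0(z)$; hence $\{u\in\calM:\delta\ell(z,u)\geq\varepsilon\}$ agrees with $\calH_\varepsilon(z)^c\cap\calM$ except on the face $\{u\in\calM:\delta\ell(z,u)=\varepsilon\}$, and $W_z$ agrees with $\calH_0(z)\cap\calD$ except on the lower-dimensional tie faces it shares with the other blocks. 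Since $h$ is $C^1$ (its differentiability and that of $\nabla h$ coming with \cref{th:compositerepresentation}), $(u,w)\mapsto D_h(u,w)$ is continuous, and the infimum of a continuous function over a set equals its infimum over the closure; so replacing the genuine feasible sets by the topologically clean $\calH_\varepsilon(z)^c\cap\calM$ and $\calH_0(z)\cap\calD$ leaves the value unchanged, which is \cref{eq:exactcalibration}.

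This skeleton is routine; the part that needs care — and which I expect to be the only real obstacle — is the last cleanup, i.e.\ showing the discarded boundary pieces do not lower the infimum. On the $w$-side one needs $W_z$ dense in $\calH_0(z)\cap\calD$, so that a $w$ on a tie face of $\calD$ is a limit of points of $W_z$; this follows by nudging $w$ in a direction that strictly resolves the ties in favour of $z$ while staying in $\calD$, and is automatic for a fixed total-order tie-break when $\calD$ is closed. On the $u$-side one needs every $u\in\calM$ with $\delta\ell(z,u)=\varepsilon$ to be a limit of points of $\calM$ with $\delta\ell(z,\cdot)>\varepsilon$; this holds precisely on the range $\varepsilon<\max_{u\in\calM}\delta\ell(z,u)$, i.e.\ wherever $\calH_\varepsilon(z)^c\cap\calM\neq\emptyset$. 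At the extreme $\varepsilon$ (and above) a block may contribute $+\infty$ to the right-hand side of \cref{eq:exactcalibration} while the true $\zeta$ equals the left-limit value there, but since this alters at most the single cap value of an otherwise nondecreasing function it does not affect the convex lower bound $\zeta^{**}$ actually used in \cref{th:calibrationrisks}.
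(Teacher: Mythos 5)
Your proposal is correct and follows essentially the same route as the paper: first use the BD representation and the change of variables $u'=\mu(q)$, $u=t^{-1}(v)$ to rewrite $\zeta$ as $\inf_{u'\in\calM,\,u\in\calD}D_h(u',u)$ under the excess-risk constraint (the paper's \cref{lem:calibrationfunctionform}), then split according to the decoded label and identify the constraint sets with $\calH_\varepsilon(z)^c\cap\calM$ and $\calH_0(z)\cap\calD$. Your extra closure/density discussion addresses the strict-versus-nonstrict inequality and tie-breaking boundary cases that the paper's proof silently passes over (via its ``$z=z(u)\iff u\in\calH_0(z)$'' identification), so it is a more careful rendering of the same argument rather than a different one.
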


Note that the Bregman divergence inside the minimum in \cref{eq:exactcalibration} does not lead to a convex minimization problem since $\calH_{\varepsilon}(z)^c\cap\calM$ is not convex and $D_h(u,v)$ is in general not jointly convex in $(u,v)$, with notable exceptions such as the KL-divergence and squared distance \cite{bauschke2001joint}. In general, the exact computation of $\zeta_h$ using \cref{th:exactcalibration} can still be hard to perform, for instance, when the embeddings $\psi$ are not simple to work with or the problem lacks symmetries. In the 
following \cref{th:lowerboundcalibration} we provide a user-friendly lower bound when the potential $h$ is strongly convex.
Recall that a function $h$ is $({1}/{\beta_{\|\cdot\|}})$-strongly convex w.r.t a norm $\|\cdot\|$ in $\calD$ if it satisfies
$h(u) \geq h(v) + \langle u-v, \nabla h(v)\rangle + \frac{1}{2\beta_{\|\cdot\|}}\|u-v\|^2$, $\forall u,v\in\calD$.

\begin{theorem}[User-friendly lower bound on $\zeta_h$]\label[theorem]{th:lowerboundcalibration}
Let $\zeta_{h}(\varepsilon)$ be the calibration function given by \cref{eq:exactcalibration}. If $h$ is $({1}/{\beta_{\|\cdot\|}})$-strongly convex w.r.t a norm $\|\cdot\|$ in $\calD$, then:
\begin{equation}
    \zeta_h(\varepsilon) \geq \ \frac{\varepsilon^2} {8c_{\psi,\|\cdot\|_{*}}^2\beta_{\|\cdot\|}}, 
\end{equation}
where $c_{\psi,\|\cdot\|_{*}} = \sup_{z\in\calZ}\|\psi(z)\|_{*}$ and $\|\cdot\|_{*}$ is the dual norm of $\|\cdot\|$.
\end{theorem}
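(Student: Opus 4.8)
The plan is to combine the exact formula of \cref{th:exactcalibration} with two elementary estimates: first use strong convexity to replace the Bregman divergence between sets by a squared norm-distance, and then bound that norm-distance using only the affine structure of the calibration sets together with Hölder's inequality. For the first step, the $(1/\beta_{\|\cdot\|})$-strong convexity of $h$ on $\calD$ gives $D_h(u,v)\ge \tfrac{1}{2\beta_{\|\cdot\|}}\|u-v\|^2$ for all $u,v\in\calD$; taking the infimum over $u\in A$ and $v\in B$ and using that $t\mapsto t^2$ is nondecreasing on $[0,\infty)$ yields
\[
D_h(A,B)\ \ge\ \frac{1}{2\beta_{\|\cdot\|}}\,\dist(A,B)^2,\qquad \dist(A,B):=\inf_{u\in A,\ v\in B}\|u-v\|,
\]
for any $A,B\subseteq\calD$. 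Since $S$ is $\phi$-calibrated we have $\calM\subseteq\calD$, hence $\calH_\varepsilon(z)^c\cap\calM\subseteq\calD$ and $\calH_0(z)\cap\calD\subseteq\calD$, so this applies to each pair in \cref{eq:exactcalibration}. (If one of these sets is empty the corresponding Bregman divergence is $+\infty$ and the inequality is vacuous, so I may assume both are nonempty.)

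For the second step I would fix $z\in\calZ$ and take arbitrary $u\in\calH_\varepsilon(z)^c\cap\calM$ and $v\in\calH_0(z)\cap\calD$. Unfolding definitions, $u\notin\calH_\varepsilon(z)$ means there is some $z'\in\calZ$ with $\langle\psi(z)-\psi(z'),u\rangle>\varepsilon$, while $v\in\calH_0(z)$ gives in particular $\langle\psi(z)-\psi(z'),v\rangle\le 0$. Subtracting, $\langle\psi(z)-\psi(z'),u-v\rangle>\varepsilon$, and Hölder's inequality for the dual pair $(\|\cdot\|,\|\cdot\|_{*})$ combined with $\|\psi(z)-\psi(z')\|_{*}\le\|\psi(z)\|_{*}+\|\psi(z')\|_{*}\le 2c_{\psi,\|\cdot\|_{*}}$ gives $\varepsilon<2c_{\psi,\|\cdot\|_{*}}\,\|u-v\|$, i.e. $\|u-v\|>\varepsilon/(2c_{\psi,\|\cdot\|_{*}})$. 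Taking the infimum over admissible $u,v$ shows $\dist(\calH_\varepsilon(z)^c\cap\calM,\calH_0(z)\cap\calD)\ge \varepsilon/(2c_{\psi,\|\cdot\|_{*}})$.

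Plugging the distance bound into the strong-convexity estimate gives $D_h(\calH_\varepsilon(z)^c\cap\calM,\calH_0(z)\cap\calD)\ge \varepsilon^2/(8c_{\psi,\|\cdot\|_{*}}^2\beta_{\|\cdot\|})$ for every $z\in\calZ$, and taking the minimum over $z$ in \cref{eq:exactcalibration} yields the claimed lower bound on $\zeta_h(\varepsilon)$. I do not expect a genuine obstacle here: once \cref{th:exactcalibration} is in hand the argument is short and the only points requiring care are the bookkeeping for empty calibration sets and checking that the strong-convexity inequality is applied legitimately on all of $\calD$ (which it is, since every set involved is contained in $\calD$). The factor $2$ multiplying $c_{\psi,\|\cdot\|_{*}}$ comes solely from the triangle inequality on $\psi(z)-\psi(z')$ and is presumably not worth optimizing for a deliberately \emph{user-friendly} estimate.
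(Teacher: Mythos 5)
Your argument is correct, and it reaches the stated constant by a slightly different route than the paper. The paper works from the constrained-minimization form of the calibration function (\cref{lem:calibrationfunctionform}) and invokes a generic bound on the excess Bayes risk, $\langle\psi(z(u))-\psi(z(u')),u'\rangle\leq 2c_{\psi,\|\cdot\|_*}\|u-u'\|$ (\cref{lem:boundexcessrisk}), whose factor $2$ comes from splitting the excess risk into two terms and bounding the second with the inequality $|\min_z\eta(z)-\min_{z'}\eta'(z')|\leq\sup_z|\eta(z)-\eta'(z)|$; strong convexity then converts this into the quadratic bound. You instead stay with the set formula of \cref{th:exactcalibration} and separate the two calibration sets directly: the witness $z'$ for $u\notin\calH_\varepsilon(z)$ and the defining inequality of $\calH_0(z)$ give $\langle\psi(z)-\psi(z'),u-v\rangle>\varepsilon$, so H\"older plus the triangle inequality yield $\dist(\calH_\varepsilon(z)^c\cap\calM,\calH_0(z)\cap\calD)\geq\varepsilon/(2c_{\psi,\|\cdot\|_*})$, and your factor $2$ comes from $\|\psi(z)-\psi(z')\|_*\leq 2c_{\psi,\|\cdot\|_*}$. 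Both proofs use the same strong-convexity step and the same dual-norm pairing, so the difference is in the key lemma rather than the overall strategy; your route has the pleasant side effect that, before you apply the triangle inequality, it already gives the sharper bound $\zeta_h(\varepsilon)\geq\varepsilon^2/\bigl(2\beta_{\|\cdot\|}\max_{z\neq z'}\|\psi(z)-\psi(z')\|_*^2\bigr)$, which for the Euclidean norm is exactly the paper's improved bound (\cref{th:improvedlowerbound}, proved there via \cref{prop:l2distancecalibration}) and extends it verbatim to arbitrary norms, whereas the paper's excess-risk lemma is stated so that it can be reused elsewhere independently of the set geometry. Your handling of the side conditions (both sets lie in $\calD$ since $\calM\subseteq\calD$; empty sets give an infimum of $+\infty$; strict versus non-strict inequalities disappear after taking infima) is also sound.
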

The proof is provided in \cref{app:lowerboundcalibration}, together with \cref{th:improvedlowerbound}, that gives a tighter bound in the case of strong convexity w.r.t the Euclidean norm.
Finally, the following \cref{th:upperboundcalibration}, states that $\zeta_h$ can never be larger than a quadratic for $\phi$-calibrated surrogates.
\begin{theorem}[Existence of quadratic upper bound]\label[theorem]{th:upperboundcalibration}
Assume $h$ is twice differentiable. Then, the calibration function $\zeta_h$ is upper bounded by a quadratic close to the origin, i.e., $\zeta_h(\varepsilon) = O(\varepsilon^2)$.
\end{theorem}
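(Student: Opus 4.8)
The plan is to upper‑bound the exact expression for $\zeta_h$ from \cref{th:exactcalibration}, namely $\zeta_h(\varepsilon)=\min_{z\in\calZ}D_h(\calH_\varepsilon(z)^c\cap\calM,\calH_0(z)\cap\calD)$, by estimating one of the set‑Bregman‑divergences along a single short chord that crosses a decision boundary. Since $D_h(A,B)=\inf_{u\in A,v\in B}D_h(u,v)$, it suffices to fix one prediction $z^\ast\in\calZ$, one point $v_0\in\calH_0(z^\ast)\cap\calD$ that does \emph{not} depend on $\varepsilon$, and a family $u_\varepsilon\in\calH_\varepsilon(z^\ast)^c\cap\calM$ with $u_\varepsilon\to v_0$ and $\|u_\varepsilon-v_0\|=\Theta(\varepsilon)$. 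Then $\zeta_h(\varepsilon)\le D_h(u_\varepsilon,v_0)$, and since $v_0$ is held fixed the second‑order Taylor expansion of $h$ at $v_0$ (this is precisely where the twice‑differentiability hypothesis enters) gives $D_h(u_\varepsilon,v_0)=\tfrac12\langle u_\varepsilon-v_0,\nabla^2 h(v_0)(u_\varepsilon-v_0)\rangle+o(\|u_\varepsilon-v_0\|^2)\le\tfrac12\|\nabla^2 h(v_0)\|_{\mathrm{op}}\,\|u_\varepsilon-v_0\|^2+o(\|u_\varepsilon-v_0\|^2)=O(\varepsilon^2)$, which is exactly the claim.

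The substantive part is the geometric construction of $z^\ast,v_0,u_\varepsilon$, which I would carry out as follows. First I dispose of the degenerate situation: if every prediction that is Bayes‑optimal somewhere on $\calM$ is in fact Bayes‑optimal on all of $\calM$, the loss distinguishes nothing and $\zeta_h\equiv+\infty$ on $(0,\infty)$, so there is nothing to prove; hence I may assume there is $z^\ast$ with $\emptyset\neq\calH_0(z^\ast)\cap\calM\neq\calM$. Pick $v_0$ on the relative boundary of the closed convex set $\calH_0(z^\ast)\cap\calM$ inside the (connected) polytope $\calM$; then $v_0\in\calH_0(z^\ast)$, so $z^\ast$ is optimal at $v_0$, and there is a sequence $p_j\to v_0$ with $p_j\in\calM\setminus\calH_0(z^\ast)$. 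Each $p_j$ admits a witness $z$ with $\langle\psi(z^\ast)-\psi(z),p_j\rangle>0$ (automatically $z\neq z^\ast$); since $\calZ$ is finite I may pass to a subsequence and fix the witness to a single $z_1$, and letting $j\to\infty$ together with $\langle\psi(z^\ast)-\psi(z_1),v_0\rangle\le 0$ (optimality of $z^\ast$ at $v_0$) forces $\langle\psi(z^\ast)-\psi(z_1),v_0\rangle=0$. Now fix $u_1:=p_j$ for some large $j$ and $\gamma:=\langle\psi(z^\ast)-\psi(z_1),u_1\rangle>0$, and for $0<\varepsilon\le\gamma/2$ set $u_\varepsilon:=v_0+\tfrac{2\varepsilon}{\gamma}(u_1-v_0)$. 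Convexity of $\calM$ gives $u_\varepsilon\in\calM$; the identity $\langle\psi(z^\ast)-\psi(z_1),u_\varepsilon\rangle=2\varepsilon>\varepsilon$ gives $u_\varepsilon\in\calH_\varepsilon(z^\ast)^c$; and $\|u_\varepsilon-v_0\|=\tfrac{2\varepsilon}{\gamma}\|u_1-v_0\|=\Theta(\varepsilon)$ with $u_\varepsilon\to v_0$. Feeding this into the Taylor bound of the previous paragraph completes the argument; note also that the two sets appearing in $D_h(\cdot,\cdot)$ for this particular $z^\ast$ are then nonempty, so the bound is not vacuous.

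The main obstacle is a technical one: ensuring $h$ is twice differentiable \emph{at the particular point $v_0$} produced above. When $\calD\supsetneq\calM$ (e.g.\ the quadratic surrogate, where $\calD=\calH$) this is automatic. When $\calD=\calM$ (e.g.\ CRFs) and the decision boundary meets $\relinterior(\calM)$ one simply chooses $v_0\in\relinterior(\calM)$; only in the borderline case where this boundary lies entirely on $\partial\calM$ does one need to read the twice‑differentiability hypothesis one‑sidedly along the chord $[v_0,u_1]$, which is harmless. Apart from this, the proof is just the exact calibration formula plus a Taylor expansion, so I expect no real difficulty.
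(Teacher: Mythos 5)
Your construction is essentially the paper's: the paper also proves this by fixing one $z$, a base point $u_0\in\calH_0(z)$, and a short segment of points $u_\varepsilon\in\calH_\varepsilon(z)^c\cap\calM$ with $\|u_\varepsilon-u_0\|_2\leq C\varepsilon$ (its \cref{lem:segment}), and then bounds $\zeta_h(\varepsilon)\leq D_h(u_\varepsilon,u_0)=O(\varepsilon^2)$; your witness-extraction and explicit parametrization of the chord is, if anything, more detailed than the paper's lemma. The only real difference is where second-order regularity enters: you Taylor-expand $h$ at the single point $v_0$, while the paper places the \emph{whole} segment, base point included, inside $\relinterior(\calM)$ and uses the uniform bound $\sup_{u\in I}\|\nabla^2h(u)\|_2<\infty$ on that compact segment. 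Both give the quadratic bound once $v_0$ lies where $h$ genuinely is twice differentiable.

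The weakness of your write-up is exactly the two cases you dismiss at the end, and they are the point the paper is explicitly careful about. First, the claim that the degenerate situation forces $\zeta_h\equiv+\infty$ is false when $\calD\supsetneq\calM$: the decoding can output a $z$ that is optimal only at points of $\calD\setminus\calM$ and nowhere on $\calM$, in which case the feasible set in \cref{def:calibrationfunction} is nonempty and $\zeta_h$ can be finite and bounded away from zero near the origin (a two-output loss in which one prediction strictly dominates on $\calM$, with the quadratic surrogate, gives $\zeta_h(\varepsilon)$ constant for small $\varepsilon$), so the conclusion $O(\varepsilon^2)$ itself fails there; the theorem carries an implicit nondegeneracy assumption, which the paper also makes silently when \cref{lem:segment} simply takes $u_0\in\partial\calH_0(z)\cap\relinterior(\calM)$. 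Second, reading twice-differentiability ``one-sidedly along the chord'' is not harmless in the very situation that motivates the relative-interior precaution: if $\calD=\calM$ and $h$ is of Legendre type (CRFs), then at a boundary point $v_0\in\partial\calM$ the gradient $\nabla h(v_0)$ does not exist and $D_h(u_\varepsilon,v_0)=+\infty$ (e.g.\ a KL divergence to a point with smaller support), so your upper bound is vacuous; the same caveat applies whenever $\partial\calD$ touches $\calM$ (one-vs-all with exponential or logistic classifiers has $\calD=[0,1]^k$ meeting $\Delta_k$ on its boundary), so ``automatic when $\calD\supsetneq\calM$'' is also too quick. In short: same approach and sound in the main case where the decision boundary meets $\relinterior(\calM)$ and $v_0$ is chosen there, but the closing paragraph glosses over the boundary issue that the paper's proof is structured around, and the residual case of a decision boundary contained in $\partial\calM$ is handled by neither you nor the paper.
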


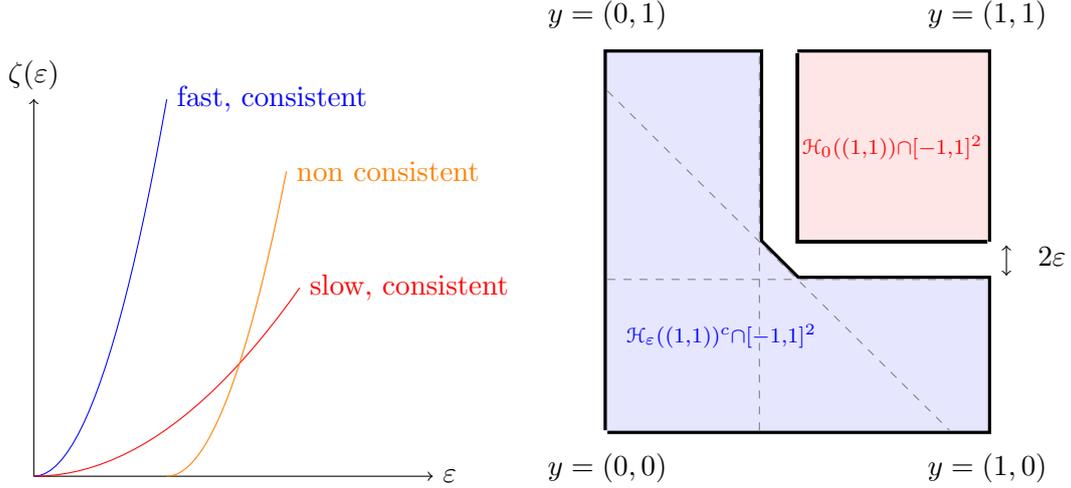
\begin{figure}[ht!]
    \centering
    \begin{tikzpicture}[domain=0:1, xscale=3.5, yscale=5]
    \draw[->] (0,0) -- (1.5,0) node[right] {$\varepsilon$};
    \draw[->] (0,0) -- (0,1) node[above] {$\zeta(\varepsilon)$};
    \draw[color=blue, domain=0:0.5] plot (\x, {4*\x^2}) 
        node[right] {fast, consistent};
    \draw[color=red, domain=0:1] plot (\x, {0.5*\x^2}) 
        node[right] {slow, consistent};
    \draw[color=orange, domain=0.5:0.95] plot (\x, {4*(\x-0.5)^2}) 
        node[right] {non consistent};
\end{tikzpicture}
    \begin{tikzpicture}[domain=0:1, xscale=5, yscale=5]
    \draw[line width=1mm, color=black] (1,0.5) -- (1, 1) -- (0.5, 1);
    \draw[fill, color=red!10] (1, 0.5) -- (0.5, 0.5) -- (0.5, 1)
    -- (1,1) -- (1, 0.5);
    \draw[line width=0.5mm, color=black] (1, 0.5) -- (0.5, 0.5) -- (0.5, 1);
    
    \draw[line width=1mm, color=black] (0,0) -- (1, 0) -- (1, 0.4) -- (0.5, 0.4) -- (0.4,0.5) -- (0.4, 1)
    -- (0,1) -- (0, 0);
    \draw[fill, color=blue!10] (0,0) -- (1, 0) -- (1, 0.4) -- (0.5, 0.4) -- (0.4, 0.5) -- (0.4, 1)
    -- (0,1) -- (0, 0);
    \node[color=red] at (0.75, 0.75) {${\scriptstyle \calH_0((1,1))\cap[-1,1]^2}$};
    \node[color=blue] at (0.3, 0.25) {${\scriptstyle \calH_{\varepsilon}((1,1))^c\cap[-1,1]^2}$};
    \draw[dashed, color=black!50] (0.4,0) -- (0.4,1); 
    \draw[dashed, color=black!50] (0, 0.4) -- (1,0.4); 
    \draw[dashed, color=black!50] (0, 0.9) -- (0.9, 0); 
    \draw[<->] (1.05, 0.41) -- +  (90:0.08);
    \node at (1.17, 0.46) {$2\varepsilon$};
    \node at (0, -0.1) {$y=(0,0)$};
    \node at (1, -0.1) {$y=(1,0)$};
    \node at (1, 1.1) {$y=(1,1)$};
    \node at (0, 1.1) {$y=(0,1)$};
\end{tikzpicture}
    \caption{\textbf{Left:} Both red and blue curves correspond to calibration functions of Fisher consistent surrogate methods as $\zeta(\varepsilon)>0$ for all $\varepsilon>0$, which is not the case for the orange curve. However, the blue curve has better guarantees because the same surrogate excess risk leads to smaller excess true risk. \textbf{Right:} Illustration of the sets $\calH_0(z)\cap\calM$ and $\calH_{\varepsilon}(z)^c\cap\calM$ for the Hamming loss with $k=2$ labels and $z=(1,1)$. In this case, by symmetry, the calibration function is computed as the Bregman divergence between these two sets.}
    \label{fig:calibrationfunctionandhamming}
\end{figure}

\subsection{Improved Calibration under Low Noise Assumption}
The result of \cref{th:calibrationrisks} can be further improved under low noise assumptions on the marginal distribution $\rho_{\calX}$. Following the definition from classification \cite{bartlett2006convexity, mroueh2012multiclass, zhang2004statistical}, we define the \emph{margin function} $\gamma:\calX\rightarrow\Rspace{}$ as 
$\gamma(x) = \min_{z'\neq \fstar(x)} \delta\ell(z', \rho(\cdot|x))$.
We say that the \emph{$p$-noise condition} is satisfied if
\begin{equation}\label{eq:tsybakov}
\rho_{\calX}(\gamma(X)\leq\varepsilon)=o(\varepsilon^p).
\end{equation} 
A simple computation shows that \cref{eq:tsybakov} holds if and only if $\|1/\gamma\|_{L_p(\rho_{\calX})}=\gamma_p<\infty$ \cite{steinwart2011estimating}.
\begin{theorem}[Calibration of Risks under low noise and hard margin assumption]
\label[theorem]{th:calibrationriskslownoise}
Let $\bar{\zeta}$ be a convex lower bound of $\zeta$.
\begin{itemize}
    \item[(1)] If the $p$-noise condition \eqref{eq:tsybakov} is satisfied, we have that $\bar{\zeta}^{(p)}$ defined as
    \vspace{-4pt}
        \begin{equation}\label{eq:improved_calibration}
            \bar{\zeta}^{(p)}(\varepsilon) = 
            (\gamma_p\varepsilon^p)^{\frac{1}{p+1}} \bar{\zeta}
            ((\gamma_p^{-1}\varepsilon)^{\frac{1}{p+1}}/2),
        \vspace{-4pt}\end{equation}
        satisfies \cref{eq:riskcalibration} where $\|1/\gamma\|_{L_p(\rho_{\calX})}=\gamma_p$. Moreover, we have that
        $\bar{\zeta}^{(p)}\gtrsim\bar{\zeta}$. Hence, $\bar{\zeta}^{(p)}$ never provides a worse rate than $\bar{\zeta}$.
    \item[(2)] If $\gamma(x)\geq\delta>0$ $\rho_\calX$-a.s. Then, 
    $\delta v(\ghat(x),\rho(\cdot|x))<\zeta(\delta), 
        ~\rho_\calX\text{-a.s.}\implies \ERL(d\circ\ghat) = \ERL(\fstar)$.
\end{itemize}
\end{theorem}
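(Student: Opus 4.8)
The plan is to reduce both parts to a pointwise statement about the conditional excess risks and then integrate over $\rho_\calX$. Write $a(x)=\delta\ell(d(\ghat(x)),\rho(\cdot|x))$ and $b(x)=\delta s(\ghat(x),\rho(\cdot|x))$, so that $\ERL(d\circ\ghat)-\ERL(\fstar)=\Expect_X a(X)$ and $\ERS(\ghat)-\ERS(\gstar)=\Expect_X b(X)$, and record the single structural fact used everywhere, the \emph{margin gap}: for every $x$ one has $a(x)\in\{0\}\cup[\gamma(x),\infty)$, since if $a(x)>0$ then $d(\ghat(x))$ is a non-optimal label, hence differs from $\fstar(x)$, and therefore $a(x)\ge\gamma(x)$ by the definition of $\gamma$.

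Part (2) follows immediately. Under the hard-margin assumption $\gamma\ge\delta$ $\rho_\calX$-a.s., suppose $a(x)>0$ on a set of positive $\rho_\calX$-measure; on that set $a(x)\ge\gamma(x)\ge\delta$, so by monotonicity of $\zeta$ together with the defining inequality $\zeta(\delta\ell(d(v),q))\le\delta s(v,q)$ of \cref{def:calibrationfunction} we obtain $\zeta(\delta)\le\zeta(a(x))\le b(x)$, contradicting $b(x)<\zeta(\delta)$. Hence $a=0$ $\rho_\calX$-a.s.\ and $\ERL(d\circ\ghat)=\ERL(\fstar)$.

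For Part (1) we work with the convex lower bound $\bar\zeta$; replacing it by $\max(\bar\zeta,0)$ keeps it a convex lower bound of $\zeta$ and only enlarges the candidate $\bar\zeta^{(p)}$ (hence it suffices to prove the claim in this case), so we may assume $\bar\zeta\ge0$, $\bar\zeta(0)=0$, and — being convex with minimum $0$ at the origin — non-decreasing. Two elementary consequences: $t\mapsto\bar\zeta(t)/t$ is non-decreasing, and $\bar\zeta(a(x))\le\zeta(a(x))\le b(x)$. Fix a threshold $\tau>0$ with $\bar\zeta(\tau)>0$ and split $\Expect_X a(X)=\Expect_X[a(X)\mathbf{1}\{a(X)\ge\tau\}]+\Expect_X[a(X)\mathbf{1}\{0<a(X)<\tau\}]$. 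On $\{a\ge\tau\}$, monotonicity of $\bar\zeta(\cdot)/(\cdot)$ gives $a\le(\tau/\bar\zeta(\tau))\bar\zeta(a)\le(\tau/\bar\zeta(\tau))b$, so the first term is at most $(\tau/\bar\zeta(\tau))\Expect_X b(X)$. On $\{0<a<\tau\}$ the margin gap forces $\gamma(x)\le a(x)<\tau$, so the second term is at most $\tau\,\rho_\calX(\gamma(X)<\tau)$, which by Markov's inequality applied to $1/\gamma\in L_p(\rho_\calX)$ is at most $\gamma_p^p\tau^{p+1}$. Combining,
\[
\frac{\bar\zeta(\tau)}{\tau}\Bigl(\Expect_X a(X)-\gamma_p^p\tau^{p+1}\Bigr)\le\Expect_X b(X)\qquad\text{for every }\tau>0 .
\]
Choosing $\tau$ to balance the two contributions — $\gamma_p^p\tau^{p+1}$ a fixed fraction of $\Expect_X a(X)$, hence $\tau$ proportional to $(\Expect_X a(X)/\gamma_p^p)^{1/(p+1)}$ — and substituting back turns this into the calibration inequality \eqref{eq:riskcalibration} for $\bar\zeta^{(p)}$ as in \eqref{eq:improved_calibration}, the factor $1/2$ in the argument of $\bar\zeta$ being the fraction conceded in the balancing.

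It remains to see $\bar\zeta^{(p)}\gtrsim\bar\zeta$. Because $1/(p+1)<1$, for $\varepsilon$ below an explicit threshold the argument $s=\tfrac12(\gamma_p^{-1}\varepsilon)^{1/(p+1)}$ appearing inside $\bar\zeta^{(p)}$ satisfies $s\ge\varepsilon$, so $\bar\zeta(s)\ge(s/\varepsilon)\bar\zeta(\varepsilon)$ by the same convexity fact; substituting this in \eqref{eq:improved_calibration} the powers of $\varepsilon$ and $\gamma_p$ cancel and one is left with $\bar\zeta^{(p)}(\varepsilon)\ge\tfrac12\bar\zeta(\varepsilon)$, while $\bar\zeta(\Expect_X a)\le\Expect_X b$ holds unconditionally by \cref{th:calibrationrisks}; hence $\bar\zeta^{(p)}$ never yields a worse rate. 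The main obstacle is the threshold optimization in Part (1): carrying it out for a \emph{general} convex lower bound $\bar\zeta$ — which may vanish on a neighbourhood of $0$, so that the split is informative only for $\tau$ large enough that $\bar\zeta(\tau)>0$ and the conclusion is trivial below that range — while landing exactly on the closed form \eqref{eq:improved_calibration} with the stated constants.
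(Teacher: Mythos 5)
Your argument is correct in substance, and for Part (1) it takes a slightly different route from the paper. The paper's proof uses the same low/high-noise split at a threshold and the same monotonicity of $\bar{\zeta}(t)/t$ (\cref{lem:lemma_comparison_ineq}) for the high-noise region, but it controls the low-noise region through the generalized Tsybakov lemma (\cref{lem:tsybakov_lemma}): $1(f(x)\neq\fstar(x))\leq \delta\ell(f(x),\rho(\cdot|x))/\gamma(x)$ plus H\"older gives $\rho_\calX(X_f)\lesssim \gamma_p^{\cdot}\,(\ERL(f)-\ERL(\fstar))^{p/(p+1)}$, so the low-noise term is bounded by $t\,\rho_\calX(X_f)$ and depends on the excess risk itself. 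You instead bound the same region by $\tau\,\rho_\calX(\gamma<\tau)$ and invoke Markov on $(1/\gamma)^p$, which gives the pure power $\Expect[(1/\gamma)^p]\,\tau^{p+1}$ with no dependence on $\Expect a$. Both are valid; your version is more elementary (no H\"older, no separate error-set lemma), while the paper's yields the bound on $\rho_\calX(X_f)$ as a reusable byproduct. Your reduction to $\max(\bar{\zeta},0)$ is a genuinely useful touch: the paper's proof silently uses nonnegativity of $\bar{\zeta}$ when bounding the high-noise term, which an arbitrary convex lower bound need not satisfy. Part (2) and the argument that $\bar{\zeta}^{(p)}\gtrsim\bar{\zeta}$ (powers of $\varepsilon$ and $\gamma_p$ cancelling to give $\bar{\zeta}^{(p)}\geq\tfrac12\bar{\zeta}$ near the origin) coincide with the paper's, the latter in a cleaner packaging.

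One point you should not leave as an assertion: the final substitution. With your balancing ($\gamma_p^p\tau^{p+1}=\tfrac12\Expect_X a(X)$) the inequality you reach is
\begin{equation*}
2^{-\frac{p}{p+1}}\bigl(\gamma_p^{p}\,\varepsilon^{p}\bigr)^{\frac{1}{p+1}}\,\bar{\zeta}\Bigl(2^{-\frac{1}{p+1}}\bigl(\gamma_p^{-p}\varepsilon\bigr)^{\frac{1}{p+1}}\Bigr)\;\leq\;\ERS(\ghat)-\ERS(\gstar),
\end{equation*}
which has $\Expect[(1/\gamma)^p]=\gamma_p^{p}$ where \cref{eq:improved_calibration} displays $\gamma_p$, and an extra factor $2^{-p/(p+1)}\in[\tfrac12,1)$; so it does not land literally on the printed constants. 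This is not a real defect of your method: a careful execution of the paper's own H\"older step with $\gamma_p=\|1/\gamma\|_{L_p(\rho_\calX)}$ produces the same $\gamma_p^{p/(p+1)}$ power, so the discrepancy traces back to the paper's conflation of $\gamma_p$ with $\gamma_p^p$, and the statement is in any case only meaningful up to such constants. Still, you should either carry out the substitution and state the bound you actually obtain, or choose the threshold as in the paper ($\tau$ proportional to $(\gamma_p^{-p}\varepsilon)^{1/(p+1)}/2$, for which the discarded low-noise term is at most $2^{-(p+1)}\varepsilon$) so that the resulting expression matches the target form directly rather than ``up to the fraction conceded in the balancing.''
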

The first part of \cref{th:calibrationriskslownoise} is a generalization of Thm. 10 by \cite{bartlett2006convexity} to general discrete losses.
Note that combining \cref{eq:improved_calibration} with the lower bound given by \cref{th:lowerboundcalibration} gives $\bar{\zeta}^{(p)}\gtrsim\varepsilon^{(\frac{p+2}{p+1})}$. Indeed, $p=0$ corresponds to no assumption at all and so $\zeta^{(0)}$ stays quadratic, while $p\to\infty$ corresponds to having less and less noise at the boundary decision and $\zeta^{(p)}$ tends to be linear. Note that $p=\infty$ corresponds to having $\delta>0$ such that $\gamma(x)\geq\delta>0$ $\rho_{\calX}$-a.s, and so from the second part of \cref{th:calibrationriskslownoise}, one obtains zero excess risk if the excess surrogate Bayes risk is smaller than $\zeta(\delta)$ almost surely. This fact has been used in binary classification together with high probability bounds on the estimator to obtain exponential rates of convergence for the risk $\ERL$ \cite{audibert2007fast, koltchinskii2005exponential, pillaud2017exponential}, and our result could be used in the same way for the structured case. Finally, note that \cref{th:calibrationrisks} and \cref{th:calibrationriskslownoise} are not specific to $\phi$-calibrated surrogates and apply to any surrogate method. 

\subsection{Minimizing the Surrogate Loss with Averaged
Stochastic Gradient Descent (ASGD)}\label{sec:ASGD}
In this section, for simplicity, we assume $S:\calV\times\calY\rightarrow\Rspace{}$ is a loss of Legendre-type (see \cref{eq:legendretypeloss}) with associated Legendre-type potential $h:\calD\subseteq\calH\rightarrow\Rspace{}$ and $\calV=\operatorname{dom}(h^*)=\Rspace{k'}$.
Following \cite{osokin2017structured}, we provide a statistical analysis of the minimization of the expected risk of $S$ using online projected averaged stochastic gradient descent (ASGD) \cite{nemirovski2009robust} on a reproducing kernel Hilbert space (RKHS) \footnote{Recall that a scalar RKHS $\bar{\calG}$ is a Hilbert space of functions from $\calX$ to $\Rspace{}$ with an associated kernel $k:\calX\times\calX\rightarrow\Rspace{}$ such that $k(x,\cdot)\in\bar{\calG}$ for all $x\in\calX$ and $g(x)=\langle g,k(x,\cdot)\rangle_{\bar{\calG}}$ for all $g\in\bar{\calG}$.} \cite{aronszajn1950theory} $\calG=(\bar{\calG})^{\otimes k'}=\calV\otimes\bar{\calG}$, where $\bar{\calG}$ is a scalar RKHS. This will give us insight on the important quantities for the design of the surrogate method when minimizing a discrete loss $L$. Let $k:\calX\times\calX\rightarrow\Rspace{}$ be the kernel associated to the RKHS~$\bar{\calG}$ and $\sup_{x\in\calX}k(x,x)\leq\kappa^2$. 
The $n$-th update of ASGD reads
\vspace{-1pt}
\begin{equation}
   \ghat_n =\textstyle \frac{1}{n}\sum_{i=1}^ng_i, \quad g_i = \Pi_D\left(g_{i-1} - \eta_i\nabla S(g_{i-1}(x_i), y_i)\otimes k(x_i,\cdot)\right),
\end{equation}
where $\nabla S$ denotes the gradient of $S$ w.r.t the first coordinate, $\eta_i$ is the step size and $\Pi_D$ is the projection onto the ball of radius $D$ w.r.t the norm induced by $\calG$. We have the following theorem.
\begin{theorem} \label[theorem]{th:ASGD}
Let $S:\calV\times\calY\rightarrow\Rspace{}$ be a loss of Legendre-type with associated $({1}/{\beta_{\|\cdot\|_2}})$-strongly convex $h$. Let $(x_i,y_i)_{i=1}^n$, with $n \in \N$ be independently and identically distributed according to $\rho$ and
assume $\gstar \in \calG$ and $\|\gstar\|_{\calG}^2=\sum_{j=1}^{k'}\|\gstar_j\|_{\bar{\calG}}^2\leq D^2$. 

Let $C^2= 1 + \sup_{y\in\calY}\|\phi(y)-\nabla h^*(0)\|_2/(\kappa\beta_{\|\cdot\|_2}D)$, then, by using the constant step size~$\eta=2/(\beta_{\|\cdot\|_2}\kappa^2C^2\sqrt{n})$, we have
\vspace{-1pt}
\begin{equation}\label{eq:ASGDbound}
    \mathbb{E}\left[\ERL(d\circ\ghat_n) - \ERL(\fstar)\right] \leq \frac{4\cdot\kappa\cdot c_{\psi,\|\cdot\|_2}\cdot\beta_{\|\cdot\|_2}\cdot D\cdot C}{n^{1/4}}.
\vspace{-1pt}
\end{equation}
\end{theorem}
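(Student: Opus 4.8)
The plan is to chain three ingredients: the risk-calibration bound of \cref{th:calibrationrisks}, the quadratic lower bound on the calibration function of \cref{th:lowerboundcalibration}, and the classical regret analysis of projected averaged stochastic gradient descent. Since $S$ is of Legendre-type it is $\phi$-calibrated with potential $h$ and canonical link $t=\nabla h$, so its calibration function equals $\zeta_h$ (\cref{th:exactcalibration}), and the nonnegative quadratic $\bar\zeta(\varepsilon)=\varepsilon^2/(8\,c_{\psi,\|\cdot\|_2}^2\beta_{\|\cdot\|_2})$ is a valid convex lower bound of $\zeta_h$ by \cref{th:lowerboundcalibration}. Feeding it into \cref{th:calibrationrisks}, taking expectations, and applying Jensen's inequality for the concave square root gives
\[
\Expect\bigl[\ERL(d\circ\ghat_n)-\ERL(\fstar)\bigr]\ \le\ \sqrt{8\,c_{\psi,\|\cdot\|_2}^2\,\beta_{\|\cdot\|_2}\;\Expect\bigl[\ERS(\ghat_n)-\ERS(\gstar)\bigr]},
\]
so it suffices to show $\Expect[\ERS(\ghat_n)-\ERS(\gstar)]=O(\beta_{\|\cdot\|_2}\kappa^2 D^2 C^2/\sqrt n)$.

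For that I would invoke the textbook guarantee for constant-step projected ASGD on a convex objective over a Hilbert space \cite{nemirovski2009robust}: $\ERS$ is convex on $\calG$, the minimizer $\gstar$ lies in the ball of radius $D$ onto which $\Pi_D$ projects (so the projection never increases the distance to $\gstar$), and by the reproducing property of $\calG=\calV\otimes\bar\calG$ the direction $\nabla S(g_{i-1}(x_i),y_i)\otimes k(x_i,\cdot)$ is, conditionally on the past, an unbiased estimate of $\nabla\ERS(g_{i-1})$; hence $\Expect[\ERS(\ghat_n)-\ERS(\gstar)]\le \|\gstar\|_\calG^2/(2\eta n)+\tfrac{\eta}{2}G^2$ for any uniform bound $G^2$ on the conditional second moment of the stochastic gradient in $\calG$. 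The Legendre-type form of $S$ supplies such a $G$: since $\nabla S(v,y)=\nabla h^*(v)-\phi(y)$, the stochastic gradient has $\calG$-norm $\sqrt{k(x_i,x_i)}\,\|\nabla h^*(g_{i-1}(x_i))-\phi(y_i)\|_2\le\kappa\,\|\nabla h^*(g_{i-1}(x_i))-\phi(y_i)\|_2$; as $g_{i-1}$ stays in the ball of radius $D$ we have $\|g_{i-1}(x_i)\|_2\le\kappa D$, and since $h$ is $(1/\beta_{\|\cdot\|_2})$-strongly convex its conjugate $h^*$ is $\beta_{\|\cdot\|_2}$-smooth, so $\|\nabla h^*(g_{i-1}(x_i))-\nabla h^*(0)\|_2\le\beta_{\|\cdot\|_2}\kappa D$; the triangle inequality together with $\|\nabla h^*(0)-\phi(y_i)\|_2\le\sup_y\|\phi(y)-\nabla h^*(0)\|_2$ then yields $\|\nabla S(g_{i-1}(x_i),y_i)\|_2\le\beta_{\|\cdot\|_2}\kappa D+\sup_y\|\phi(y)-\nabla h^*(0)\|_2$, which by the definition of $C$ equals $\beta_{\|\cdot\|_2}\kappa D\,C^2$, hence $G=\beta_{\|\cdot\|_2}\kappa^2 D\,C^2$.

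It then remains to substitute $\|\gstar\|_\calG\le D$, this value of $G$, and the prescribed step size $\eta=2/(\beta_{\|\cdot\|_2}\kappa^2 C^2\sqrt n)$ into the ASGD bound, which gives $\Expect[\ERS(\ghat_n)-\ERS(\gstar)]\le \tfrac54\,\beta_{\|\cdot\|_2}\kappa^2 D^2 C^2/\sqrt n$; plugging this into the first display and bounding the resulting constant ($\sqrt{8\cdot\tfrac54}=\sqrt{10}\le 4$) produces the claimed bound $4\,\kappa\,c_{\psi,\|\cdot\|_2}\,\beta_{\|\cdot\|_2}\,D\,C\,n^{-1/4}$. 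The only genuinely delicate step is the gradient bound: one must use the projection to confine the iterates to a ball on which the gradient of the loss — whose domain $\operatorname{dom}(h^*)=\calV$ is a priori unbounded — is controlled, convert that confinement into a uniform bound via the smoothness of $h^*$ (equivalently the strong convexity of $h$), and observe that the constant produced is precisely $\beta_{\|\cdot\|_2}\kappa^2 D\,C^2$, which is exactly what makes $\eta\propto 1/(\beta_{\|\cdot\|_2}\kappa^2 C^2\sqrt n)$ the (near-)optimal choice. Everything else is the standard convex-ASGD template combined with the calibration inequalities already established.
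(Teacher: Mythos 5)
Your proposal is correct and follows essentially the same route as the paper's proof: bound the stochastic gradient in $\calG$ via the $\beta_{\|\cdot\|_2}$-smoothness of $h^*$ and the projection radius (yielding exactly $M=\beta_{\|\cdot\|_2}\kappa^2DC^2$), invoke the standard constant-step ASGD guarantee of Nemirovski et al., and then combine \cref{th:lowerboundcalibration} with \cref{th:calibrationrisks} and a Jensen step (equivalent to the paper's $\mathbb{E}[w]\leq\sqrt{\mathbb{E}[w^2]}$) to get the $n^{-1/4}$ rate. The only differences are cosmetic: you carry the explicit $\tfrac{5}{4}$ constant from the regret bound where the paper cites the looser $2DM/\sqrt{n}$ form, and both yield the stated constant $4$.
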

\begin{proof}[Proof]
    Let's first compute a uniform bound on the gradients as
    \begin{align}
        \|\nabla S(g(x),y)\otimes k(x,\cdot)\|_{\calG} &\leq \kappa\|\nabla S(g(x),y)\|_2 \\ 
        &= \kappa\|\nabla h^*(g(x))-\phi(y)\|_2 \\ 
        & \leq \kappa (\|\nabla h^*(g(x))-\nabla h^*(0)\|_2 + \|\phi(y)-\nabla h^*(0)\|_2) \\
        & \leq \kappa(\kappa\beta_{\|\cdot\|_2}D + \sup_{y\in\calY}\|\phi(y)-\nabla h^*(0)\|_2)=M,
    \end{align}
    where at the first step we have used that $\|k(x,\cdot)\|_{\bar{\calG}}\leq \kappa$ and at the last step that $h^*$ is $\beta_{\|\cdot\|_2}$-smooth because $h$ is $(1/\beta_{\|\cdot\|_2})$-strongly convex, $\|g(x)\|_2^2=\sum_{j=1}^{k'}\langle g_j, k(x,\cdot)\rangle_{\bar{\calG}}^2\leq\sum_{j=1}^{k'}\kappa^2 \|g_j\|_{\bar{\calG}}^2=\kappa^2 \|g\|_{\calG}^2\leq\kappa^2 D^2$ and that $\nabla h^*(v)-\nabla h^*(0)$ vanishes at the origin.
    Using classical results on ASGD \cite{nemirovski2009robust}, we know that using the constant step size $\eta=2D/(M\sqrt{n})$, we have that $\mathbb{E}[\ERS(\widehat{g}) - \ERS(\gstar)] \leq  {2DM}/{n^{1/2}}$ after $n$ iterations of ASGD. Finally, applying the lower bound on $\zeta$ in \cref{th:lowerboundcalibration}, re-arranging terms, and using the fact that $\mathbb{E}[w] \leq \sqrt{\mathbb{E}[w^2]}$, for 
    $w=\ERL(d\circ\ghat_n) - \ERL(\fstar) \geq 0$, we obtain the bound~(\ref{eq:ASGDbound}).
\end{proof}

Note that \eqref{eq:ASGDbound} is upper bounded by
$8\kappa^{1/2}c_{\psi,\|\cdot\|_{2}}\max(\kappa^{1/2}\beta_{\|\cdot\|_2}D, (c_{\phi,h}\beta_{\|\cdot\|_2}D)^{1/2})/n^{1/4}$, where $c_{\phi,h}=\sup_{y\in\calY}\|\phi(y)-\nabla h^*(0)\|_2$.
There are essentially 4 quantities appearing in the bound (\ref{eq:ASGDbound}):
$c_{\psi,\|\cdot\|_{*}}$ that depends on $L$, $c_{\phi,h}$ that bounds the marginal polytope centered at $\nabla h^*(0)$, $\beta_{\|\cdot\|}$ that depends on $h$ and $D$, which is an upper bound on the norm of the optimum
$\|\gstar\|_{\calG}=\|\nabla h(\mustar)\|_{\calG}$ which depends on the link, in this case $\nabla h$, the RKHS $\calG$, and $\mustar$. Note that the image of $\mustar$ lies in the marginal polytope, which is bounded and potentially non full-dimensional in $\calH$, so if one directly estimates $\mustar$, the hypothesis space $\calG$ has to model this constraint. The role of the link function is to remove this additional complexity from $\calG$ by mapping the marginal polytope (or a superset $\calD$ of it) to the vector space $\calV$, and consequently smoothing out $\mustar$ close to the boundary of $\calM$, leading to a smaller $\|\gstar\|_{\calG}$. The types of surrogates that directly estimate $\mustar$ are of quadratic-type (see \cref{sec:QSandCRFs}), which have $\calD=\calH$ and the link is the identity. In this case, $\calG$ has to be able to model the fact that $\operatorname{Im}(\mustar)\subseteq\calM$. The second types are of logistic-type (see CRFs in \cref{sec:QSandCRFs}), which have $\calD=\calM$, and as
$\lim_{\mu\to\partial\calM}\|\nabla h(\mu)\|=+\infty$, the link smooths out $\mustar$ close to the boundary. In this case, $\|\gstar\|_{\calG}$ can potentially be much smaller as~$\calG$ does not have to model the polytope constraint. This generalizes the idea that the logistic link is preferable for estimating class-conditional probabilities, for instance, when using linear hypothesis spaces. In between the two, there are methods with  bounded $\calD$ but different than $\calM$, such as one-vs-all methods in multiclass classification, where $\calM=\Delta_k\subsetneq\calD\subsetneq\calH$ (see \cref{app:multiclassclassification}).

\section{Analysis of Existing Surrogate Methods}\label{sec:examples}
In this section we apply the theory developed so far to derive new results on multiple surrogate methods used in practice. In \cref{sec:QSandCRFs}, we study two generic methods for structured prediction, namely, the quadratic surrogate \cite{ciliberto2016consistent, nowak2018sharp,ciliberto2018localized} and conditional random fields (CRFs) \cite{Lafferty:2001:CRF:645530.655813, settles2004biomedical}. Then, in \cref{sec:specificproblems}, we present new theoretical results on multiple tasks in supervised learning which can be derived using results from \cref{sec:theoreticalanalysis}. 
The proofs of the results and further details can be found in \cref{app:genericmethods} for \cref{sec:QSandCRFs}, and from \cref{app:binaryclassification} to \cref{app:graphmatching} for \cref{sec:specificproblems}. 
\subsection{Optimizing generic losses: Quadratic surrogate vs. CRFs}
\label{sec:QSandCRFs}
\paragraph{Quadratic Surrogate for Structured Prediction.}
The quadratic surrogate for structured prediction \cite{ciliberto2016consistent, ciliberto2018localized} has the form 
\begin{equation}
    \calV=\calH, \quad S(v,y) = \frac{1}{2}\|v - \phi(y)\|_2^2.
\end{equation}
This is a loss of Legendre-type with $\calD=\calH$, $h(u) = \frac{1}{2}\|u\|_2^2$ and $t^{-1}(u)=\nabla h^*(u)=u$.
We can exactly compute the calibration function when $\calM$ is full-dimensional. \cref{th:exactcalibrationquadraticsimple} is a simpler version of the result which holds for $\varepsilon$ small enough, the complete result can be found in \cref{app:quadraticsurrogate}.
\begin{theorem}\label[theorem]{th:exactcalibrationquadraticsimple}
If $\calM$ is full-dimensional, there exists $\varepsilon_0>0$ such that
\vspace{-5pt}
\begin{equation}
    \zeta_h(\varepsilon) = \frac{\varepsilon^2}{2\max_{(z,z')\in A}\|\psi(z)-\psi(z')\|_2^2}, \quad \forall\varepsilon\leq\varepsilon_0,
    \vspace{-5pt}
\end{equation}
where $A=\{(z,z')\in\calZ^2~|~z'\neq z, \calH_0(z)\cap\calH_0(z')\neq\varnothing\}$.
\end{theorem}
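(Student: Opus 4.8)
The plan is to specialize the exact formula of \cref{th:exactcalibration} to the quadratic potential, thereby reducing everything to a finite family of point-to-set Euclidean distances. For $h(u)=\tfrac12\enormsq{u}$ one has $\nabla h\equiv\mathrm{Id}$, hence $D_h(u',u)=\tfrac12\enormsq{u'-u}$, and $\calD=\calH$, so $\calH_0(z)\cap\calD=\calH_0(z)$. Using that the Bregman divergence between two sets is then half the squared Euclidean distance, that the distance to a union is the minimum of the distances, and that $\calH_\varepsilon(z)^c=\bigcup_{z'\neq z}\{u:\langle a_{z,z'},u\rangle>\varepsilon\}$ with $a_{z,z'}:=\psi(z)-\psi(z')$, formula \cref{eq:exactcalibration} becomes
\[
\zeta_h(\varepsilon)=\tfrac12\,\min_{z\in\calZ}\ \min_{z'\neq z}\ \dist\big(\{u\in\calM:\langle a_{z,z'},u\rangle>\varepsilon\},\ \calH_0(z)\big)^2 .
\]
A universal lower bound on each term is immediate: if $u\in\calM$ has $\langle a_{z,z'},u\rangle>\varepsilon$ and $v\in\calH_0(z)$, then $\langle a_{z,z'},v\rangle\leq 0$, hence $\langle a_{z,z'},u-v\rangle>\varepsilon$, and Cauchy--Schwarz gives $\enorm{u-v}\geq\varepsilon/\enorm{a_{z,z'}}$. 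So every term is at least $(\varepsilon/\enorm{a_{z,z'}})^{2}$.

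Next I would prove this bound is tight, for $\varepsilon$ small, exactly on the ``adjacent'' pairs in $A$, i.e.\ those for which $\calH_0(z)\cap\calH_0(z')$ meets $\calM$ (equivalently, some distribution makes both $z$ and $z'$ Bayes-optimal). Fix such a pair, write $a=a_{z,z'}$, and choose $w$ in the relative interior of $\calH_0(z)\cap\calH_0(z')\cap\calM$, nonempty by hypothesis, so $\langle a,w\rangle=0$ and $w\in\calH_0(z)$. For $t>0$ set $u_t=w+t\,a$; then $\langle a,u_t\rangle=t\enorm{a}^{2}$, the orthogonal projection of $u_t$ onto $\{\langle a,\cdot\rangle=0\}$ is $w$ itself, and $\enorm{u_t-w}=t\enorm{a}$. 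Full-dimensionality of $\calM$ is what guarantees that, for such a $w$, $u_t\in\calM$ for all small $t$ — that is, $\calM$ reaches the side $\{\langle a,\cdot\rangle>0\}$ near $w$. Taking $t$ slightly above $\varepsilon/\enorm{a}^{2}$ puts $u_t$ in the slab; pairing it with $v=w\in\calH_0(z)$ and letting $t\downarrow\varepsilon/\enorm{a}^{2}$ yields $\dist(\cdot)\leq\varepsilon/\enorm{a}$. With the lower bound, the term for a pair in $A$ equals $(\varepsilon/\enorm{a_{z,z'}})^{2}$ for $\varepsilon$ below some pair-dependent threshold.

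It remains to rule out the other pairs for small $\varepsilon$. For $(z,z')\notin A$ the set $\calM\cap\{\langle a_{z,z'},\cdot\rangle\geq 0\}$ is either empty (whence every slab is empty and the term is $+\infty$) or a nonempty compact set disjoint from the closed set $\calH_0(z)$ — disjoint because $\calH_0(z)\cap\calH_0(z')\cap\calM=\varnothing$ — so their distance is a constant $\rho_{z,z'}>0$; and since $\varepsilon\mapsto\dist(\{u\in\calM:\langle a_{z,z'},u\rangle>\varepsilon\},\calH_0(z))$ is nondecreasing with limit $\rho_{z,z'}$ as $\varepsilon\downarrow 0$, this term stays $\geq\rho_{z,z'}$ for every $\varepsilon>0$. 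Letting $\varepsilon_0$ be the minimum of the pair-dependent thresholds from the previous paragraph and of $\rho\cdot\max_{(z,z')\in A}\enorm{a_{z,z'}}$, with $\rho=\min_{(z,z')\notin A}\rho_{z,z'}>0$, one checks that for $\varepsilon\leq\varepsilon_0$ the double minimum is attained on $A$, and there at the pair maximizing $\enorm{\psi(z)-\psi(z')}$, giving $\zeta_h(\varepsilon)=\varepsilon^2/\big(2\max_{(z,z')\in A}\enorm{\psi(z)-\psi(z')}^2\big)$ for $\varepsilon\leq\varepsilon_0$.

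The step I expect to be the main obstacle is the tightness claim for adjacent pairs: one must exhibit a point $w$ of the shared face from which $\calM$ genuinely extends across the separating hyperplane while its orthogonal projection remains in $\calH_0(z)$. This forces a careful analysis of the facial structure of $\calM$ and of the decision cells $\calH_0(z)\cap\calM$ — in particular shared faces lying on $\partial\calM$, and pairs for which a cell or the common face is not full-dimensional — which is exactly the case distinction deferred to the appendix, and it is where the full-dimensionality hypothesis on $\calM$ is genuinely used.
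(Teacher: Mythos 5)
Your overall route is the same as the paper's: specialize \cref{th:exactcalibration} to $h=\tfrac12\|\cdot\|_2^2$ (so that Bregman divergences become squared Euclidean distances and $\calD=\calH$), get the per-pair lower bound $\varepsilon/\|\psi(z)-\psi(z')\|_2$ from the separating hyperplane (the paper does this in \cref{prop:l2distancecalibration}), show tightness for ``touching'' pairs, and discard the non-touching pairs for small $\varepsilon$ because their distance is bounded below by a positive constant (this last step of yours is fine, and mirrors how the paper passes from \cref{th:exactcalibrationquadratic} to \cref{th:exactcalibrationquadraticsimple}). The lower-bound and elimination arguments are correct as you state them.

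The genuine gap is exactly the step you flag and then defer: the claim that, for $w\in\relinterior\bigl(\calH_0(z)\cap\calH_0(z')\cap\calM\bigr)$, full-dimensionality of $\calM$ guarantees $w+ta\in\calM$ for small $t>0$, where $a=\psi(z)-\psi(z')$. This is false as stated: the shared set is contained in the hyperplane $\{\langle a,\cdot\rangle=0\}$ and can lie entirely on $\partial\calM$ (for instance when this hyperplane supports $\calM$ along that face), in which case the orthogonal step $w+ta$ leaves $\calM$ immediately even though $\calM$ is full-dimensional; a point of $\relinterior$ of the face need not be an interior point of $\calM$. To repair it one must either pass to the reversed pair $(z',z)$ (same $\|a\|$, opposite direction) or, as the paper does, avoid the pointwise construction altogether: \cref{prop:l2distancecalibration} computes the distance \emph{without} intersecting with $\calM$, and the proof of \cref{th:exactcalibrationquadratic} shows the constrained and unconstrained distances coincide as long as $\varepsilon\leq\sup_{\mu\in\calM}|\langle\psi(z)-\psi(z'),\mu\rangle|=\|L_z-L_{z'}\|_\infty$, which is where full-dimensionality is actually used; the small-$\varepsilon$ statement then follows by dropping the $\delta_{z,z'}>0$ terms. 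Since your proposal leaves precisely this facial-structure analysis unproved (and the justification offered for it is incorrect in general), the decisive part of the argument is missing. A minor additional point: you silently replace the condition $\calH_0(z)\cap\calH_0(z')\neq\varnothing$ defining $A$ by ``the intersection meets $\calM$''; this is the reading the argument needs, but you should state and justify that substitution rather than treat it as automatic.
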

Note that in this case as $h$ is $1$-strongly convex with respect to the Euclidean norm, \cref{th:lowerboundcalibration} gives  $\zeta_h(\varepsilon) \geq \varepsilon^2\cdot(8c_{\psi, \|\cdot\|_2}^2)^{-1}$, 
and we recover the comparison inequality from \cite{ciliberto2016consistent}. In \cref{app:quadraticsurrogate} we compare this result with the lower bounds on the calibration function for the quadratic-type surrogates studied by \cite{osokin2017structured}. An interesting property of the quadratic surrogate is that one can build the estimator~$\fhat$ independently of the affine decomposition of $L$ by minimizing the expected surrogate risk with kernel ridge regression. In particular, this allows to extend the framework to continuous losses defined in compact sets $\calZ,\calY$ where $\calH$ can be infinite-dimensional \cite{ciliberto2016consistent}.

\paragraph{Conditional Random Fields.} Recall that $r=\operatorname{dim}(\calM)$. CRFs correspond to
\vspace{-2pt}
\begin{equation}\label{eq:CRFloss}
    \calV=\Rspace{r}, \quad S(v, y) = \textstyle \log(\sum_{y'\in\calY}\exp(\langle v, \phi(y')\rangle) - \langle v, \phi(y)\rangle.
\end{equation}
This is a loss of Legendre-type with $\calD=\calM$ and $h(u) = -\max_{q\in\operatorname{Prob}(\calY)}\operatorname{Ent}(q) \quad\text{s.t}\quad \mu(q)=u$,
where $\operatorname{Ent}(q)=-\sum_{y\in\calY}q(y)\log q(y)$ is the Shannon entropy of the distribution $q\in\operatorname{Prob}(\calY)$ \footnote{Note that here, for simplicity, we do not consider the constant term $h(\phi(y))$ from \cref{eq:legendretypeloss}.}.
In this case, the inverse of the link function $t^{-1}=\nabla h^*$ corresponds to performing marginal inference on the exponential family with sufficient statistics $\phi$. A well-known important drawback of CRFs is the fact that they are in general not calibrated to any specific loss. The reason being that inference in CRFs is done using MAP assignment, which corresponds to $\fhat_{\operatorname{MAP}}(x) = \argmax_{y\in\calY}~\langle\phi(y), \widehat{g}(x)\rangle$. It can be written in terms of $\nabla h^*$ as $\fhat_{\operatorname{MAP}}(x) = \phi^{-1}\left(\lim_{\gamma\to\infty}\nabla h^*(\gamma \widehat{g}(x))\right)$ (see \cref{app:CRFs} for the computation), which is different from the decoding $d_{\psi,\nabla h}$ we propose: $\fhat(x) = \argmin_{z\in\calZ}\langle\psi(z), \nabla h^*(\ghat(x))\rangle$. Note that $\nabla h^*(\gamma \widehat{g}(x))$ converges to a vertex of $\calM$ as $\gamma\to\infty$, while decoding $d_{\psi,\nabla h}$ partitions $\calM$ into $|\calZ|$ regions using $\psi$ and assigns a different output to each of those. With our proposed decoding, we can calibrate CRFs to any loss that decomposes with the cliques of the probabilistic model. For instance, for linear chains, the method consistently minimizes any loss that depends on the neighbors. Moreover, we can compute a lower bound on $\zeta_h$.
\begin{proposition}[Calibration of CRFs]\label[proposition]{prop:calibrationCRFs} The calibration function of CRFs can be lower bounded as 
$$\zeta_h(\varepsilon) \geq \frac{\varepsilon^2}{8c_{\psi,\|\cdot\|_2}^2c_{\phi,\|\cdot\|_2}^2},$$
where $c_{\phi,\|\cdot\|_2}=\sup_{y\in\calY}\|\phi(y)\|_2$.
\end{proposition}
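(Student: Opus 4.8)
The plan is to deduce the bound from \cref{th:lowerboundcalibration}; the only missing ingredient is a Euclidean strong-convexity modulus for the CRF potential $h$. Recall from \cref{sec:QSandCRFs} that for CRFs the potential is the negative maximum-entropy function $h(u)=-\max_{q\in\operatorname{Prob}(\calY):\,\mu(q)=u}\operatorname{Ent}(q)$ with domain $\calD=\calM$, a loss of Legendre-type whose Fenchel conjugate is the log-partition function $h^{*}(v)=\log\sum_{y\in\calY}\exp(\langle v,\phi(y)\rangle)$ --- i.e. the convex part of $S$ in \cref{eq:CRFloss}, consistent with the general form \cref{eq:legendretypeloss}. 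I would work, as the CRF construction implicitly does, in $\calV=\Rspace{r}$, in which $\calM$ is full-dimensional and $h,h^{*}$ form a nondegenerate Legendre pair.

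First I would estimate the curvature of $h^{*}$. Writing $p_v\in\operatorname{Prob}(\calY)$ for the distribution $p_v(y)\propto\exp(\langle v,\phi(y)\rangle)$, direct differentiation gives $\nabla h^{*}(v)=\Expect_{Y\sim p_v}\phi(Y)=\mu(p_v)$ (marginal inference) and $\nabla^{2}h^{*}(v)=\operatorname{Cov}_{Y\sim p_v}(\phi(Y))$. Since $\operatorname{Cov}_{Y\sim p_v}(\phi(Y))\psdle\Expect_{Y\sim p_v}[\phi(Y)\phi(Y)^{\transp}]$ and, for any unit vector $u$, $u^{\transp}\Expect_{Y\sim p_v}[\phi(Y)\phi(Y)^{\transp}]u=\Expect_{Y\sim p_v}\langle\phi(Y),u\rangle^{2}\leq\Expect_{Y\sim p_v}\|\phi(Y)\|_2^{2}\leq c_{\phi,\|\cdot\|_2}^{2}$, we obtain $\nabla^{2}h^{*}(v)\psdle c_{\phi,\|\cdot\|_2}^{2}\,\Id$ uniformly in $v$; that is, $h^{*}$ is $c_{\phi,\|\cdot\|_2}^{2}$-smooth with respect to $\|\cdot\|_2$. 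By the standard smoothness/strong-convexity duality this is equivalent to $h$ being $(1/c_{\phi,\|\cdot\|_2}^{2})$-strongly convex with respect to $\|\cdot\|_2$ on $\calD=\calM$. Concretely, from the integral form of Taylor's theorem $D_h(u,u')=\tfrac12(u-u')^{\transp}\nabla^{2}h(\xi)(u-u')$ for some $\xi$ on the segment, combined with $\nabla^{2}h(\xi)=\bigl(\nabla^{2}h^{*}(\nabla h(\xi))\bigr)^{-1}\psdge c_{\phi,\|\cdot\|_2}^{-2}\Id$, one gets $D_h(u,u')\geq\frac{1}{2c_{\phi,\|\cdot\|_2}^{2}}\|u-u'\|_2^{2}$, first on $\relinterior(\calM)$ and then on all of $\calM$ by continuity.

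It then suffices to invoke \cref{th:lowerboundcalibration} with $\|\cdot\|=\|\cdot\|_2$ (self-dual, so $\|\cdot\|_{*}=\|\cdot\|_2$ and $c_{\psi,\|\cdot\|_{*}}=c_{\psi,\|\cdot\|_2}$) and $\beta_{\|\cdot\|_2}=c_{\phi,\|\cdot\|_2}^{2}$, which yields $\zeta_h(\varepsilon)\geq\varepsilon^{2}/(8\,c_{\psi,\|\cdot\|_2}^{2}\,c_{\phi,\|\cdot\|_2}^{2})$, the claimed bound. (One could sharpen the constant using \cref{th:improvedlowerbound} for Euclidean strong convexity, but the stated form already follows from the basic inequality.) The main obstacle is the middle step --- pinning down the strong-convexity modulus of $h$ on the bounded polytope $\calM$: beyond the covariance/operator-norm estimate, one must handle the duality transfer carefully when $\operatorname{dom}(h)=\calM$ is bounded and possibly lower-dimensional (hence the passage to the affine hull, i.e. the minimal parametrization $\Rspace{r}$), and check that the uniform bound $\nabla^{2}h^{*}\psdle c_{\phi,\|\cdot\|_2}^{2}\Id$ is compatible with the Legendre blow-up $\lim_{\mu\to\partial\calM}\|\nabla h(\mu)\|_2=+\infty$ --- it is, since bounded curvature of $h^{*}$ is precisely what forces unbounded curvature of $h$ near $\partial\calM$. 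Everything past that is a one-line substitution.
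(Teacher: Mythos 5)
Your proposal is correct and follows essentially the same route as the paper: bound $\nabla^2 h^*(v)=\operatorname{Cov}_{p_v}(\phi(Y))$ by the second moment to get $c_{\phi,\|\cdot\|_2}^2$-smoothness of the log-partition function, transfer by Legendre duality to $(1/c_{\phi,\|\cdot\|_2}^{2})$-strong convexity of $h$ w.r.t.\ $\|\cdot\|_2$, and plug $\beta_{\|\cdot\|_2}=c_{\phi,\|\cdot\|_2}^{2}$ into \cref{th:lowerboundcalibration}. Your additional care about the minimal parametrization $\Rspace{r}$ and the behaviour near $\partial\calM$ is a welcome refinement but not a different argument.
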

\subsection{Specific Problems}\label{sec:specificproblems}

\paragraph{Binary Classification.}
In this case $\calZ=\calY=\{-1,1\}$. See \cref{ex:multiclassmultilabel} setting $k=2$ for     the structure of the loss.
    We consider \emph{margin losses}, which are losses of the form $S(v, y) = \Phi(yv)$ with $\calV=\Rspace{}$, where $\Phi:\Rspace{}\rightarrow\Rspace{}$ is a non-increasing function with $\Phi(0)=1$. The decoding simplifies to $d(v)=\operatorname{sign}(v)$. The logistic, exponential ($\calD=[0,1]=\Delta_2$) and square ($\calD=\Rspace{}\supsetneq\Delta_2$) margin losses are $\phi$-calibrated. The calibration function is $\zeta_h(\varepsilon) = h\left(\frac{1+\varepsilon}{2}\right)-h\left(\frac{1}{2}\right)$ \cite{bartlett2006convexity, scott2012calibrated}, where $h:\calD\subset\Rspace{}\rightarrow\Rspace{}$ is the associated potential.

\paragraph{Multiclass Classification.} In this case $\calZ=\calY=\{1,\ldots,k\}$. See \cref{ex:multiclassmultilabel} for the structure of the loss. The \emph{one-vs-all method}  corresponds to $S(v,y) = \Phi(v_y) + \sum_{j\neq y}^k\Phi(-v_j)$ with  $\calV=\Rspace{k}$. If the margin loss $\Phi(y_jv_j)$ is $\phi$-calibrated for binary classification with potential $\bar{h}:\bar{\calD}\subset\Rspace{}\rightarrow\Rspace{}$, then $S$ is $\phi$-calibrated with $h:\calD=\bar{\calD}^k\subset\Rspace{k}\rightarrow\Rspace{}$ defined as $h(u)=\sum_{j=1}^k\bar{h}(u_j)$. Note that the marginal polytope is strictly included in $\calD$: $\Delta_k\subsetneq[0,1]^k\subseteq\calD$. The decoding can be simplified to $d(v) = \argmax_{j\in[k]}~v_j$ and the calibration function has the form given by \cref{th:onevsallcalibration}.
\begin{proposition}[One-vs-all calibration function]\label[proposition]{th:onevsallcalibration}
Assume $\bar{h}''$ is non-decreasing in~$\bar{\calD}\cap[1/2,+\infty)$. Then, 
the calibration function for the one-vs-all method is $\zeta_h(\varepsilon) = 2\cdot\zeta_{\bar{h}}(\varepsilon)$.
\end{proposition}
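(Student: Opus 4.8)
The plan is to read off $\zeta_h$ from the exact formula of \cref{th:exactcalibration}, exploiting the product structure $h(u)=\sum_{j=1}^{k}\bar h(u_j)$ of the one-vs-all potential and the permutation symmetry of the $0$-$1$ loss. With $\psi(z)=-e_z$ and $\calM=\Delta_k$ the calibration sets become $\calH_0(z)=\{v~|~v_z\geq v_j\ \forall j\}$ and, after replacing the strict inequality by its closure (which does not change the infimum), $\calH_\varepsilon(z)^c\cap\calM=\{u\in\Delta_k~|~\max_j u_j-u_z\geq\varepsilon\}$, while $D_h(u,v)=\sum_{j=1}^{k}D_{\bar h}(u_j,v_j)$. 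Since $h$ and the loss are permutation invariant the $z$-th term in \cref{eq:exactcalibration} does not depend on $z$, so I fix $z=1$. Finally, since $\bar h$ is the potential of a binary-calibrated margin loss, it may be taken symmetric about $1/2$, so that $\bar h'(1/2)=0$, $\bar h''$ is symmetric about $1/2$, and $\zeta_{\bar h}(\varepsilon)=\bar h(\tfrac{1+\varepsilon}{2})-\bar h(\tfrac12)=\bar h(\tfrac{1-\varepsilon}{2})-\bar h(\tfrac12)$; the hypothesis then says $\bar h''$ is non-increasing on $\bar\calD\cap(-\infty,1/2]\supseteq[0,1/2]$.

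For $\zeta_h(\varepsilon)\leq 2\zeta_{\bar h}(\varepsilon)$ I would plug into \cref{eq:exactcalibration} (for $z=1$) the admissible pair $u=(\tfrac{1-\varepsilon}{2},\tfrac{1+\varepsilon}{2},0,\dots,0)\in\Delta_k$ and $v=(\tfrac12,\tfrac12,0,\dots,0)\in\calD$ (indeed $\max_j u_j-u_1=\varepsilon$, $v_1=\max_j v_j$, and $\tfrac12,0\in[0,1]\subseteq\bar\calD$), obtaining
\begin{equation*}
D_h(u,v)=D_{\bar h}(\tfrac{1-\varepsilon}{2},\tfrac12)+D_{\bar h}(\tfrac{1+\varepsilon}{2},\tfrac12)=\bar h(\tfrac{1-\varepsilon}{2})+\bar h(\tfrac{1+\varepsilon}{2})-2\bar h(\tfrac12)=2\zeta_{\bar h}(\varepsilon),
\end{equation*}
where the linear parts cancel because $\bar h'(1/2)=0$ and the $k-2$ remaining coordinates contribute $D_{\bar h}(0,0)=0$.

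For the matching lower bound, take any admissible $(u,v)$ for $z=1$ and an index $j_0$ with $u_{j_0}-u_1\geq\varepsilon$ (necessarily $j_0\neq1$); discarding the nonnegative summands with $j\notin\{1,j_0\}$ gives $D_h(u,v)\geq D_{\bar h}(u_1,v_1)+D_{\bar h}(u_{j_0},v_{j_0})$ under the constraints $u_1\geq0$, $u_1+u_{j_0}\leq1$, $u_{j_0}-u_1\geq\varepsilon$, $v_1\geq v_{j_0}$. A first-order analysis (each $D_{\bar h}(a,\cdot)$ is unimodal with minimum at $a$, so the unconstrained minimizer $(u_1,u_{j_0})$ violates $v_1\geq v_{j_0}$ since $u_1<u_{j_0}$) shows the infimum of the right-hand side over $\{v_1\geq v_{j_0}\}$ is attained on the diagonal $v_1=v_{j_0}=t$, and then at the midpoint $t=\tfrac12(u_1+u_{j_0})$, leaving the Jensen gap $\bar h(u_1)+\bar h(u_{j_0})-2\bar h(\tfrac12(u_1+u_{j_0}))$. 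Writing $u_1=m-s$, $u_{j_0}=m+s$ with $s\geq\varepsilon/2$ and $m\in[\varepsilon/2,1/2]$, this gap is non-decreasing in $s$ (its $s$-derivative is $\bar h'(m+s)-\bar h'(m-s)\geq0$), so it remains to lower-bound $F(m):=\bar h(m-\tfrac\varepsilon2)+\bar h(m+\tfrac\varepsilon2)-2\bar h(m)$ on $[\varepsilon/2,1/2]$. From $F'(m)=\int_0^{\varepsilon/2}\big(\bar h''(m+t)-\bar h''(m-t)\big)\,dt$ and the monotonicity/symmetry of $\bar h''$ above — for $m+t>1/2$ reflect: $\bar h''(m+t)=\bar h''(1-m-t)$ with $1-m-t\geq m-t$ because $m\leq1/2$, so $\bar h''(1-m-t)\leq\bar h''(m-t)$ — the integrand is $\leq0$ on $[\varepsilon/2,1/2]$, hence $F$ is non-increasing there and $\min_{[\varepsilon/2,1/2]}F=F(1/2)=\bar h(\tfrac{1-\varepsilon}{2})+\bar h(\tfrac{1+\varepsilon}{2})-2\bar h(\tfrac12)=2\zeta_{\bar h}(\varepsilon)$. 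Combined with the upper bound, $\zeta_h(\varepsilon)=2\zeta_{\bar h}(\varepsilon)$; the cases $k=2$ (no spectator coordinates) and $\varepsilon\geq1$ (both feasible sets empty, $\zeta_h=\zeta_{\bar h}=+\infty$) are immediate.

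The step I expect to be the main obstacle is the last one of the lower bound — that $F$ is minimized on $[\varepsilon/2,1/2]$ at the endpoint $m=1/2$ — since this is exactly where the assumption on $\bar h''$, together with the automatic symmetry of $\bar h$ about $1/2$, enters; without such a hypothesis $F$ need not be monotone on that interval. The remaining points are routine bookkeeping: $[0,1]\subseteq\bar\calD$ so that the competitor point and the reflection stay in the domain, and justifying the staged minimization in $v$ (absence of interior stationary points together with $D_{\bar h}(a,\cdot)\to\infty$ at $\partial\bar\calD$).
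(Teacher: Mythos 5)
Your proposal is correct and follows essentially the same route as the paper's proof: reduce via permutation symmetry to two active coordinates, use the separability $D_h(u,v)=\sum_j D_{\bar h}(u_j,v_j)$, push the $v$-minimization to the diagonal, and then invoke the symmetry of $\bar h$ about $1/2$ together with the monotone-Hessian hypothesis to locate the minimizer at $\bigl(\tfrac{1\mp\varepsilon}{2},\tfrac{1\pm\varepsilon}{2}\bigr)$ versus $\bigl(\tfrac12,\tfrac12\bigr)$, giving $2\zeta_{\bar h}(\varepsilon)$. The only difference is that you spell out (via the midpoint reduction and the reflection argument for $F(m)$) the final monotonicity step that the paper merely asserts, which is a welcome filling-in of detail rather than a different approach.
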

Note that the assumption on $\bar{h}''$ is met by the logistic, exponential and square binary margin losses.
Another important example is the \emph{multinomial logistic} loss, which corresponds to \eqref{eq:CRFloss} for multiclass. In this case the decoding is also simplified to $d(v) = \argmax_{j\in[k]}~v_j$, and $\zeta_h(\varepsilon)\geq \varepsilon^2/8$ by using strong convexity of the entropy w.r.t $\|\cdot\|_1$ norm on \cref{th:lowerboundcalibration}.

\paragraph{Multilabel Classification.}
 In this case $\calZ=\calY=\{-1,1\}^k$. See \cref{ex:multiclassmultilabel} for the structure of the loss.
We consider \emph{independent classifiers}, which have the form: $S(v,y) = \sum_{j=1}^k\Phi(y_jv_j)$, with $\calV=\Rspace{k}$. In this case the potential has the form $h(u)=\sum_{j=1}^k\bar{h}((u_j+1)/2)$, where $\bar{h}$ is the potential for the individual classifier. In this case $\calM$ equals $\calD$ for logistic and exponential classifiers, as they have $\bar{\calD}=[0,1]$. The decoding is simplified to $d(v) = (\text{sign}(v_j))_{j=1}^k$ and the calibration function $\zeta_h$ can be computed exactly and it is linear in the number of labels $\zeta_h(\varepsilon)=k\cdot\zeta_{\bar{h}}(\varepsilon)$
(see \cref{prop:hammingcalibration} in \cref{app:multilabelclassification}).

\paragraph{Ordinal Regression.} In this case $\calZ=\calY=\{1,\ldots, k\}$ with an ordering: $1\prec\cdots\prec k$. We consider the absolute error loss function defined as $L(z,y) = |z-y|$. We analyze two types of surrogates, the \emph{all thresholds (AT)} \cite{lin2006large} and the \emph{cumulative link (CL)} \cite{mccullagh1980regression}, both studied by \cite{pedregosa2017consistency}.
AT methods correspond to independent classifiers $S(v,y) = \sum_{j=1}^{k-1}\Phi(\phi_j(y)v_j)$ with $\calV=\Rspace{k-1}$, where $\phi(y)=(2\cdot 1(y_j\geq j)-1)_{j=1}^{k-1}$. In this case, using results from the Hamming loss we show that 
$\zeta_h(\varepsilon)\geq (k-1)\cdot\zeta_{\bar{h}}(\varepsilon/(k-1))\sim\varepsilon^2/(k-1)$, where $\bar{h}$ is the potential for the individual classifier. CL methods, instead, are based on applying a link to the cumulative probabilities, and it can be shown that the associated potential is the negative entropy on the simplex in $k$ dimensions. Using the strong convexity of the entropy w.r.t the $\|\cdot\|_1$ norm, we can show that
$\zeta_h(\varepsilon)\geq 1/8\cdot \varepsilon^2/(k-1)^2\sim\varepsilon^2/(k-1)^2$. In particular, this explains the experiment of Fig. 1 from \cite{pedregosa2017consistency}, where they provide empirical evidence that the calibration function for AT is larger than the one for CL, which they are not able to compute.

\paragraph{Ranking with NDCG loss.} We provide guarantees for learning permutations with the NDCG loss. In this case, we recover the results from \cite{ravikumar2011ndcg}.

\paragraph{Graph matching.} In graph matching, the goal is to map the nodes from one graph to another. In this case, the outputs are ``matchings'' (permutations) and the goal is to minimize the Hamming loss between permutations $L(\sigma, \sigma') = \frac{1}{m}\sum_{j=1}^m1(\sigma(j)\neq\sigma'(j)) = 1 - {\langle X_{\sigma},X_{\sigma'}\rangle_{F}}/{m}$,
where $X_{\sigma}\in\Rspace{m\times m}$ is the permutation matrix associated to the permutation $\sigma$. In this case, $k=m^2$,
the marginal polytope is the polytope of doubly stochastic matrices which has dimension $r=\operatorname{dim}(\calM)=k^2-2k+1$, and the decoding mapping corresponds to perform linear assignment. As CRFs are intractable in this case \cite{petterson2009exponential}, a common approach is to learn the probabilities of each row independently by casting this problem as $k$ multiclass problems (one for each row of the matrix). Doing multinomial logistic regression independently at each row corresponds to set $\calD$ as the polytope of row-stochastic matrices, which strictly includes $\calM$ and
has dimension $k^2-k$. A direct application of  \cref{th:lowerboundcalibration} gives $\zeta_h(\varepsilon) \geq {m^2\varepsilon^2}/{8}$.

\subsection*{Acknowledgements}
The first author was supported by La Caixa Fellowship. This work has been possible thanks to the funding from the European Research Council (project Sequoia 724063).
We also thank David L\'opez-Paz for his valuable feedback.

\bibliography{biblio}
\newpage
\appendix

{\Huge{Organization of the Appendix}}

\begin{itemize}
\item [\textbf{A.}] \emph{\Large{Bregman Divergence Representation of Surrogate Losses}}
    \item [\textbf{B.}] \emph{\Large{Functions of Legendre-type and Canonical Link}}
    \item [\textbf{C.}] \emph{\Large{Calibration of Risks}}
    \begin{itemize}
        \item [\textbf{C.1. }]\emph{Calibration of Risks without Noise Assumption}
        \item [\textbf{C.2. }]\emph{Calibration of Risks with Noise Assumption}
    \end{itemize}
    \item [\textbf{D.}] \emph{\Large{General Results for the Calibration Function}}
    \begin{itemize}
        \item [\textbf{D.1. }]\emph{Exact Formula for the Calibration Function}
        \item [\textbf{D.2. }]\emph{Lower Bounds on the Calibration Function}
        \item [\textbf{D.3. }]\emph{Upper Bound on the Calibration Function}
    \end{itemize}
    \item [\textbf{E.}] \emph{\Large{Generic Methods for Structured Prediction}}
    \begin{itemize}
        \item [\textbf{E.1. }]\emph{Quadratic Surrogate} 
        \item [\textbf{E.2. }]\emph{Conditional Random Fields}
    \end{itemize}
    \item [\textbf{F.}] \emph{\Large{Binary Classification}}
    \item [\textbf{G.}] \emph{\Large{Multiclass Classification}}
    \begin{itemize}
        \item [\textbf{G.1. }]\emph{One-vs-All} 
        \item [\textbf{G.2. }]\emph{Multinomial Logistic}
    \end{itemize}
    \item [\textbf{H.}] \emph{\Large{Multilabel Classification with Hamming Loss}}
    \item [\textbf{I.}] \emph{\Large{Ordinal Regression}}
    \begin{itemize}
        \item [\textbf{I.1. }]\emph{All Thresholds}
        \item [\textbf{I.2. }]\emph{Cumulative Link} 
    \end{itemize}
    \item [\textbf{J.}] \emph{\Large{Ranking with NDCG Measure}}
    \item [\textbf{K.}] \emph{\Large{Graph Matching}}
\end{itemize}

\newpage 

\begin{figure}[ht!]
    \centering
    \includegraphics[width=0.9\textwidth]{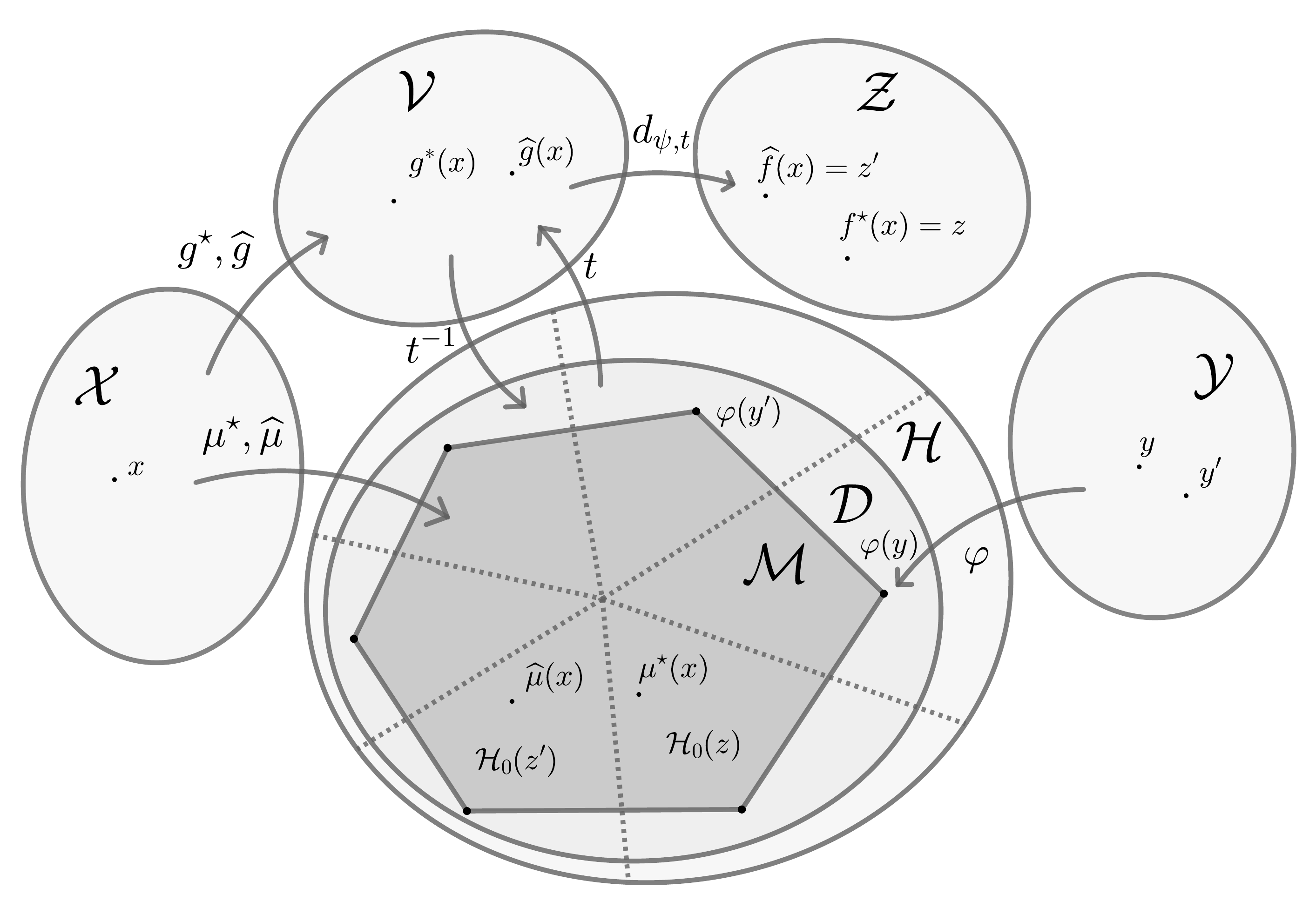}
    \caption{Diagram representing the surrogate method $(S,d_{\psi,t})$. The dashed lines separate the calibration sets $\calH_0(z)$. If $\muhat(x)=t^{-1}(\ghat(x))\in\calH_0(z')$, then $\fhat(x)=d_{\psi,t}\circ\ghat(x)=z'$.}
    \label{fig:diagram}
\end{figure}

\section{Bregman Divergence Representation of Surrogate Losses}\label{app:BDrepresentation}
\begin{proof}[Proof of \cref{th:compositerepresentation}]
As $t:\calM\rightarrow\calV$ is injective, we have that for any $v\in t(\calM)$ there exists a unique $\mu\in\calM$ such that $t(\mu)=v$. We then consider the loss $\bar{S}:\calM\times\calY\rightarrow\Rspace{}$ defined as $\bar{S}(\mu, y) = S(t(\mu), y)$. Moreover, $\bar{S}$ is a continuous function of $\mu$ because $t$ and $S$ are continuous. We define the quantities 
$\bar{s}(\mu, q)=\Expect_{Y\sim q}\bar{S}(\mu, Y)$ and 
$\delta\bar{s}(\mu, q) = \bar{s}(\mu, q) - \min_{\mu'\in\calM}\bar{s}(\mu', q)\geq 0$.

Furthermore, we have that if $S$ satisfies $t(\mu(q))=\argmin_{v\in\calV}~s(v,q)$, then $\bar{S}$ satisfies
\begin{equation}\label{eq:propertyelicitation}
    \mu(q) = \argmin_{\mu'\in\calM}~\bar{s}(\mu', q).
\end{equation}
A loss $\bar{S}$ satisfying \cref{eq:propertyelicitation} is said to elicit the function $\mu(\cdot)=\Expect_{Y\sim\cdot}\phi(Y)$. It is a known result from the theory of property elicitation \cite{abernethy2012characterization,frongillo2015vector} that
if a loss elicits a linear function of a distribution, then there exists a strictly convex function $h$ such that,
\begin{equation}
    \delta\bar{s}(\mu, q)=D_h(\mu(q), \mu), \quad  \forall q\in\operatorname{Prob}(\calY), \forall \mu\in\calM.
\end{equation}
Here, the Bregman divergence of a strictly convex (and potentially non-differentiable) function $h$ is defined as
\begin{equation}
     D_{h}(\mu', \mu) = h(\mu') - h(\mu) - \langle \mu' - \mu, dh_{\mu}\rangle,
\end{equation}
where $\{dh_{\mu}\}_{\mu\in\calM}$ are a selection of subgradients of $h$. Note that $dh_{\mu}\in T_{\mu}\calM$, where $T_{\mu}\calM=\Rspace{r}$ is the tangent space of the marginal polytope $\calM$ at the point $\mu\in\calM$, where $r=\operatorname{dim}(\calM)$.

Finally, we prove that $h$ is differentiable in $\calM$. Let's first note that as $\bar{S}(\mu, q)$ is a continuous function of $\mu$, then $\delta\bar{s}(\mu, q)=D_h(\mu(q),\mu)$ and consequently $A_{\mu'}(\mu) \defeq h(\mu) - \langle\mu-\mu', dh_{\mu}\rangle$ are also continuous in $\mu$.

Now assume that $h$ is not differentiable at $\mu_0$ and consider $dh_{\mu_0}^{(1)}$ and $dh_{\mu_0}^{(2)}$ two different subgradients of $h$ at the point $\mu_0\in\calM$. In particular, this means that there exists at least a point~$\mu_1\in\calM$ such that $\langle \mu_0-\mu_1, dh_{\mu_0}^{(1)}\rangle\neq \langle \mu_0-\mu_1, dh_{\mu_0}^{(2)}\rangle$. 

Assume without loss of generality that~$\langle \mu_0-\mu_1, dh_{\mu_0}^{(1)}\rangle < \langle \mu_0-\mu_1, dh_{\mu_0}^{(2)}\rangle$ and that \\ $\mu_0 + (\mu_0-\mu_1)\in\calM$.

Then, consider the parametrization of the segment $\operatorname{hull}(\{\mu_0 + (\mu_0-\mu_1), \mu_1\})$ as 
\begin{equation}
    \mu(t)=t\cdot \mu_1 + (1-t)\cdot (\mu_0 + (\mu_0-\mu_1)).
\end{equation}
We have that
\begin{align*}
    &\lim_{t\to(1/2)^{-}} A_{\mu_1}(\mu(t)) \\
    &\leq h(\mu_0) - \langle\mu_0-\mu_1, dh_{\mu_0}^{(1)}\rangle \\
    &< h(\mu_0) - \langle\mu_0-\mu_1, dh_{\mu_0}^{(2)}\rangle \\
    &\leq \lim_{t\to(1/2)^{+}} A_{\mu_1}(\mu(t)).
\end{align*}

Hence, $A_{\mu_1}(\mu(\cdot))$ is not continuous at $t=1/2$, which means that $A_{\mu_1}(\cdot)$ is not continuous at~$\mu_0$, which is a contradiction.
\end{proof}
\section{Functions of Legendre-type and Canonical Link}
\label{app:canonicallink}

In this section, we first introduce functions of Legendre-type and provide some of the most representative examples, namely, the quadratic function and negative maximum-entropy. We then show how Fenchel duality applied to this group of functions can be used to construct $\phi$-calibrated surrogates by taking the gradient as the link function. In particular, we show that the surrogate loss resulting from this construction, which we refer to as Legendre-type loss function, has desirable properties such as convexity and  the fact that the surrogate excess Bayes risk can be written as a Bregman divergence directly at the surrogate space $\calV$.

First, we recall the concept of \emph{essentially smooth} functions (see \cite{rockafellar2015convex} for more details).

\begin{definition}[Essentially Smooth Functions]\label[definition]{def:essentiallysmooth}
A function $h:\calD\subseteq\calH\rightarrow\Rspace{}$ is called essentially smooth if
\begin{itemize}
    \item[(1)] $\calD$ is non-empty,
    \item[(2)] $h$ is differentiable throughout $\interior(\calD)$,
    \item[(3)] and $\lim_{u\to\partial\calD}\nabla h(u)=+\infty$, where $\partial\calD$ is the boundary of the set $\calD$.
\end{itemize}
\end{definition}
\begin{definition}[Legendre-type Functions]\label[definition]{def:legendretype}
A function $h:\calD\subseteq\calH\rightarrow\Rspace{}$ is of Legendre-type if
it is strictly convex in $\interior(\calD)$ and essentially smooth.
\end{definition}
The two most important examples of such functions are the quadratic loss $h(u)=\frac{1}{2}\|u\|_2^2$ with domain $\calD=\calH\supsetneq\calM$ and the negative maximum-entropy 
\begin{equation*}
    h(u) = -\max_{q\in\operatorname{Prob}(\calY)}\operatorname{Ent}(q) \quad\text{s.t}\quad \mu(q)=u,
\end{equation*}
with $\calD=\calM\subsetneq\calH$, where $\operatorname{Ent}(q)=-\sum_{y\in\calY}q(y)\log q(y)$ is the Shannon entropy of the distribution $q\in\operatorname{Prob}(\calY)$. Now, recall the concept of \emph{Fenchel conjugate} of a function $h:\calD\subseteq\calH\rightarrow\Rspace{}$, which is defined as the function $h^*$ computed as
\begin{equation}
    h^*(v) = \sup_{u\in\calD}\{\langle v, u\rangle - h(u)\},
\end{equation}
with domain $\operatorname{dom}(h^*)$. The following \cref{prop:legendrefenchel} states that Legendre-type functions behave specially well with Fenchel duality.
\begin{proposition}[Fenchel conjugate of a Legendre-type function \cite{rockafellar2015convex}]
\label[proposition]{prop:legendrefenchel}
The Fenchel conjugate $h^*$ of a Legendre-type function $h$ is also of Legendre-type, with $\interior(\operatorname{dom}(h^*))=\interior(\operatorname{Im}(\nabla h))$. Moreover, the gradient functions $\nabla h:\interior(\calD)\rightarrow\interior(\operatorname{dom}(h^*))$ and 
$\nabla h^*:\interior(\operatorname{dom}(h^*))\rightarrow\interior(\calD)$ are inverse of each other $\nabla h^* = (\nabla h)^{-1}$. Furthermore, we also have that
$$ D_h(u, u') = D_{h^*}(\nabla h(u'), \nabla h(u)).$$
\end{proposition}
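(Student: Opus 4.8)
The plan is to reproduce the classical Legendre--Fenchel duality argument (this is essentially Theorem~26.5 in \cite{rockafellar2015convex}), organized around the Fenchel--Young equality. Replacing $h$ by its closed proper convex extension to all of $\calH$ (set $h\equiv+\infty$ off $\overline{\calD}$; this is harmless since everything below concerns $\interior(\calD)$), one has for all $u,v$ that $v\in\subdiff h(u)\iff u\in\subdiff h^*(v)\iff\langle u,v\rangle=h(u)+h^*(v)$. I would first record two consequences of the hypotheses on $h$: (i) strict convexity on $\interior(\calD)$ forces $\nabla h$ to be injective there, since $\langle\nabla h(u)-\nabla h(u'),u-u'\rangle>0$ whenever $u\neq u'$ in $\interior(\calD)$; and (ii) essential smoothness forces $\subdiff h(u)=\varnothing$ for every $u\in\overline{\calD}\setminus\interior(\calD)$ --- the standard ``no subgradient on the boundary'' lemma, proved by moving from such a $u$ along a segment into $\interior(\calD)$ and using $\|\nabla h\|\to\infty$ to contradict the existence of a subgradient at $u$.

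Next I would show $\operatorname{Im}(\nabla h)=\interior(\operatorname{dom} h^*)$, which in particular proves openness of the image and hence the stated identity $\interior(\operatorname{dom} h^*)=\interior(\operatorname{Im}(\nabla h))$. For $u\in\interior(\calD)$ with $v:=\nabla h(u)$, the concave map $u'\mapsto\langle v,u'\rangle-h(u')$ is maximized uniquely at $u$ (stationarity plus strict concavity), so $h^*(v)=\langle v,u\rangle-h(u)<\infty$ and $u\in\subdiff h^*(v)$; moreover $\subdiff h^*(v)=\{u\}$, because any $u'\in\subdiff h^*(v)$ satisfies $v\in\subdiff h(u')$, which by (ii) puts $u'\in\interior(\calD)$, whence $v=\nabla h(u')$ and $u'=u$ by (i). A point at which the subdifferential is a nonempty singleton is an interior point of the domain and a point of differentiability, so $v\in\interior(\operatorname{dom} h^*)$ and $\nabla h^*(v)=u$; this gives $\operatorname{Im}(\nabla h)\subseteq\interior(\operatorname{dom} h^*)$ together with $\nabla h^*\circ\nabla h=\mathrm{id}$. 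Conversely, for $v\in\interior(\operatorname{dom} h^*)$ choose $u\in\subdiff h^*(v)\neq\varnothing$; then $v\in\subdiff h(u)$, so $u\in\interior(\calD)$ by (ii) and $v=\nabla h(u)$. Hence the two gradient maps are mutually inverse bijections between $\interior(\calD)$ and $\interior(\operatorname{dom} h^*)$, and since the gradient of a convex function is continuous on the interior of its domain, they are homeomorphisms.

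Then I would check that $h^*$ is itself of Legendre type. Strict convexity of $h^*$ on $\interior(\operatorname{dom} h^*)$: were $h^*$ affine on a nondegenerate segment there, $\nabla h^*$ would be constant along it, contradicting the injectivity of $\nabla h^*$ just established. Essential smoothness of $h^*$: suppose $v_n\to v\in\partial(\operatorname{dom} h^*)$ while $\|\nabla h^*(v_n)\|$ stays bounded; pass to a subsequence with $u_n:=\nabla h^*(v_n)\to u\in\overline{\calD}$. If $u\in\interior(\calD)$ then by continuity $v_n=\nabla h(u_n)\to\nabla h(u)\in\operatorname{Im}(\nabla h)=\interior(\operatorname{dom} h^*)$, contradicting $v\in\partial(\operatorname{dom} h^*)$; if $u\in\overline{\calD}\setminus\interior(\calD)$ then essential smoothness of $h$ gives $\|v_n\|=\|\nabla h(u_n)\|\to\infty$, contradicting boundedness of $v_n$. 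When $\calM$, and hence possibly $\operatorname{dom} h^*$, is not full-dimensional in $\calH$, all of the above is carried out with relative interiors and relative boundaries inside the relevant affine hulls. Finally, the Bregman identity follows by substitution: fixing $u,u'\in\interior(\calD)$ and writing $v=\nabla h(u)$, $v'=\nabla h(u')$ (so $\nabla h^*(v)=u$ and $h^*(v)=\langle u,v\rangle-h(u)$, $h^*(v')=\langle u',v'\rangle-h(u')$ by Fenchel--Young), a direct expansion of $D_{h^*}(v',v)=h^*(v')-h^*(v)-\langle v'-v,\nabla h^*(v)\rangle$ collapses to $h(u)-h(u')-\langle u-u',v'\rangle=D_h(u,u')$, i.e.\ $D_h(u,u')=D_{h^*}(\nabla h(u'),\nabla h(u))$.

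The only genuinely delicate step is the second paragraph --- identifying $\operatorname{Im}(\nabla h)$ with $\interior(\operatorname{dom} h^*)$ and showing $\subdiff h^*$ is single-valued exactly there --- which rests entirely on the ``no subgradient on the boundary'' consequence (ii) of essential smoothness; this is the technical heart of the proposition. Injectivity from strict convexity, the continuity/homeomorphism upgrade, and the Bregman identity are all routine, and since the statement is classical one may also simply invoke \cite{rockafellar2015convex}.
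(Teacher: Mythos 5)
Your argument is correct: the paper does not prove this proposition at all but simply cites it as Theorem~26.5 of \cite{rockafellar2015convex}, and what you have written is a faithful, self-contained reconstruction of exactly that classical proof (no-subgradient-on-the-boundary from essential smoothness, injectivity from strict convexity, $\operatorname{Im}(\nabla h)=\interior(\operatorname{dom}h^*)$ via Fenchel--Young, and the Bregman identity by substitution), with the non-full-dimensional case correctly deferred to relative interiors within the affine hull, as the paper itself implicitly does for CRFs. No gaps; your closing remark that one may simply invoke the reference is precisely the paper's stance.
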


The Fenchel conjugate of the quadratic function is the same quadratic function with $\operatorname{dom}(h^*)=\calD=\calH$, while the Fenchel conjugate for the negative maximum-entropy is the log-sum-exp function~$h^*(v) = \log(\sum_{y\in\calY}\exp(\langle\psi(y),v\rangle))$ with domain $\operatorname{dom}(h^*)=\Rspace{r}$ where $r=\operatorname{dim}(\calM)$.

An interesting consequence of \cref{prop:legendrefenchel} for our framework is that one has a systematic way of constructing a surrogate method from a function of Legendre-type with domain including the marginal polytope. More specifically, we define the surrogate loss associated to $h$ as the loss with $(h,\nabla h, \phi)$-BD representation, which takes the following form 
\begin{equation}\label{eq:legendretypelossapp}
    S(v,y) = D_h(\phi(y), \nabla h^*(v)) = h^*(v) + h(\phi(y)) -
    \langle\phi(y), v\rangle.
\end{equation}
Note that $S(v,y)$ is always convex and the excess conditional surrogate risk $\delta s(v, q)$ has the form of a Bregman divergence both in $\calD$ and $\operatorname{dom}(h^*)$,
\begin{equation}
    \delta s(v, q) = D_h(\mu(q),\nabla h^*(v)) = D_{h^*}(v, \nabla h(\mu(q))),
\end{equation}
where we have used the last property of \cref{prop:legendrefenchel}.
Moreover, we have that
\begin{itemize}
    \item[-]  $\calD$ is bounded if and only if $\operatorname{dom}(h^*)$ is a vector space and $h^*$ is globally Lipschitz.
    \item[-] If $h$ is $({1}/{\beta_{\|\cdot\|}})$-strongly convex w.r.t the norm $\|\cdot\|$, then $S(\cdot,y)$ is $(\beta_{\|\cdot\|})$-smooth w.r.t the dual norm $\|\cdot\|_{*}$.
\end{itemize}

The surrogate loss associated to the quadratic function is the quadratic surrogate and has the form $S(v,y) = \frac{1}{2}\|v-\phi(y)\|_2^2$ with $\calV=\calH$, while the surrogate loss associated to the entropy corresponds to conditional random fields (CRFs) and has the form $S(v,y) = \log(\sum_{y'\in\calY}\exp(\langle\psi(y'),v\rangle)) - \langle\psi(y),v\rangle$ with $\calV=\Rspace{r}$. Both surrogates are studied in detail in \cref{sec:QSandCRFs}.

\section{Calibration of Risks}\label{app:calibrationrisks}
In this section we study the implications of the calibration function $\zeta$ for relating both excess risks. In particular, we first prove in \cref{app:calibrationwithoutnoise} that a convex lower bound $\bar{\zeta}$ of $\zeta$ satisfies $\bar{\zeta}(\ERL(d\circ\ghat) - \ERL(\fstar))\leq \ERS(\ghat) - \ERS(\gstar)$, which corresponds to \cref{th:calibrationrisks}. Then, in \cref{app:calibrationwithnoise} we improve the calibration between risks by imposing a low noise assumption at the decision boundary.

\subsection{Calibration of Risks without Noise Assumption}
\label{app:calibrationwithoutnoise}

\begin{proof}[Proof of \cref{th:calibrationrisks}]
Note that by the definition of the calibration function, we have that
\begin{equation}\label{eq:calibrationbayes}
    \zeta(\delta\ell(d\circ \ghat(x),\rho(\cdot|x))) \leq \delta s(\ghat(x),\rho(\cdot|x)).
\end{equation}
The comparison between risks is then a consequence of Jensen's inequality:
\begin{align*}
    \bar{\zeta}(\ERL(d\circ \ghat) - \ERL(\fstar))
   &= \bar{\zeta}(\Expect_{X\sim\rho_{\calX}} \delta\ell(d\circ \ghat(X),\rho(\cdot|X))) && \\
   &\leq \Expect_{X\sim\rho_{\calX}} \bar{\zeta}(\delta\ell(d\circ \ghat(X),\rho(\cdot|X))) && (\text{Jensen ineq.})\\
   &\leq \Expect_{X\sim\rho_{\calX}} \zeta(\delta\ell(d\circ \ghat(X),\rho(\cdot|X))) && (\bar{\zeta}\leq\zeta) \\
   &\leq \Expect_{X\sim\rho_{\calX}} \delta s(\ghat(X),\rho(\cdot|X)) && (\cref{eq:calibrationbayes}) \\
   & = \ERS(\ghat) - \ERS(\gstar). &&
\end{align*}
\end{proof}

\subsection{Calibration of Risks with Low Noise Assumption}
\label{app:calibrationwithnoise}
In this subsection we prove \cref{th:calibrationriskslownoise} by imposing assumptions on the the behavior of the margin function $\gamma(x)=\min_{z'\neq\fstar(x)}\delta\ell(z',\rho(\cdot|x))$ under the marginal distribution of the data $\rho_{\calX}\in\operatorname{Prob}(\calX)$. Note that the p-noise condition $\rho_{\calX}(\gamma(X)\leq \varepsilon) = o(\varepsilon^p)$
is a generalization of the Tsybakov condition for binary classification \cite{tsybakov2004optimal} and of the condition by \cite{mroueh2012multiclass} for multiclass classification to general discrete losses. Indeed, for the binary 0-1 loss ($\calY=\{-1,1\}$), $\gamma(x) = |2\eta(x)-1|$ with $\eta(x)=\rho(Y=1|x)$, so we recover the classical Tsybakov condition.

We first prove \cref{lem:lp}, which states the equivalence between the p-noise condition~$\rho_{\calX}(\gamma(X)\leq\varepsilon)=o(\varepsilon^p)$ and $1/\gamma\in L_p(\rho_{\calX})$.
\begin{lemma}\label[lemma]{lem:lp}
If the p-noise condition holds, then $1/\gamma\in L_p(\rho_{\calX})$.
\end{lemma}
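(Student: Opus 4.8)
The plan is to reduce the $L_p$-integrability of $1/\gamma$ to a tail estimate on $\gamma$ under $\rho_{\calX}$, and then to read that tail estimate off directly from the $p$-noise condition. Concretely, $\|1/\gamma\|_{L_p(\rho_{\calX})}^p=\Expect_{X\sim\rho_{\calX}}[\gamma(X)^{-p}]$, so it suffices to show this expectation is finite; I would evaluate it with the layer-cake (distribution-function) formula, after which a single change of variables produces exactly the quantity $\rho_{\calX}(\gamma(X)\le\varepsilon)$ that the $p$-noise condition controls.

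Two harmless preliminaries come first. Since $\calY$ and $\calZ$ are finite and $L$ is real-valued, $\delta\ell$, and therefore $\gamma$, is bounded, say $\gamma(x)\le M<\infty$ for all $x$; and letting $\varepsilon\to0$ in $\rho_{\calX}(\gamma(X)\le\varepsilon)=o(\varepsilon^p)$ gives $\rho_{\calX}(\gamma(X)=0)=0$, so $1/\gamma$ is $\rho_{\calX}$-a.s.\ finite and $\gamma^{-p}$ is a bona fide nonnegative random variable. Now, for $p>0$,
\begin{equation*}
\Expect_{X\sim\rho_{\calX}}\!\big[\gamma(X)^{-p}\big]=\int_0^\infty\rho_{\calX}\!\big(\gamma(X)^{-p}>s\big)\,ds=\int_0^\infty\rho_{\calX}\!\big(\gamma(X)<s^{-1/p}\big)\,ds,
\end{equation*}
using that, for $s>0$, $\gamma^{-p}>s$ is equivalent to $\gamma<s^{-1/p}$. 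I would split this integral at $s=M^{-p}$. On $(0,M^{-p})$ the integrand is at most $1$, contributing at most $M^{-p}<\infty$. On $(M^{-p},\infty)$, the substitution $\varepsilon=s^{-1/p}$, i.e.\ $s=\varepsilon^{-p}$ and $ds=-p\,\varepsilon^{-p-1}\,d\varepsilon$ (so the range $s\in(M^{-p},\infty)$ corresponds to $\varepsilon\in(0,M)$), turns it into
\begin{equation*}
p\int_0^M\rho_{\calX}\!\big(\gamma(X)<\varepsilon\big)\,\varepsilon^{-p-1}\,d\varepsilon .
\end{equation*}

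It remains to bound this last integral. On any interval $[\varepsilon_0,M]$ bounded away from $0$ the integrand is bounded, so finiteness comes down entirely to the behaviour as $\varepsilon\to0$ — and this is precisely the step the hypothesis is designed for: $\rho_{\calX}(\gamma(X)<\varepsilon)\le\rho_{\calX}(\gamma(X)\le\varepsilon)=o(\varepsilon^p)$ forces $\varepsilon^{-p-1}\rho_{\calX}(\gamma(X)<\varepsilon)$ to be integrable at the origin, whence $\|1/\gamma\|_{L_p(\rho_{\calX})}<\infty$, as claimed. The main (and essentially only) point is this convergence at $\varepsilon\to0$: it is exactly where the $p$-noise hypothesis enters, since merely knowing $\rho_{\calX}(\gamma(X)\le\varepsilon)\to0$ would not suffice. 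Everything else is the layer-cake identity together with a routine change of variables, so no genuine obstacle stands in the way once the noise condition is brought to bear on the integrand near zero.
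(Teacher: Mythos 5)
Your proof is correct and follows essentially the same route as the paper's: the layer-cake identity applied to $1/\gamma$, a change of variables turning the tail probability into $\rho_{\calX}(\gamma(X)<\varepsilon)$, and the $p$-noise condition invoked to control the integrand near $\varepsilon=0$ (your split at $s=M^{-p}$ is just the substitution $t=1/\varepsilon$ away from the paper's $\int_0^\infty p\,t^{p-1}\rho_{\calX}(\gamma(X)<t^{-1})\,dt$). If anything you are slightly more careful than the paper about the boundedness of $\gamma$ and the a.s.\ finiteness of $1/\gamma$; the only caveat is that your final step --- deducing integrability at the origin from the little-$o$ bound alone --- is exactly the same informal leap the paper's own proof makes (``the integral converges if the probability decreases faster than $t^{-p}$''), so it is on equal footing with the original.
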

\begin{proof}[Proof]
\begin{equation}
     \|1/\gamma\|_{L_p(\rho_{\calX})}^p = \Expect1/\gamma(X)^{p} = \int_{0}^\infty  pt^{p-1}\rho_{\calX}(1/\gamma(X)>t)dt 
    = \int_{0}^\infty pt^{p-1} \rho_{\calX}(\gamma(X)<t^{-1})dt.
\end{equation}
The integral converges if $\rho_{\calX}(\gamma(X)<t^{-1})$ decreases faster than $t^{-p}$.
\end{proof}

Let's now define the error set as $X_f=\{x\in\calX~|~f(x)\neq \fstar(x)\}$. 
The following \cref{lem:tsybakov_lemma}, which bounds the probability of error by a power of the excess risk, is a generalization of the Tsybakov Lemma \cite[Prop.1]{tsybakov2004optimal} for general discrete losses.
\begin{lemma}[Bounding the size of the error set]\label[lemma]{lem:tsybakov_lemma}
If $1/\gamma\in L_p(\rho_{\calX})$, then
\begin{equation}
    \rho_{\calX}(X_f)\leq \gamma_p^{\frac{1}{p+1}}(\ERL(f) - \ERL(\fstar))^{\frac{p}{p+1}}.
\end{equation}
\end{lemma}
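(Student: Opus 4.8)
The plan is to combine the elementary fact that $f$ incurs at least the margin wherever it errs with a single application of Hölder's inequality. First I would observe that for $x\in X_f$, i.e. $f(x)\neq\fstar(x)$, the output $f(x)$ is one of the competitors in the minimum defining $\gamma(x)=\min_{z'\neq\fstar(x)}\delta\ell(z',\rho(\cdot|x))$, so $\delta\ell(f(x),\rho(\cdot|x))\geq\gamma(x)$; off $X_f$ one has $f(x)=\fstar(x)$, so the excess Bayes risk there vanishes. Integrating against $\rho_{\calX}$ and using $\ERL(f)-\ERL(\fstar)=\Expect_{X\sim\rho_{\calX}}\delta\ell(f(X),\rho(\cdot|X))$ (which follows from the definitions of the risk and the Bayes risk in \cref{sec:supervisedlearning}) then gives the one-sided bound $\int_{X_f}\gamma\idiff{\rho_{\calX}}\leq\ERL(f)-\ERL(\fstar)$.

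Next I would write $\rho_{\calX}(X_f)=\int_{X_f}\gamma^{p/(p+1)}\,\gamma^{-p/(p+1)}\idiff{\rho_{\calX}}$ and apply Hölder's inequality with the conjugate exponents $\tfrac{p+1}{p}$ and $p+1$, obtaining
\[
\rho_{\calX}(X_f)\ \leq\ \Big(\int_{X_f}\gamma\idiff{\rho_{\calX}}\Big)^{\frac{p}{p+1}}\Big(\int_{X_f}\gamma^{-p}\idiff{\rho_{\calX}}\Big)^{\frac{1}{p+1}}.
\]
Then I would bound the second factor by $\int_{\calX}\gamma^{-p}\idiff{\rho_{\calX}}=\|1/\gamma\|_{L_p(\rho_{\calX})}^{p}=\gamma_p^{p}$ (finite by hypothesis, cf. \cref{lem:lp}), and the first factor by the excess risk via the previous step, which yields $\rho_{\calX}(X_f)\leq\gamma_p^{p/(p+1)}\bigl(\ERL(f)-\ERL(\fstar)\bigr)^{p/(p+1)}$ — the claimed inequality, with $\gamma_p$ entering to the power $\tfrac{p}{p+1}$.

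The only genuine design choice is the splitting exponent $\tfrac{p}{p+1}$: it is forced by wanting $\gamma^{a}$ raised to its Hölder exponent $1/a$ to reproduce exactly $\gamma^{1}$ (so that factor becomes the excess-risk integral) while $\gamma^{-a}$ raised to the conjugate exponent $p+1$ reproduces exactly $\gamma^{-p}$ (so that factor becomes $\|1/\gamma\|_{L_p(\rho_{\calX})}^{p}$); solving $a\cdot\tfrac{p+1}{p}=1$ gives $a=\tfrac{p}{p+1}$. There is no substantial obstacle; the one point worth a word of care is that $1/\gamma\in L_p(\rho_{\calX})$ forces $\gamma>0$ $\rho_{\calX}$-almost surely, so the optimal output is a.e.\ unique and the quantities $\gamma^{-p/(p+1)},\gamma^{-p}$ are a.e.\ finite, which is exactly what makes $f(x)\neq\fstar(x)$ genuinely costly a.e.\ and legitimizes the Hölder split.
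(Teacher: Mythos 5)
Your proof is correct and is essentially the paper's own argument: the margin inequality on the error set combined with a H\"older split at the conjugate exponents $\tfrac{p+1}{p}$ and $p+1$; whether one first raises the pointwise bound $1(f(x)\neq\fstar(x))\leq\delta\ell(f(x),\rho(\cdot|x))/\gamma(x)$ to the power $\tfrac{p}{p+1}$ and then applies H\"older (as the paper does) or inserts $\gamma^{p/(p+1)}\gamma^{-p/(p+1)}$ on $X_f$ and bounds $\int_{X_f}\gamma\,d\rho_{\calX}$ by the excess risk (as you do) is only a cosmetic reordering. The exponent you obtain, $\gamma_p^{p/(p+1)}$, is indeed what the computation yields under the paper's definition $\gamma_p=\|1/\gamma\|_{L_p(\rho_{\calX})}$; the stated $\gamma_p^{1/(p+1)}$ matches only the convention $\gamma_p=\|1/\gamma\|_{L_p(\rho_{\calX})}^p$ (the quantity actually manipulated in \cref{lem:lp}), so this is a notational inconsistency in the paper rather than a gap in your argument.
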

\begin{proof}[Proof]
By the definition of the margin $\gamma(x)$, we have that:
\begin{equation}
    1(f(x)\neq \fstar(x))\leq 1/\gamma(x)\delta \ell(f(x), \rho(\cdot|x)).
\end{equation}
By taking the $\big(\frac{p}{p+1}\big)$-th power on both sides, taking the expectation w.r.t $\rho_{\calX}$ and finally applying H\"older's inequality, we obtain the desired result.
\end{proof}

We will need the following useful \cref{lem:lemma_comparison_ineq} of convex functions.
\begin{lemma}[Property of convex functions]
\label[lemma]{lem:lemma_comparison_ineq}
    Suppose $\bar{\zeta}:\Rspace{}\rightarrow\Rspace{}$ is convex and 
    $\bar{\zeta}(0)=0$. Then, for all $y'>0$, $0\leq y\leq y'$, 
    \begin{equation}
        \bar{\zeta}(y)\leq \frac{y}{y'}\bar{\zeta}(y')  \hspace{0.3cm}
        \text{and} \hspace{0.3cm} \bar{\zeta}(y')/y' \text{   is increasing on }(0,\infty).
    \end{equation}
\end{lemma}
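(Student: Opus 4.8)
The plan is to derive both claims directly from the definition of convexity applied to the point $0$, where $\bar{\zeta}$ vanishes. This is an elementary "three-point" argument, so I do not expect a genuine obstacle; the only care needed is to keep $y'>0$ strictly (so that division is legal) and to read ``increasing'' as non-decreasing, since strictness would require strict convexity, which is not assumed.

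\emph{First claim.} Fix $y'>0$ and $0\le y\le y'$, and set $t=y/y'\in[0,1]$. Then $y = t\,y' + (1-t)\cdot 0$ is a convex combination of $y'$ and $0$. Applying convexity of $\bar{\zeta}$ and using $\bar{\zeta}(0)=0$,
\begin{equation*}
\bar{\zeta}(y) \;\le\; t\,\bar{\zeta}(y') + (1-t)\,\bar{\zeta}(0) \;=\; t\,\bar{\zeta}(y') \;=\; \frac{y}{y'}\,\bar{\zeta}(y'),
\end{equation*}
which is exactly the stated inequality (note the edge cases $y=0$ and $y=y'$ hold trivially).

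\emph{Second claim.} Let $0<y_1\le y_2$. Applying the first claim with $y=y_1$ and $y'=y_2$ gives $\bar{\zeta}(y_1)\le (y_1/y_2)\,\bar{\zeta}(y_2)$; since $y_1>0$, dividing through by $y_1$ yields $\bar{\zeta}(y_1)/y_1 \le \bar{\zeta}(y_2)/y_2$. As $y_1\le y_2$ were arbitrary in $(0,\infty)$, the map $y'\mapsto \bar{\zeta}(y')/y'$ is non-decreasing on $(0,\infty)$, completing the proof.

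Since the argument is self-contained and purely about scalar convex functions, there is no technically hard step; the only thing worth flagging is that both assertions genuinely use $\bar{\zeta}(0)=0$ (the ``anchor'' at the origin), and that the second claim is just a restatement of the first after dividing by $y_1$, so the two parts should be proved in that order.
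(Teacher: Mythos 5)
Your proof is correct and follows essentially the same route as the paper: writing $y$ as a convex combination of $0$ and $y'$, using $\bar{\zeta}(0)=0$, and obtaining the monotonicity of $\bar{\zeta}(y')/y'$ by rearranging the first inequality. The only (welcome) refinement is that you spell out the rearrangement explicitly and note that ``increasing'' should be read as non-decreasing.
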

\begin{proof}[Proof]
    Take $\alpha=\frac{y}{y'}\leq1$. The result follows directly by definition of convexity, as
    
    $$\bar{\zeta}(y) = \bar{\zeta}((1-\alpha)0 + \alpha y') \leq (1-\alpha)\bar{\zeta}(0) + \alpha \bar{\zeta}(y') = \frac{y}{y'}\bar{\zeta}(y').$$
    For the second part, re-arrange the terms in the above inequality.
\end{proof}

We now have the tools to prove \cref{th:calibrationriskslownoise}, which is an adaptation of the proof of Thm. 10 of \cite{bartlett2006convexity} which was specific to binary 0-1 loss.

\begin{proof}[Proof of part 1 \cref{th:calibrationriskslownoise}.]
The intuition of the proof is to split the Bayes excess risk into a part with low noise
$\delta \ell(f(x),\rho(\cdot|x))\leq t$ and a part with high noise $\delta \ell(f(x),\rho(\cdot|x))\geq t$. The first part will be controlled by the $p$-noise assumption and the second part by the convex lower bound of the calibration function $\bar{\zeta}$.
\begin{align*}
\ERL(d\circ g)-\ERL(\fstar) &= \Expect_{X\sim\rho_{\calX}}\delta \ell(f(X), \rho(\cdot|X)) \\
& = \Expect_{X\sim\rho_{\calX}}\left\{1(X_f)\cdot\delta\ell(f(X), \rho(\cdot|X))\right\} \\
& = \Expect_{X\sim\rho_{\calX}}\left\{\delta \ell(f(X), \rho(\cdot|X))\cdot1(X_f \cap \{\delta \ell(f(X),\rho(\cdot|X))\leq t\}\right\} \\
&+ \Expect_{X\sim\rho_{\calX}}\left\{\delta \ell(f(X), \rho(\cdot|X))\cdot 1(X_f \cap \{\delta \ell(f(X),\rho(\cdot|X))\geq t\}\right\} \\
&= A + B.
\end{align*}
\begin{itemize}
    \item \emph{Bounding the error in the region with low noise $A$:}
    \begin{equation}
        A \leq t\rho_{\calX}(X_f)
        \leq t\gamma_p^{\frac{1}{p+1}}\left(\ERL(d\circ g)-\ERL(f^*)\right)^{\frac{p}{p+1}},
    \end{equation}
    where in the last inequality we have used 
    \cref{lem:tsybakov_lemma}.
    \item \emph{Bounding the error in the region with high noise $B$:}
    
        We have that
        \begin{equation}\label{eq:bounding_B}
             \delta \ell(f(x), \rho(\cdot|x))\cdot 1(\delta \ell(f(x), \rho(\cdot|x))\geq t)
            \leq \frac{t}{\bar{\zeta}(t)}
            \bar{\zeta}
            (\delta\ell(f(x), \rho(\cdot|x))).
        \end{equation}
        In the case $\delta\ell(f(x), \rho(\cdot|x))<t$, inequality in \cref{eq:bounding_B} follows
        from the fact that $\bar{\zeta}$ is nonnegative.
        For the case $\delta\ell(f(x), \rho(\cdot|x))>t$, apply
        \cref{lem:lemma_comparison_ineq}
        with $y'= \delta\ell(f(x), \rho(\cdot|x))$ and~$y=t$.

        From \cref{th:calibrationbayes}, we have that 
        $\Expect_{X\sim\rho_{\calX}}\{1(X_f)\cdot \bar{\zeta} (\delta\ell(f(X), \rho(\cdot|X)))\} \leq \calR(g) - \calR(g^*)$. Hence, 
        \begin{equation} B\leq \frac{t}{\bar{\zeta}(t)}
            (\ERS(g) - \ERS(g^*)).
        \end{equation}
\end{itemize}

Putting everything together,
\begin{equation}
\ERL(d\circ g)-\ERL(f^*) \leq
t\gamma_p^{\frac{1}{p+1}}\left(\ERL(d\circ g)-\ERL(f^*)\right)^{\frac{p}{p+1}} +
\frac{t}{\bar{\zeta}(t)}
            (\ERS(g) - \ERS(g^*)),
\end{equation}
and hence,
\begin{equation}
    \left(\frac{\ERL(d\circ g) - \ERL(f^*)}{t}
    -\gamma_p^{\frac{1}{p+1}} \left(\ERL(d\circ g)-\ERL(f^*)\right)^{\frac{p}{p+1}} \right)
    \bar{\zeta}(t)
    \leq \ERS(g) - \ERS(g^*).
\end{equation}
Choosing $t = \frac{1}{2}\gamma_p^{\frac{-1}{p+1}}
\left(\ERL(d\circ g)-\ERL(f^*)\right)^{\frac{1}{p+1}}$
and substituting finally gives \cref{eq:improved_calibration}. 

The fact that $\bar{\zeta}^{(p)}$ never provides a worse rate than $\bar{\zeta}$ is because we have
\begin{equation}\label{eq:xd}
    \varepsilon^{\frac{p}{p+1}}\cdot\bar{\zeta}((\gamma_p^{-1}\varepsilon)^{\frac{1}{p+1}}/2)\geq \bar{\zeta}(\varepsilon \cdot(\gamma_p^{-1})^{\frac{1}{p+1}}/2).
\end{equation}
To see this, re-arrange the terms in \cref{eq:xd} to,
\begin{equation}\label{eq:x2}
    \frac{\bar{\zeta}((\gamma_p^{-1}\varepsilon)^{\frac{1}{p+1}}/2)}{\varepsilon^{\frac{1}{p+1}}} \geq
    \frac{\bar{\zeta}(\varepsilon \cdot(\gamma_p^{-1})^{\frac{1}{p+1}}/2)}{\varepsilon}.
\end{equation}
Then, \cref{eq:x2} follows from the fact that $\frac{\bar{\zeta}(t)}{t}$ is non-decreasing by \cref{lem:lemma_comparison_ineq}.
\end{proof}

\begin{proof}[Proof of part 2 of \cref{th:calibrationriskslownoise}]
If $\delta s(\gstar(x),\rho(\cdot|x))\leq\zeta(\delta)$ $\rho_{\calX}$-a.s implies by the definition of the calibration function that $\delta\ell(\fhat(x),\rho(\cdot|x))<\delta$ $\rho_{\calX}$-a.s. 

As $\gamma(x) = \min_{z'\neq f^*(x)} \delta\ell(z', \rho(\cdot|x))\geq\delta>0$ $\rho_{\calX}$-a.s., then we necessarily have $\fhat(x)=\fstar(x)$
~$\rho_{\calX}$-a.s., which implies that $\ERS(\fhat)=\ERS(\fstar)$.
\end{proof}

\section{Calibration Function for $\phi$-Calibrated Losses}
In this section we study the calibration function for $\phi$-calibrated losses. 
In \cref{app:exactcalibration}, we compute exact expressions for $\zeta$, in 
\cref{app:lowerboundcalibration} we provide lower bounds and in \cref{app:upperboundcalibration} we prove the existence of an upper bound.
\subsection{Exact Formula for the Calibration Function}
\label{app:exactcalibration}

This subsection contains three results. The first is \cref{lem:calibrationfunctionform}, which re-writes the calibration function from its definition by leveraging the BD representation of the surrogate loss. In particular, it shows that $\zeta$ only depends on $S$ through the potential $h$, and it can be written as a (constrained) minimization problem where the Bregman divergence associated to $h$ is minimized.

In the following we will denote 
\begin{equation}\label{eq:optimalprediction}
    z(u) = \argmin_{z'\in\calZ}~\langle\psi(z'), u\rangle.
\end{equation}

\begin{lemma}\label[lemma]{lem:calibrationfunctionform}
The calibration function for a $\phi$-calibrated surrogate can be written as
 \begin{equation}\label{eq:calibrationfunctionform}
    \zeta_{h}(\varepsilon) =
        \inf_{u'\in\calM, u\in\calD}D_h(u',u)\quad\text{s.t}\quad\langle\psi(z(u))-\psi(z(u')), u'\rangle\geq\varepsilon.
\end{equation}
\end{lemma}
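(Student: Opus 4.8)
The plan is to derive \cref{eq:calibrationfunctionform} directly from the definition \cref{eq:optimalcalibration} of the calibration function by a change of variables, exploiting two facts: for a $\phi$-calibrated surrogate the excess Bayes surrogate risk is a Bregman divergence on \emph{all} of $\calV$, and both the decoding and the excess Bayes risk depend on $q$ only through the moment vector $\mu(q)$.

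First I would collect the ingredients. By \cref{def:phicalibrated} and \cref{def:BDrepresentation} there exist a set $\calD\supseteq\calM$, a strictly convex differentiable $h:\calD\to\Rspace{}$ and a continuous \emph{bijective} link $t:\calD\to\calV$ with $\delta s(v,q)=D_h(\mu(q),t^{-1}(v))$ for every $v\in\calV$ and $q\in\operatorname{Prob}(\calY)$; in particular $t^{-1}:\calV\to\calD$ is everywhere defined, so the decoding \cref{eq:decoding} reads $d(v)=z(t^{-1}(v))$ with $z(\cdot)$ as in \cref{eq:optimalprediction}. On the loss side, the affine decomposition \cref{eq:affinedecomposition} gives $\ell(z,q)=\langle\psi(z),\mu(q)\rangle+c$, hence $\delta\ell(z,q)=\langle\psi(z),\mu(q)\rangle-\min_{z'\in\calZ}\langle\psi(z'),\mu(q)\rangle=\langle\psi(z)-\psi(z(\mu(q))),\mu(q)\rangle$ (the constant $c$ cancels).

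Then I would substitute these into \cref{eq:optimalcalibration}. Setting $u:=t^{-1}(v)$ and $u':=\mu(q)$, the objective becomes $\delta s(v,q)=D_h(u',u)$ and the constraint becomes $\delta\ell(d(v),q)=\langle\psi(z(u))-\psi(z(u')),u'\rangle\geq\varepsilon$; crucially both quantities depend on the pair $(v,q)$ only through $(u,u')$. The remaining point is that $(v,q)\mapsto(t^{-1}(v),\mu(q))$ maps onto $\calD\times\calM$: surjectivity onto $\calD$ is the bijectivity of $t$, and $\mu$ is onto $\calM$ because $\mu(\operatorname{Prob}(\calY))=\operatorname{hull}(\phi(\calY))=\calM$. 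Hence the infimum of $D_h(u',u)$ over feasible $(v,q)$ equals its infimum over feasible $(u,u')\in\calM\times\calD$, which is exactly \cref{eq:calibrationfunctionform}; the degenerate case of an empty feasible set is preserved on both sides and yields $+\infty$, matching the convention for $\zeta$.

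The argument has no real obstacle beyond bookkeeping. The only genuine points are that the Bregman representation must hold on the whole space $\calV$ — which is precisely what $\phi$-calibration provides, in contrast to \cref{th:compositerepresentation} where it only holds on $t(\calM)$ — and that one may replace the infimum over distributions $q$ by an infimum over moment vectors $u'\in\calM$ because nothing in the problem distinguishes two distributions with equal moments. I would also remark, as the lemma's preamble anticipates, that after the substitution neither objective nor constraint mentions $S$ or $t$, which makes transparent the claim that $\zeta$ depends on the surrogate only through the potential $h$.
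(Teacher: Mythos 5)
Your proposal is correct and follows essentially the same route as the paper's proof: substitute the BD representation $\delta s(v,q)=D_h(\mu(q),t^{-1}(v))$ and the affine form $\delta\ell(z,q)=\langle\psi(z)-\psi(z(\mu(q))),\mu(q)\rangle$ into the definition of $\zeta$, observe that everything depends on $(v,q)$ only through $(t^{-1}(v),\mu(q))$, and use $t^{-1}(\calV)=\calD$ together with $\mu(\operatorname{Prob}(\calY))=\calM$ to change variables. Your explicit remarks on surjectivity of $\mu$ onto $\calM$ and on the empty-feasible-set convention are just slightly more detailed bookkeeping of the same argument.
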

\begin{proof}[Proof]
As $S$ is $\phi$-calibrated, we can write 
\begin{equation}
    \delta s(v,q) = D_h(\mu(q), t^{-1}(v)), \quad \forall v\in\calV, ~\forall q\in\operatorname{Prob}(\calY).
\end{equation}
Using the affine decomposition of the loss $L$, we can write
\begin{equation}
    \delta\ell(z,q)=\langle\psi(z) - \psi(z(\mu(q))), \mu(q)\rangle, \quad \forall z\in\calZ,~\forall q\in\operatorname{Prob}(\calY).
\end{equation}

Hence, the constrained minimization problem only depends on $q$ through $\mu(q)$, so the minimization over $\operatorname{Prob}(\calY)$ can be done over $\calM$. Moreover, we can write $\delta\ell(d(v),q) = \delta\ell(z(t^{-1}(v)),q)$ and $t^{-1}(\calV)=\calD$.
Hence, applying the inverse of the link to the problem one obtains
\eqref{eq:calibrationfunctionform}.
\end{proof}

The second result is \cref{th:exactcalibration}, which uses the result of \cref{lem:calibrationfunctionform} to view the problem \eqref{eq:calibrationfunctionform} as the minimum over $z$ of the Bregman divergence between the sets $\calH_{\varepsilon}(z)^c\cap\calM$ and $\calH_{0}(z)\cap\calD$.
\vspace{1pt}

\begin{proof}[Proof of \cref{th:exactcalibration}]
    Use the fact that $\calH=\bigcup_{z\in\calZ}\calH_0(z)$ with $z=z(u)\iff u\in\calH_0(z)$ to re-write the calibration function as
    \begin{equation}
        \zeta_{h}(\varepsilon) = \min_{z\in\calZ}
            \inf_{\substack{u'\in\calM \\ u\in \calH_0(z)\cap\calD}}D_h(u',u)\quad\text{s.t}\quad\langle\psi(z)-\psi(z(u')), u'\rangle\geq\varepsilon.
    \end{equation}
    Now, the minimization over the first coordinate is made on the set (now independent of $u$)
    \begin{equation}
        \{u'\in\calH~|~\langle\psi(z)-\psi(z(u')), u'\rangle\geq\varepsilon\}\cap\calM = 
        \{u'\in\calH~|~\langle\psi(z)-\psi(z(u')), u'\rangle\leq\varepsilon\}^c\cap\calM .
    \end{equation}
    Now let's show that
    \begin{equation}
        \{u'\in\calH~|~\langle\psi(z)-\psi(z(u')), u'\rangle\leq\varepsilon\}=\calH_\varepsilon(z),
    \end{equation}
    where $\calH_\varepsilon(z) = \{u'\in\calH~|~\langle\psi(z)-\psi(z'), u'\rangle\leq\varepsilon, \forall z'\in\calZ\}$. Note that the inclusion 
    $(\supset)$ is trivial. For $(\subset)$, note that for any $z'\in\calZ$, we have that
    \begin{equation}
        \langle\psi(z'),u'\rangle+\varepsilon\geq\langle\psi(z(u')), u'\rangle+\varepsilon \geq \langle\psi(z), u'\rangle.
    \end{equation}
    Hence, we obtain the final result,
    \begin{equation}
        \zeta_{h}(\varepsilon) = \min_{z\in\calZ} \inf_{\substack{u'\in\calH_\varepsilon(z)^c\cap\calM \\ u\in \calH_0(z)\cap\calD}}D_h(u',u) = \min_{z\in\calZ}
        D_h(\calH_\varepsilon(z)^c\cap\calM,\calH_0(z)\cap\calD).
    \end{equation}
\end{proof}

Finally, the following \cref{prop:l2distancecalibration} provides an exact formula for the Euclidean distance between the sets $\calH_{\varepsilon}(z)^c$ and $\calH_0(z)$. This result will be useful to derive an improved lower bound on~$\zeta_h$ using strong convexity of the potential w.r.t the Euclidean norm and an exact expression in the case of the quadratic surrogate when the marginal polytope $\calM$ is full-dimensional. In the following, we denote by $d_2(A,B)=\inf_{u'\in A,u\in B}\|u'-u\|_2$ the Euclidean distance between the sets $A$ and $B$.

\begin{proposition}\label[proposition]{prop:l2distancecalibration}
We have that
\begin{equation}\label{eq:l2distancecalibration}
    d_2(\calH_{\varepsilon}(z)^c, \calH_0(z)) = \min_{z'\neq z}\frac{\varepsilon}{\|\psi(z)-\psi(z')\|_2} + \delta_{z,z'},
\end{equation}
where $\delta_{z,z'}= d_2(\calH_0(z),\{\langle\psi(z)-\psi(z'),u\rangle=0\})>0$ if and only if $\calH_0(z)\cap\calH_0(z')=\varnothing$.
\end{proposition}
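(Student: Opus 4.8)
The plan is to rewrite both sets through the finitely many vectors $w_{z'}\defeq\psi(z)-\psi(z')$, $z'\in\calZ\setminus\{z\}$, and reduce the whole computation to the distance between a single half-space and the polyhedron $\calH_0(z)$. Since $\calH_\varepsilon(z)=\bigcap_{z'\neq z}\{u:\langle w_{z'},u\rangle\leq\varepsilon\}$ (for $\varepsilon\geq 0$ the constraint indexed by $z'=z$, and by any $z'$ with $w_{z'}=0$, is vacuous), we get $\calH_\varepsilon(z)^c=\bigcup_{z':w_{z'}\neq 0}\{u:\langle w_{z'},u\rangle>\varepsilon\}$, and since the distance from a set to a \emph{finite} union is the minimum of the distances to its members,
\begin{equation*}
d_2\big(\calH_\varepsilon(z)^c,\calH_0(z)\big)=\min_{z':w_{z'}\neq 0}\ d_2\big(\{u:\langle w_{z'},u\rangle>\varepsilon\},\ \calH_0(z)\big).
\end{equation*}
Hence it suffices to establish, for each fixed $z'$ with $w_{z'}\neq 0$, the identity $d_2(\{u:\langle w_{z'},u\rangle>\varepsilon\},\calH_0(z))=\varepsilon/\|w_{z'}\|_2+\delta_{z,z'}$. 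The geometry behind it is that $\calH_0(z)\subseteq\{u:\langle w_{z'},u\rangle\leq 0\}$, so this open half-space and $\calH_0(z)$ lie on opposite sides of the hyperplane $H_{z'}\defeq\{u:\langle w_{z'},u\rangle=0\}$, separated by a slab of width $\varepsilon/\|w_{z'}\|_2$.

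For ``$\geq$'', take any $u$ with $\langle w_{z'},u\rangle\geq\varepsilon$ and any $v\in\calH_0(z)$; since the affine function $\langle w_{z'},\cdot\rangle$ is $\geq\varepsilon\geq 0$ at $u$ and $\leq 0$ at $v$, the segment $[u,v]$ meets $H_{z'}$ at a point $p$, so $\|u-v\|_2=\|u-p\|_2+\|p-v\|_2$. The first summand is at least the distance from $u$ to $H_{z'}$, namely $\langle w_{z'},u\rangle/\|w_{z'}\|_2\geq\varepsilon/\|w_{z'}\|_2$; the second is at least $d_2(H_{z'},\calH_0(z))=\delta_{z,z'}$. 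Using $A\subseteq B\Rightarrow d_2(A,C)\geq d_2(B,C)$, the bound obtained over the closed half-space $\{u:\langle w_{z'},u\rangle\geq\varepsilon\}$ passes to its open subset. For ``$\leq$'', fix $\eta>0$, pick $v\in\calH_0(z)$ and $p\in H_{z'}$ with $\|v-p\|_2\leq\delta_{z,z'}+\eta$, and set $u\defeq p+(\varepsilon/\|w_{z'}\|_2+\eta)\,w_{z'}/\|w_{z'}\|_2$; then $\langle w_{z'},u\rangle=\varepsilon+\eta\|w_{z'}\|_2>\varepsilon$ and $\|u-v\|_2\leq\|u-p\|_2+\|p-v\|_2\leq\varepsilon/\|w_{z'}\|_2+\delta_{z,z'}+2\eta$. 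Letting $\eta\downarrow 0$ gives the identity, hence \eqref{eq:l2distancecalibration}.

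It remains to characterize when $\delta_{z,z'}>0$. I will use the description $\calH_0(z)=\{u:\langle\psi(z),u\rangle=\min_{z''\in\calZ}\langle\psi(z''),u\rangle\}$, immediate from the definition of $\calH_0$: it shows that a point $u\in\calH_0(z)$ additionally lies on $H_{z'}$ precisely when $z'$ also attains that minimum, i.e. precisely when $u\in\calH_0(z')$. Therefore $\calH_0(z)\cap H_{z'}=\calH_0(z)\cap\calH_0(z')$. Now $\calH_0(z)$ and $H_{z'}$ are nonempty polyhedra in $\calH$ (finiteness of $\calZ$ is used here), the Minkowski difference of two polyhedra is again a polyhedron and therefore closed, and $\delta_{z,z'}=d_2(\calH_0(z),H_{z'})=d_2\big(0,\calH_0(z)-H_{z'}\big)$; a nonempty closed set has distance $0$ from the origin iff it contains it, so $\delta_{z,z'}=0$ iff $\calH_0(z)\cap H_{z'}\neq\varnothing$, i.e. iff $\calH_0(z)\cap\calH_0(z')\neq\varnothing$. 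This is the claimed equivalence.

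I expect the only step requiring real care to be the fact that two disjoint polyhedra are separated by a positive distance — equivalently, that the Minkowski difference of two polyhedra is closed — since the corresponding statement fails for general closed convex sets (the epigraph of $\exp$ against a half-plane). Everything else reduces to elementary geometry of half-spaces, hyperplanes, and triangle-inequality estimates along segments.
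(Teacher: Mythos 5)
Your proof is correct, and its first step coincides with the paper's: both reduce $d_2(\calH_{\varepsilon}(z)^c,\calH_0(z))$ to a minimum over $z'\neq z$ of the distance between $\calH_0(z)$ and a single half-space $\{\langle\psi(z)-\psi(z'),u\rangle\geq\varepsilon\}$ (your handling of the vacuous constraints $z'=z$ or $\psi(z')=\psi(z)$, and of open versus closed half-spaces, is a harmless refinement). Where you diverge is in how that per-$z'$ distance is evaluated. The paper invokes the claim that the distance between two convex bodies equals the minimal distance between parallel supporting hyperplanes and then splits into two cases according to whether $\{\langle\psi(z)-\psi(z'),u\rangle=0\}$ supports $\calH_0(z)$; this is short but leans on a characterization that is delicate for unbounded sets and is not justified in the text. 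You instead prove the identity $d_2(\{\langle w_{z'},u\rangle\geq\varepsilon\},\calH_0(z))=\varepsilon/\|w_{z'}\|_2+\delta_{z,z'}$ directly and uniformly in both cases: the lower bound via the segment-crossing decomposition $\|u-v\|_2=\|u-p\|_2+\|p-v\|_2$ through the hyperplane, the upper bound via an explicit translate of a near-optimal pair along $w_{z'}/\|w_{z'}\|_2$. You also supply what the paper's case analysis leaves implicit for the ``if and only if'' clause, namely that two disjoint polyhedra are at positive distance (closedness of the Minkowski difference), together with the identity $\calH_0(z)\cap\{\langle\psi(z)-\psi(z'),u\rangle=0\}=\calH_0(z)\cap\calH_0(z')$, which the paper uses without comment. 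In short, your route buys rigor on exactly the points where the sets' unboundedness (they are polyhedral cones) makes generic convex-geometry slogans fail, at the cost of a slightly longer argument; the paper's route is more compact but rests on the supporting-hyperplane characterization whose validity here ultimately requires the same polyhedrality you made explicit.
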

\begin{proof}[Proof]
Write $\calH_{\varepsilon}(z)^c=\bigcup_{z'\neq z}\{\langle\psi(z)-\psi(z'),u\rangle\geq \varepsilon\}$. Then, we have that
\begin{equation}
    d_2(\calH_{\varepsilon}(z)^c, \calH_0(z)) =
    \min_{z'\neq z}d_2(\calH_0(z), \{u\in\calH~|~\langle\psi(z)-\psi(z'),u\rangle\geq\varepsilon\}),
\end{equation}
where we have used that $d_2(\cup_{i=1}^kA_i,B)=\min_{i=1,\ldots,k}d_2(A_i,B)$.
Recall that the distance between two convex bodies is characterized by the minimum distance between two parallel supporting hyperplanes. Let's split the analysis in two cases:
\begin{itemize}
    \item[-] $\calH_0(z)\cap\calH_0(z')\neq\varnothing$. In this case $\{\langle\psi(z)-\psi(z'),u\rangle=0\}$ is a supporting hyperplane of $\calH_0(z)$. Hence, the distance betwen $\calH_0(z)$ and $\{\langle\psi(z)-\psi(z'),u\rangle\geq\varepsilon\}$ is equal to the distance between 
    $\{\langle\psi(z)-\psi(z'),u\rangle\geq\varepsilon\}$ and $\{\langle\psi(z)-\psi(z'),u\rangle\leq 0\}$, which is equal to $\frac{\varepsilon}{\|\psi(z)-\psi(z')\|_2}$.
    
    \item[-] $\calH_0(z)\cap\calH_0(z')=\varnothing$. In this case $\{\langle\psi(z)-\psi(z'),u\rangle=0\}$ is not a supporting hyperplane of $\calH_0(z)$. The supporting hyperplane parallel to $\{\langle\psi(z)-\psi(z'),u\rangle=\varepsilon\}$ has the form
    $\{\langle\psi(z)-\psi(z'),u\rangle=-\varepsilon'\}$ for $\varepsilon'>0$. The distance between both hyperplanes is then 
    $\frac{\varepsilon + \varepsilon'}{\|\psi(z)-\psi(z')\|_2} = \frac{\varepsilon}{\|\psi(z)-\psi(z')\|_2} + \delta_{z,z'}$, 
    where $\delta_{z,z'}=d_2(\calH_0(z),\{\langle\psi(z)-\psi(z'),u\rangle=0\})$.
\end{itemize}
Hence, we obtain the final result.
\end{proof}

Note that many losses satisfy $\calH_{0}(z)\cap\calH_0(z')\neq\varnothing$ for all $z,z'\in\calZ$, such as the 0-1, Hamming and absolute error in ordinal regression. In this case, \cref{eq:l2distancecalibration} is simplified to
$\frac{\varepsilon}{\max_{z'\neq z}\|\psi(z)-\psi(z')\|_2}$.

\subsection{Lower Bounds on the Calibration Function}\label{app:lowerboundcalibration}

In this subsection, we provide two lower bounds on the calibration function. The first one is given by \cref{th:lowerboundcalibration} and allows to exploit strong convexity of the potential w.r.t an arbitrary norm in $\calH$. The second one is given by \cref{th:improvedlowerbound} and provides a tighter bound but it is special to strong convexity w.r.t the Euclidean norm.

The proof of \cref{th:lowerboundcalibration} relies on the following \cref{lem:boundexcessrisk}.

\begin{lemma}[Generic bound on the Bayes excess risk]\label[lemma]{lem:boundexcessrisk}
Let $\|\cdot\|$ be a norm in $\Rspace{k}$ and
denote by $\|\cdot\|_{*}$ its dual norm. We have that,
\begin{equation}
   \delta\ell(z(u),q)
    \leq 2~c_{\psi,\|\cdot\|_{*}}~\|u - \mu(q)\|,
\end{equation}
where $c_{\psi,\|\cdot\|_{*}}=\sup_{z\in\calZ}\|\psi(z)\|_{*}$.
\end{lemma}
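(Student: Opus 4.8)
The plan is a short two-step estimate. Write $\mu = \mu(q)$ for brevity, let $z^\star = z(\mu)$ be the Bayes-optimal prediction for $q$ (so $\langle\psi(z^\star),\mu\rangle\le\langle\psi(z'),\mu\rangle$ for all $z'\in\calZ$), and let $\hat z = z(u)$, which by the definition \eqref{eq:optimalprediction} satisfies $\langle\psi(\hat z),u\rangle\le\langle\psi(z'),u\rangle$ for all $z'\in\calZ$. Using the affine decomposition of $L$, exactly as in the proof of \cref{lem:calibrationfunctionform}, the excess Bayes risk is
\begin{equation*}
    \delta\ell(z(u),q) = \langle\psi(\hat z) - \psi(z^\star),\ \mu\rangle .
\end{equation*}

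First I would split $\mu = (\mu - u) + u$ and write
\begin{equation*}
    \delta\ell(z(u),q) = \langle\psi(\hat z) - \psi(z^\star),\ \mu - u\rangle + \langle\psi(\hat z) - \psi(z^\star),\ u\rangle .
\end{equation*}
The second term equals $\langle\psi(\hat z),u\rangle - \langle\psi(z^\star),u\rangle \le 0$ by optimality of $\hat z = z(u)$ for the linear functional $\langle\psi(\cdot),u\rangle$. Hence $\delta\ell(z(u),q) \le \langle\psi(\hat z) - \psi(z^\star),\ \mu - u\rangle$.

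Then I would apply the dual-norm (Hölder) inequality $\langle a,b\rangle\le\|a\|_\ast\|b\|$ together with the triangle inequality for $\|\cdot\|_\ast$:
\begin{equation*}
    \delta\ell(z(u),q) \le \|\psi(\hat z) - \psi(z^\star)\|_\ast\,\|u-\mu\| \le \big(\|\psi(\hat z)\|_\ast + \|\psi(z^\star)\|_\ast\big)\,\|u-\mu\| \le 2\,c_{\psi,\|\cdot\|_{*}}\,\|u-\mu(q)\|,
\end{equation*}
using $c_{\psi,\|\cdot\|_{*}}=\sup_{z\in\calZ}\|\psi(z)\|_\ast$, which gives the claim.

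There is essentially no serious obstacle here; the only thing to be careful about is the direction of the optimality inequalities — namely that $z(u)$ minimizes (not maximizes) $\langle\psi(\cdot),u\rangle$, which is what makes the $\langle\psi(\hat z)-\psi(z^\star),u\rangle$ term nonpositive. Everything else is Hölder and the triangle inequality. (The bound is tight up to the factor $2$, reflecting that two different linear functionals, $\mu$ and $u$, select the two competing arguments.)
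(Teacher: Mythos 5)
Your proof is correct. It is close in spirit to the paper's argument but takes a slightly different route: the paper writes $\delta\ell(z(u),q)=\langle\psi(z(u)),\mu(q)-u\rangle+\bigl(\langle\psi(z(u)),u\rangle-\langle\psi(z(\mu(q))),\mu(q)\rangle\bigr)$ and bounds \emph{each} of the two terms by $\sup_{z\in\calZ}|\langle\psi(z),\mu(q)-u\rangle|$, the second via the elementary inequality $|\min_z\eta(z)-\min_z\eta'(z)|\leq\sup_z|\eta(z)-\eta'(z)|$ applied to the two linear functionals; you instead keep the difference $\psi(z(u))-\psi(z(\mu(q)))$ intact, observe that its pairing with $u$ is nonpositive by optimality of $z(u)$, and then bound the single remaining term by H\"older plus the triangle inequality. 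Both give the same constant $2\,c_{\psi,\|\cdot\|_*}$, but your intermediate step actually yields the slightly sharper bound $\delta\ell(z(u),q)\leq\max_{z\neq z'}\|\psi(z)-\psi(z')\|_*\,\|u-\mu(q)\|$, which is precisely the kind of quantity the paper exploits later in \cref{th:improvedlowerbound} (there for the Euclidean norm, via \cref{prop:l2distancecalibration}); so your variant would let one state the improved bound for general norms directly from this lemma. Your caution about the direction of the optimality inequality is exactly the right point to check, and your argument handles it correctly.
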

\begin{proof}[Proof]
Decompose the excess Bayes risk $\delta\ell(z(u),q)$ into two terms $A$ and $B$:
\begin{align*}
     \delta\ell(z(u),q)
    &= \langle\psi(z(u)), \mu(q)-u\rangle   \\
    & + \langle\psi(z(u)), u\rangle - \langle\psi(z(\mu(q))), \mu(q)\rangle \\
    &= A + B.
\end{align*}
For the first term, we directly have 
$A\leq \sup_{z\in\calZ}|\langle\psi(z), \mu(q)-u\rangle|$.
For the second term, we use the fact that for any given two functions $\eta,\eta':\calZ\rightarrow\Rspace{}$, it holds that $|\min_z\eta(z) - \min_{z'}\eta'(z')|\leq \sup_z|\eta(z)-\eta'(z)|$. As $z(\mu(q))$ minimizes $\langle\psi(z), \mu(q)\rangle$ and $z(u)$
minimizes $\langle\psi(z), u\rangle$, we can conclude also that $B\leq \sup_{z\in\calZ}|\langle\psi(z), \mu(q)-u\rangle|$.
Hence, we obtain
\begin{equation}
    \delta(z(u),q)
    \leq 2~\sup_{z\in\calZ} ~|\langle\psi(z), \mu(q)-u\rangle|\leq 2~c_{\phi,\|\cdot\|_{*}}~\|u - \mu(q)\|,
\end{equation}
where at the last step we have used Cauchy-Schwarz inequality.
\end{proof}

We now proceed to the proof of \cref{th:lowerboundcalibration}.
\vspace{5pt}

\begin{proof}[Proof of \cref{th:lowerboundcalibration}]
    Starting from \cref{lem:calibrationfunctionform}, we see that the constrains are $\langle\psi(z(u))-\psi(z(u')), u'\rangle\geq\varepsilon$.
    From the definition of Bregman divergence and strong convexity we have that if $h$ is $({1}/{\beta_{\|\cdot\|}})$-strongly convex in $\calD$, then 
    \begin{equation}
        D_h(u', u)\geq \frac{1}{2\beta_{\|\cdot\|}}\|u'-u\|^2, \quad \forall u,u'\in\calD.
    \end{equation}
    From \cref{lem:boundexcessrisk} we have that $\langle\psi(z(u))-\psi(z(u')), u'\rangle \leq 2c_{\psi,\|\cdot\|_{*}}\|u'-u\|$.
    Putting all these things together, we obtain,
    \begin{align}
        D_h(u', u)\geq \frac{1}{2\beta_{\|\cdot\|}}\|u'-u\|^2\geq \frac{\langle\psi(z(u))-\psi(z(u')), u'\rangle^2}{8c_{\psi,\|\cdot\|_{*}}^2\beta_{\|\cdot\|}}\geq 
        \frac{\varepsilon^2}{8c_{\psi,\|\cdot\|_{*}}^2\beta_{\|\cdot\|}}.
    \end{align}
\end{proof}

Finally, we present \cref{th:improvedlowerbound}, which is based on \cref{prop:l2distancecalibration} and provides a tighter lower bound under strong convexity w.r.t the Euclidean distance.
\begin{theorem}[Improved lower bound for $L_2$-strong convexity]
\label[theorem]{th:improvedlowerbound}
If $h$ is $({1}/{\beta_{\|\cdot\|_2}})$-strongly convex w.r.t the $L_2$ norm $\|\cdot\|_2$, then:
\begin{equation}\label{eq:improvedlowerboundquadratic}
    \zeta_h(\varepsilon) \geq \frac{\varepsilon^2}{2\beta_{\|\cdot\|_2}\max_{z'\neq z}\|\psi(z)-\psi(z')\|_2^2}.
\end{equation}
\end{theorem}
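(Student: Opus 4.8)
The plan is to chain together the exact formula of \cref{th:exactcalibration}, a pointwise strong-convexity bound, and the Euclidean set-distance computation of \cref{prop:l2distancecalibration}. By \cref{th:exactcalibration} we have $\zeta_h(\varepsilon) = \min_{z\in\calZ} D_h(\calH_\varepsilon(z)^c\cap\calM,\, \calH_0(z)\cap\calD)$, so it suffices to lower bound each inner Bregman divergence between sets. Strong convexity of $h$ with respect to $\|\cdot\|_2$ gives, directly from the definition of the Bregman divergence, $D_h(u',u)\geq \frac{1}{2\beta_{\|\cdot\|_2}}\|u'-u\|_2^2$ for all $u,u'\in\calD$. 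Taking the infimum of both sides over $u'\in A$, $u\in B$ for arbitrary $A,B\subseteq\calD$ (with the convention that an infimum over the empty set is $+\infty$, which makes the bound trivial whenever one of the two sets is empty) yields $D_h(A,B)\geq \frac{1}{2\beta_{\|\cdot\|_2}} d_2(A,B)^2$. Hence $\zeta_h(\varepsilon)\geq \frac{1}{2\beta_{\|\cdot\|_2}}\min_{z\in\calZ} d_2\big(\calH_\varepsilon(z)^c\cap\calM,\, \calH_0(z)\cap\calD\big)^2$.

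Next I would pass to the larger sets: since $\calH_\varepsilon(z)^c\cap\calM\subseteq \calH_\varepsilon(z)^c$ and $\calH_0(z)\cap\calD\subseteq \calH_0(z)$, enlarging the sets can only decrease the distance, so $d_2(\calH_\varepsilon(z)^c\cap\calM,\, \calH_0(z)\cap\calD)\geq d_2(\calH_\varepsilon(z)^c,\, \calH_0(z))$. Now \cref{prop:l2distancecalibration} (\cref{eq:l2distancecalibration}) evaluates this last distance exactly as $\min_{z'\neq z}\big(\tfrac{\varepsilon}{\|\psi(z)-\psi(z')\|_2}+\delta_{z,z'}\big)$, and since $\delta_{z,z'}\geq 0$ we may drop it to obtain $d_2(\calH_\varepsilon(z)^c,\, \calH_0(z))\geq \min_{z'\neq z}\tfrac{\varepsilon}{\|\psi(z)-\psi(z')\|_2} = \tfrac{\varepsilon}{\max_{z'\neq z}\|\psi(z)-\psi(z')\|_2}$. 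Substituting back and using $\min_{z}\tfrac{1}{a_z}=\tfrac{1}{\max_z a_z}$ gives $\zeta_h(\varepsilon)\geq \frac{\varepsilon^2}{2\beta_{\|\cdot\|_2}\max_{z'\neq z}\|\psi(z)-\psi(z')\|_2^2}$, which is \cref{eq:improvedlowerboundquadratic}.

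There is no serious obstacle here: \cref{prop:l2distancecalibration} already does the geometric work, and the rest is a careful chaining of inequalities — the only points needing a moment's attention are getting the set-distance monotonicity in the correct direction (enlarging the sets lowers $d_2$, which is what we want since we are lower bounding $\zeta_h$) and checking that the pointwise strong-convexity estimate survives passing to infima over sets. It is worth remarking that the single step that loses tightness is discarding $\delta_{z,z'}\geq 0$: keeping it, and specializing to $h(u)=\tfrac12\|u\|_2^2$ (so $\beta_{\|\cdot\|_2}=1$ and $\calD=\calH$), would recover the exact quadratic-surrogate formula of \cref{th:exactcalibrationquadraticsimple}, where the maximum is restricted to the pairs $(z,z')$ with $\calH_0(z)\cap\calH_0(z')\neq\varnothing$.
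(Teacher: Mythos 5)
Your proof is correct and is essentially identical to the paper's: both lower bound the Bregman divergence by the squared Euclidean distance via strong convexity, enlarge the sets $\calH_\varepsilon(z)^c\cap\calM$ and $\calH_0(z)\cap\calD$ to drop the intersections, invoke \cref{prop:l2distancecalibration}, and discard $\delta_{z,z'}\geq 0$. (Only your closing aside is slightly optimistic: recovering the exact formula of \cref{th:exactcalibrationquadraticsimple} requires not just keeping $\delta_{z,z'}$ but also verifying that the relevant distances are attained inside $\calM$, which is why the paper's exact result needs $\calM$ full-dimensional and $\varepsilon$ small enough.)
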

\begin{proof}[Proof]
\begin{align*}
    \zeta_h(\varepsilon) & \geq \frac{1}{2\beta_{\|\cdot\|_2}}\min_{z\in\calZ}d_2(\calH_{\varepsilon}(z)^c\cap\calM, \calH_0(z)\cap\calD)^2 && (D_h(u',u)\geq\frac{1}{2\beta_{\|\cdot\|_2}}\|u'-u\|_2^2)  \\
    &\geq
    \frac{1}{2\beta_{\|\cdot\|_2}}\min_{z\in\calZ}d_2(\calH_{\varepsilon}(z)^c, \calH_0(z))^2 && \text{(minimization on larger domain)} \\
    &= \frac{1}{2\beta_{\|\cdot\|_2}}\min_{z\in\calZ}\left(\min_{z'\neq z}\frac{\varepsilon}{\|\psi(z)-\psi(z')\|_2} + \delta_{z,z'}\right)^2 && \text{(\cref{prop:l2distancecalibration})}  \\
    &\geq \frac{1}{2\beta_{\|\cdot\|_2}}\min_{z'\neq z}\frac{\varepsilon^2}{\|\psi(z)-\psi(z')\|_2^2}
    && (\delta_{z,z'}\geq 0) \\
    &= \frac{\varepsilon^2}{2\beta_{\|\cdot\|_2}\max_{z'\neq z}\|\psi(z)-\psi(z')\|_2^2}. && 
\end{align*}
\end{proof}
Note that the lower bound \eqref{eq:improvedlowerboundquadratic} is tighter than the one given by \cref{th:lowerboundcalibration}:
\begin{equation}
    \frac{\varepsilon^2}{2\beta_{\|\cdot\|_2}\max_{z'\neq z}\|\psi(z)-\psi(z')\|_2^2} \geq \frac{\varepsilon^2}{8\beta_{\|\cdot\|_2}\max_{z\in\calZ}\|\psi(z)\|_2^2},
\end{equation}
using that $\|\psi(z)-\psi(z')\|_2^2\leq \|\psi(z)\|_2^2 + \|\psi(z')\|_2^2 \leq 2\cdot \max_{z\in\calZ}\|\psi(z)\|_2^2$.
\subsection{Upper bound on the Calibration Function}
\label{app:upperboundcalibration}

In this subsection we prove the result of existence of a quadratic upper bound on $\zeta_h$. The idea of the proof is to show that there exists a point $u_0\in\calH_0\cap\relinterior(\calM)$ and a continuous path $(u_{\varepsilon})_{\varepsilon\leq\varepsilon_0}$ such that
$u_{\varepsilon}\in\calH_{\varepsilon}^c\cap\relinterior(\calM)$ with $\|u_{\varepsilon}-u_0\|\lesssim\varepsilon$ for all $\varepsilon\leq\varepsilon_0$. Then, the norm of the Hessian of $h$ can be uniformly bounded in this compact continuous path and the result follows. It is important to take this sequence at the relative interior of the marginal polytope because $\|\nabla^2h\|_2$ could explode at the boundary.

Note that if $\calM$ is full dimensional, the result follows easily from 
\cref{prop:l2distancecalibration}.
We begin by constructing this path as a segment in \cref{lem:segment}.

\begin{lemma}\label[lemma]{lem:segment}
 There exists $z\in\calZ$ and a closed segment $I=\operatorname{hull}(\{u_{\varepsilon_0}, u_0\})\subset\relinterior(\calM)$ with $u_0\in\calH_0(z)$ such that the point $u_\varepsilon = u_{0}\cdot(1-\varepsilon/\varepsilon_0) + u_{\varepsilon_0}\cdot(\varepsilon/\varepsilon_0)\in I$ satisfies for a constant $C\in\Rspace{}$:
 \begin{equation}
     u_\varepsilon\in\calH_{\varepsilon}(z)^c \hspace{1cm}\text{and}\hspace{1cm}
     \|u_{\varepsilon}-u_0\|_2 \leq C\cdot \varepsilon, \quad \forall  \varepsilon\leq\varepsilon_0.
 \end{equation}
 \end{lemma}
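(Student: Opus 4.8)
The plan is to reduce the claim to producing a single base point $u_0$ in $\relinterior(\calM)$ that sits on a genuine decision boundary, and then to grow the segment $I$ out of $u_0$ in a direction that pushes $u_\varepsilon$ out of $\calH_\varepsilon(z)$ linearly in $\varepsilon$ while keeping it inside $\relinterior(\calM)$.

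\emph{Step 1: locating $u_0$.} Group the outputs by the affine functional $u\mapsto\langle\psi(z),u\rangle$ they induce on $\operatorname{aff}(\calM)$ (equivalently, identify $z$ and $z'$ when $L(z,\cdot)=L(z',\cdot)$), and write $g(u)=\min_{z'\in\calZ}\langle\psi(z'),u\rangle$. In any non-degenerate instance — i.e.\ excluding the trivial situation in which one output's functional strictly dominates all others on $\relinterior(\calM)$, where $\zeta_h$ need not be $O(\varepsilon^2)$ in the first place — there are two distinct such functionals $\ell_1\neq\ell_2$, realized by outputs $z_1,z_2$, whose cells $\calH_0(z_1)\cap\calM$ and $\calH_0(z_2)\cap\calM$ are both $r$-dimensional. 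Since an $r$-dimensional convex subset of $\calM$ must meet $\relinterior(\calM)$, I pick $p_i\in\relinterior(\calH_0(z_i)\cap\calM)\subseteq\relinterior(\calM)$; the segment $\operatorname{hull}(\{p_1,p_2\})$ stays in $\relinterior(\calM)$ by convexity. Now $\calH_0(z_1)\cap\operatorname{hull}(\{p_1,p_2\})$ is a closed subsegment containing $p_1$ but not $p_2$ (as $p_2\in\relinterior(\calH_0(z_2)\cap\calM)$ cannot lie in $\calH_0(z_1)$, the two cells meeting only in dimension $\le r-1$); let $u_0$ be its endpoint away from $p_1$. Then $u_0\in\relinterior(\calM)$ and $u_0\in\calH_0(z)$ for $z:=z_1$; and since just past $u_0$ along the segment the minimum $g$ is no longer attained by $\ell_1$, a pigeonhole-plus-continuity argument produces an output $z''$ with $u_0\in\calH_0(z'')$ and $L(z,\cdot)\neq L(z'',\cdot)$, i.e.\ $\langle\psi(z)-\psi(z''),u_0\rangle=0$ while the linear form $v\mapsto\langle\psi(z)-\psi(z''),v\rangle$ is not identically zero on the tangent space $T$ of $\calM$.

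\emph{Step 2: building $I$.} Choose $e\in T$ with $\kappa_0:=\langle\psi(z)-\psi(z''),e\rangle>0$ (replace $e$ by $-e$ if necessary). Because $u_0\in\relinterior(\calM)$ and $e$ is a feasible direction, there is $\rho>0$ with $u_{\varepsilon_0}:=u_0+\rho e\in\relinterior(\calM)$; set $I=\operatorname{hull}(\{u_{\varepsilon_0},u_0\})$, which lies in $\relinterior(\calM)$ by convexity. Fix any $\varepsilon_0\in(0,\rho\kappa_0)$.

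\emph{Step 3: verification.} For $u_\varepsilon=u_0(1-\varepsilon/\varepsilon_0)+u_{\varepsilon_0}(\varepsilon/\varepsilon_0)$ with $\varepsilon\le\varepsilon_0$, testing the definition of $\calH_\varepsilon(z)$ against $z''$ gives $\langle\psi(z)-\psi(z''),u_\varepsilon\rangle=0+(\varepsilon/\varepsilon_0)\,\rho\kappa_0>\varepsilon$, since $\rho\kappa_0>\varepsilon_0$; hence $u_\varepsilon\in\calH_\varepsilon(z)^c$. Moreover $\|u_\varepsilon-u_0\|_2=(\varepsilon/\varepsilon_0)\,\rho\|e\|_2=C\varepsilon$ with $C:=\rho\|e\|_2/\varepsilon_0$. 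Together with $u_0\in\calH_0(z)$ and $I\subset\relinterior(\calM)$ this is exactly the claim.

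The main obstacle is Step 1: guaranteeing that a tie between two genuinely different loss functionals occurs \emph{inside} $\relinterior(\calM)$ and not merely on its relative boundary. This is where the non-degeneracy of the problem enters, and where the dimension count (two full-dimensional cells) together with the intermediate-value/connectedness argument along $\operatorname{hull}(\{p_1,p_2\})$ does the real work. Everything afterward — choosing $e$, scaling by $\rho$, picking $\varepsilon_0$, and the two inequalities in Step 3 — is routine.
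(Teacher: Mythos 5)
Your proposal is correct and follows the same overall strategy as the paper: pick a point $u_0$ on a cell boundary inside $\relinterior(\calM)$ and move along a segment in a direction that violates the $\varepsilon$-constraint linearly, so that $\|u_\varepsilon-u_0\|_2\leq C\varepsilon$. Where you differ is in how $u_0$ and the violated constraint are obtained: the paper simply takes $u_0\in\partial\calH_0(z)\cap\relinterior(\calM)$ and a "supporting" output $z'$ as given, then defines $u_{\varepsilon_0}$ as a nearest point of $\calM\cap\{\langle\psi(z)-\psi(z'),u\rangle\geq\varepsilon_0\}$ in a neighborhood $U\subset\relinterior(\calM)$ and reads off the constant from the distance formula; you instead construct $u_0$ explicitly from two $r$-dimensional cells $\calH_0(z_1)\cap\calM$, $\calH_0(z_2)\cap\calM$ and the segment between their relative interiors, and extract, via pigeonhole and continuity, an output $z''$ whose constraint is \emph{not} identically zero on the tangent space $T$ and has a strictly positive pairing with the chosen direction. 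This buys you two things the paper leaves implicit: (i) the existence of a cell-boundary point in $\relinterior(\calM)$ at all (your explicit non-degeneracy caveat, which the paper's lemma and \cref{th:upperboundcalibration} also need, since under strict domination $\calH_\varepsilon(z)^c\cap\calM=\varnothing$ and $\zeta_h\equiv+\infty$), and (ii) that the active constraint is non-trivial on $\operatorname{aff}(\calM)$, without which the paper's half-space $\calM\cap\{\langle\psi(z)-\psi(z'),u\rangle\geq\varepsilon\}$ could be empty. Two of your assertions deserve a line of justification but are standard and true: that a non-degenerate instance has at least two $r$-dimensional cells (finitely many closed convex cells cover $\calM$, lower-dimensional ones cannot, and a single full-dimensional cell would equal $\calM$ and force strict domination on $\relinterior(\calM)$), and that $p_2\notin\calH_0(z_1)$ (if $\ell_1=\ell_2$ at a relative interior point of the $r$-dimensional cell where $\ell_2\leq\ell_1$, the affine functionals coincide on $\operatorname{aff}(\calM)$, contradicting $\ell_1\neq\ell_2$ — this is cleaner than the dimension-count parenthetical you give). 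Steps 2 and 3 are routine and verified correctly.
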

 \begin{proof}[Proof]
 
 We will first assume $\calM$ is \emph{non full-dimensional}. Hence, it lies in an affine subspace of $\calH$.
 Take $u_0\in\partial\calH_0(z)\cap\relinterior(\calM)$ and take $z'\in\calZ$ corresponding to a supporting hyperplane of $\calH_0(z)$ at $u_0$, i.e,
 $\langle\psi(z)-\psi(z'),u_0\rangle=0$.
 
 Using that $\calH_{\varepsilon}(z)^c=\bigcup_{z'\neq z}\{\langle\psi(z)-\psi(z'),u\rangle\geq \varepsilon\}$, we have that
\begin{align}
    &d_2(u_0, \calH_{\varepsilon}(z)^c\cap\calM)  \\
    &= \min_{z''\neq z}d_2(u_0, \calM\cap\{\langle\psi(z)-\psi(z''),u\rangle\geq\varepsilon\}) \\
    \label{eq:distanceM}
    &\leq d_2(u_0, \calM\cap\{\langle\psi(z)-\psi(z'),u\rangle\geq\varepsilon\}).
\end{align}

Now, consider a convex neighborhood $U\subset\relinterior(\calM)$ of $u_0$. We have that for $\varepsilon_0$ small enough, the distance in \cref{eq:distanceM} is achieved at $U$ at a point $u_{\varepsilon_0}\in U\subset\relinterior(\calM)$.
Moreover, we have that
\begin{equation}
    u_\varepsilon = u_{0}(1-\varepsilon/\varepsilon_0) + u_{\varepsilon_0}(\varepsilon/\varepsilon_0) = \argmin_{u\in U\cap\{\langle\psi(z)-\psi(z'),u\rangle\geq\varepsilon\}}\|u_0-u\|_2, \quad \forall \varepsilon\leq\varepsilon_0,
\end{equation}
and 
\begin{align*}
    &\|u_\varepsilon-u_0\|_2 \\
    &=d_2(u_0, U\cap\{\langle\psi(z)-\psi(z'),u\rangle\geq\varepsilon\}) \\
    &=  L\cdot d_2(u_0, \{\langle\psi(z)-\psi(z'),u\rangle\geq\varepsilon\}) \\
    &= L\cdot\frac{\varepsilon}{\|\psi(z)-\psi(z')\|_2}=C\cdot\varepsilon.
\end{align*}

For the \emph{full-dimensional} case, the proof follows the same with $L=1$.
 \end{proof}

\begin{proof}[Proof of \cref{th:upperboundcalibration}]
 We will show that for a sufficiently small $\varepsilon_0$, there exists $C'\in\Rspace{}$ such that~$\zeta_h(\varepsilon)\leq C'\cdot\varepsilon^2$ for all $\varepsilon\leq\varepsilon_0$.
 First use \cref{lem:segment} and define (using that $h$ is twice differentiable)~$C_{I}=\sup_{u'\in I}\|\nabla^2 h(u')\|_2<+\infty$ which is finite because $I=\bar{I}\subset\interior(\calM)$.
 
 Then, for all $\varepsilon\leq\varepsilon_0$, the proof follows as:
\begin{align*}
    \zeta_h(\varepsilon) &= \min_{z'\in\calZ}D_h(\calH_\varepsilon(z')^c\cap\calM,\calH_0(z')\cap\calD) \\
    &\leq \min_{z'\in\calZ}D_h(\calH_\varepsilon(z')^c\cap\calM,\calH_0(z')\cap\calM) \\
    &\leq D_h(\calH_\varepsilon(z)^c\cap\calM,\calH_0(z)\cap\calM) \\
    &\leq D_h(u_\varepsilon, u_0) \\
    &\leq C_{I}\cdot \|u_{\varepsilon}-u_0\|_2^2 \\
    &\leq  C_{I}\cdot C\cdot \varepsilon^2 = C'\cdot\varepsilon^2. 
\end{align*}
\end{proof}

\section{Generic Methods for Structured Prediction}\label{app:genericmethods}
In this section we present results on two generic methods for structured prediction: the quadratic surrogate in \cref{app:quadraticsurrogate} and conditional random fields (CRFs) in \\ \cref{app:CRFs}.
\subsection{Quadratic Surrogate}
\label{app:quadraticsurrogate}
We first provide an exact formula for the calibration function of the quadratic surrogate when the marginal polytope $\calM$ is full-dimensional. Note that in this case, one can directly apply \cref{prop:l2distancecalibration} if one makes sure that the distances are achieved inside $\calM$.

\begin{theorem}[Exact Calibration for Quadratic Surrogate]\label[theorem]{th:exactcalibrationquadratic}
Let $h:\calD=\calH\rightarrow\Rspace{}$ be $h(\cdot)=\frac{1}{2}\|\cdot\|_2^2$ corresponding to the quadratic surrogate. If $\calM$ is full-dimensional, then
\begin{equation}\label{eq:exactquadratic}
\zeta_h(\varepsilon) = \frac{1}{2}\left(\min_{z'\neq z}\frac{\varepsilon}{\|\psi(z)-\psi(z')\|_2} + \delta_{z,z'}\right)^2, \quad 
\forall\varepsilon\leq \min_{z\neq z'}\|L_z-L_{z'}\|_{\infty},
\end{equation}
where $\delta_{z,z'}= d_2(\calH_0(z),\{\langle\psi(z)-\psi(z'),u\rangle=0\})>0$ if and only if $\calH_0(z)\cap\calH_0(z')=\varnothing$, and $L_z$ is the $z$-th row of the loss matrix $L\in\Rspace{\calZ\times\calY}$.
\end{theorem}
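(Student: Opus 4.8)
The plan is to specialize the general formula of \cref{th:exactcalibration} to the quadratic potential $h(u)=\tfrac12\|u\|_2^2$, which turns the Bregman divergence between sets into a squared Euclidean distance, and then to read off that distance from \cref{prop:l2distancecalibration}. The only genuinely new content is to show that, once $\varepsilon$ is small enough, intersecting $\calH_\varepsilon(z)^c$ with the bounded polytope $\calM$ does not increase the relevant distances; this is exactly where full-dimensionality of $\calM$ is used.

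First I would observe that for $h=\tfrac12\|\cdot\|_2^2$ one has $D_h(u',u)=\tfrac12\|u'-u\|_2^2$, hence $D_h(A,B)=\tfrac12\,d_2(A,B)^2$ for all $A,B\subseteq\calH$, and $\calD=\calH$ so $\calH_0(z)\cap\calD=\calH_0(z)$. Substituting into \eqref{eq:exactcalibration} gives
\begin{equation}
\zeta_h(\varepsilon)=\tfrac12\,\min_{z\in\calZ} d_2\big(\calH_\varepsilon(z)^c\cap\calM,\ \calH_0(z)\big)^2 .
\end{equation}
The lower bound in \eqref{eq:exactquadratic} is then immediate: from $\calH_\varepsilon(z)^c\cap\calM\subseteq\calH_\varepsilon(z)^c$ we get $d_2(\calH_\varepsilon(z)^c\cap\calM,\calH_0(z))\ge d_2(\calH_\varepsilon(z)^c,\calH_0(z))$, and by \cref{prop:l2distancecalibration} the right-hand side equals $\min_{z'\ne z}\tfrac{\varepsilon}{\|\psi(z)-\psi(z')\|_2}+\delta_{z,z'}$; minimizing over $z\in\calZ$ then gives ``$\ge$'' (with the usual convention that an empty feasible set contributes $+\infty$).

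The bulk of the work is the matching upper bound, i.e.\ producing for the minimizing $z$ and its minimizing competitor $z'$ a point of $\calH_\varepsilon(z)^c\cap\calM$ at distance at most $\tfrac{\varepsilon}{\|\psi(z)-\psi(z')\|_2}+\delta_{z,z'}$ from $\calH_0(z)$. Two points make this go through. First, for $\varepsilon$ small the competitor $z'$ attaining the inner minimum must satisfy $\delta_{z,z'}=0$ — otherwise that value stays bounded away from $0$ — so $\calH_0(z)\cap\calH_0(z')\ne\varnothing$ and, as in the proof of \cref{prop:l2distancecalibration}, the hyperplane $\{\langle\psi(z)-\psi(z'),u\rangle=0\}$ genuinely supports $\calH_0(z)$; for a ``relevant'' $z$ (one that is Bayes optimal on a full-dimensional subregion of $\calM$) this supporting face meets $\interior(\calM)$. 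Second, choosing $u_0\in\relinterior\big(\calH_0(z)\cap\calH_0(z')\big)\cap\interior(\calM)$ and setting $u_\varepsilon=u_0+\tfrac{\varepsilon}{\|\psi(z)-\psi(z')\|_2^2}\big(\psi(z)-\psi(z')\big)$, one gets $\langle\psi(z)-\psi(z'),u_\varepsilon\rangle=\varepsilon$, so $u_\varepsilon\in\overline{\calH_\varepsilon(z)^c}$, and $\|u_\varepsilon-u_0\|_2=\tfrac{\varepsilon}{\|\psi(z)-\psi(z')\|_2}$; since $\calM$ is full-dimensional and $u_0\in\interior(\calM)$, the point $u_\varepsilon$ lies in $\calM$ as long as $\varepsilon$ is below a polytope-dependent threshold, and one checks that $\varepsilon\le\min_{z\ne z'}\|L_z-L_{z'}\|_\infty$ suffices (it keeps the slice $\{\langle\psi(z)-\psi(z'),u\rangle=\varepsilon\}\cap\calM$ nonempty and the segment $[u_0,u_\varepsilon]$ inside $\calM$). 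As distances to a set and to its closure coincide, this yields $d_2(\calH_\varepsilon(z)^c\cap\calM,\calH_0(z))\le\tfrac{\varepsilon}{\|\psi(z)-\psi(z')\|_2}$, which combined with $\delta_{z,z'}=0$ matches the lower bound and proves \eqref{eq:exactquadratic}.

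Finally I would dispose of the ``irrelevant'' $z$ in the outer minimum: for such $z$ either $\calH_\varepsilon(z)^c\cap\calM=\varnothing$, or $\calH_0(z)\cap\calM$ sits in a lower-dimensional face of $\calM$ and then $d_2(\calH_\varepsilon(z)^c\cap\calM,\calH_0(z))$ is bounded below by a positive constant independent of $\varepsilon$, so for $\varepsilon\le\varepsilon_0$ these $z$ do not attain the minimum. The step I expect to be the main obstacle is the second point above — the geometric argument that full-dimensionality of $\calM$ lets the optimal near-pair be realized \emph{inside} $\calM$, together with pinning down that $\varepsilon_0=\min_{z\ne z'}\|L_z-L_{z'}\|_\infty$ is the correct threshold; the rest is a direct combination of \cref{th:exactcalibration}, \cref{prop:l2distancecalibration} and the identity $D_h(A,B)=\tfrac12 d_2(A,B)^2$.
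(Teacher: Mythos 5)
Your reduction of \eqref{eq:exactcalibration} to $\zeta_h(\varepsilon)=\tfrac12\min_{z} d_2(\calH_\varepsilon(z)^c\cap\calM,\calH_0(z))^2$ and the lower bound via $\calH_\varepsilon(z)^c\cap\calM\subseteq\calH_\varepsilon(z)^c$ together with \cref{prop:l2distancecalibration} are exactly as in the paper. The genuine gap is in your matching upper bound. Your construction anchors $u_0$ in $\relinterior\bigl(\calH_0(z)\cap\calH_0(z')\bigr)\cap\interior(\calM)$, so it only applies when the minimizing pair satisfies $\delta_{z,z'}=0$, and you dismiss pairs with $\delta_{z,z'}>0$ by claiming that for ``$\varepsilon$ small'' they cannot attain the minimum. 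But \cref{th:exactcalibrationquadratic} keeps the $\delta_{z,z'}$ terms in the formula and asserts exact equality on the whole explicit range $\varepsilon\le\min_{z\ne z'}\|L_z-L_{z'}\|_\infty$, precisely because on that range the minimizing pair may have disjoint cells; discarding such pairs for sufficiently small $\varepsilon$ is the content of the separate, weaker \cref{th:exactcalibrationquadraticsimple} (with an unspecified $\varepsilon_0$ that need not equal the stated threshold). For a minimizing pair with $\calH_0(z)\cap\calH_0(z')=\varnothing$ you never exhibit a point of $\calH_\varepsilon(z)^c\cap\calM$ at distance $\varepsilon/\|\psi(z)-\psi(z')\|_2+\delta_{z,z'}$ from $\calH_0(z)$, so what your argument delivers is essentially the simplified corollary, not the stated theorem.

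Even in the case $\delta_{z,z'}=0$, your identification of the threshold is asserted rather than proved: whether the segment $[u_0,u_\varepsilon]$ stays inside $\calM$ depends on how far your chosen $u_0$ is from $\partial\calM$ in the direction $\psi(z)-\psi(z')$, which can be arbitrarily small, so $\varepsilon\le\min_{z\ne z'}\|L_z-L_{z'}\|_\infty$ does not by itself guarantee $u_\varepsilon\in\calM$; likewise the claim that the face $\calH_0(z)\cap\calH_0(z')$ meets $\interior(\calM)$ for the ``relevant'' $z$ is stated without argument. The paper handles all pairs uniformly with one observation: the quantity $\varepsilon/\|\psi(z)-\psi(z')\|_2$ (plus $\delta_{z,z'}$ when the cells are disjoint) is the distance between the half-space $\{\langle\psi(z)-\psi(z'),u\rangle\ge\varepsilon\}$ and $\calH_0(z)$, and this distance is attained inside $\calM$ as soon as $\varepsilon\le\sup_{\mu\in\calM}|\langle\psi(z)-\psi(z'),\mu\rangle|$, which is then computed to equal $\|L_z-L_{z'}\|_\infty$ via $\sup_{q\in\operatorname{Prob}(\calY)}|(L_z-L_{z'})\cdot q|$ — that computation is where the stated threshold comes from, and it is the step your proposal replaces with an unproved assertion.
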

\begin{proof}[Proof]
Note that the calibration for the quadratic surrogate is
\begin{equation}
    \zeta_h(\varepsilon) = \frac{1}{2}\min_{z'\neq z}~d_2(\calH_{\varepsilon}(z)^c\cap\calM, \calH_0(z))^2.
\end{equation}
Hence, the goal of the proof is to show that
$d_2(\calH_{\varepsilon}(z)^c, \calH_0(z)) = d_2(\calH_{\varepsilon}(z)^c\cap\calM, \calH_0(z))$ if~$\varepsilon\leq \min_{z\neq z'}\|L_z-L_{z'}\|_{\infty}$ and then use \cref{prop:l2distancecalibration}.

Remember from the proof of \cref{prop:l2distancecalibration} that the term 
$\frac{\varepsilon}{\|\psi(z)-\psi(z')\|_2}$ is the distance between the half-spaces $\{\langle\psi(z)-\psi(z'),u\rangle \leq 0\}$ and  $\{\langle\psi(z)-\psi(z'),u\rangle \geq\varepsilon\}$. This distance is achieved inside of the marginal polytope if 
$\varepsilon\leq \sup_{\mu\in\calM}|\{\langle\psi(z)-\psi(z'),\mu\rangle|$.
Hence, we have that $d_2(\calH_{\varepsilon}(z)^c\cap\calM, \calH_0(z))=d_2(\calH_{\varepsilon}(z)^c, \calH_0(z))$ if 
\begin{align*}
    \varepsilon 
    &\leq \min_{z'\neq z}\sup_{\mu\in\calM}|\{\langle\psi(z)-\psi(z'),\mu\rangle| \\
    &= \min_{z'\neq z}\sup_{q\in\operatorname{Prob}(\calY)}|\ell(z,q)-\ell(z',q)| \\
    &= \min_{z'\neq z}\sup_{q\in\operatorname{Prob}(\calY)}|(L_z-L_{z'})\cdot q|\\
    &= \min_{z\neq z'}\|L_z-L_{z'}\|_{\infty}.
\end{align*}
\end{proof}
Now we prove \cref{th:exactcalibrationquadratic}, which states that if one takes $\varepsilon$ small enough, then the expression can be simplified by removing the $\delta_{z,z'}$'s from \cref{eq:exactquadratic}.

\begin{proof}[Proof of \cref{th:exactcalibrationquadraticsimple}]
Take a 3-tuple $(z,z',z'')$ such that $\calH_0(z)\cap\calH_0(z')\neq\varnothing$
and
$\calH_0(z')\cap\calH_0(z'')=\varnothing$. Then, it is clear that
there exists $\varepsilon_0'>0$ such that for all $\varepsilon\leq\varepsilon_0'$:
\begin{equation}
    \frac{\varepsilon}{\|\psi(z)-\psi(z')\|_2} \leq \frac{\varepsilon}{\|\psi(z')-\psi(z'')\|_2} + \delta_{z',z''},
\end{equation}
because $\delta_{z',z''}>0$ as 
$\calH_0(z')\cap\calH_0(z'')=\varnothing$.
Taking $\varepsilon_0$ as the minimum of the $\varepsilon_0'$'s over all 3-tuples of this type gives the desired result.
\end{proof}

\paragraph{Kernel ridge regression as an estimator independent of the affine decomposition of $L$.}
It was shown by \cite{ciliberto2016consistent} that if one minimizes the expected risk of the quadratic surrogate using kernel ridge regression, then one can construct an estimator independent of the affine decomposition of the loss.

Indeed, given $n$ data points $\{(x_i,y_i)\}_{i\leq n}$ and a kernel $k:\calX\times\calX\rightarrow\Rspace{}$ with corresponding RKHS $\calG$, the kernel ridge regression estimator $\widehat{g}_n\in\calG\otimes\calH$ of $\gstar$ can be written in
closed form as~$\widehat{g}_n(\cdot)=\sum_{i=1}^n\alpha_i(\cdot)\psi(y_i)$ where $\alpha(x) = (\alpha_1(x),\dots,\alpha_n(x)) \in \R^n$ is defined by
$\alpha(x) = (K + n\lambda I)^{-1}K_x$ with 
$K \in \R^{n\times n}$ is defined by $K_{ij} = k(x_i, x_j)$ and $K_x = (k(x,x_1),\dots, k(x,x_n)) \in \R^n$. Note that $\widehat{g}_n$ is linear in the embeddings $\phi(y_i)_{i\leq n}$ and the link function is the identity, so the estimator is independent of the choice of the embedding of the loss because
\begin{equation}
    \fhat(x) = \argmin_{z\in\calZ}\langle\psi(z), t^{-1}(\ghat(x)\rangle =  \argmin_{z\in\calZ}\sum_{i=1}^n\alpha_i(x)L(z, y_i).
\end{equation}

In the following, we compare our calibration results on the quadratic surrogate with the work by \cite{osokin2017structured}.
\paragraph{Comparison with related work on the quadratic surrogate for structured prediction.}
In the work by \cite{osokin2017structured}, they study the calibration properties of a quadratic-type surrogate, which is constructed differently than ours. In order to understand their construction under our framework, let's consider the following decomposition of the loss function,
$L(z,y) = \langle\psi(z), \psi(y)\rangle = \langle e_z, (L\cdot e_y)\rangle$, where $L$ is the loss matrix.
Note that in this case the quadratic surrogate associated to this decomposition is $S(v,y) = \frac{1}{2}\|v - L_{:,y}\|_2^2$
with decoding $d(v) = \argmin_{z\in\calZ}~v_z$. As $v\in\Rspace{\calY}$ can be an exponentially large vector, they consider the parametrization $v=F\cdot w$, where $F\in\Rspace{\calY\times k}$ is a score matrix and $w\in\Rspace{k}$, where $k$ can be potentially much smaller than $\calY$. The surrogate loss they consider is
\begin{equation}\label{eq:osokinquadraticsurrogate}
    S(w,y) = \frac{1}{2}\|F\cdot w - L_{:,y}\|_2^2,
\end{equation}
where $L_{:,y}$ is the $y$-th column of $L$.
In their work they normalize the surrogate loss by $|\calY|$, but we remove this factor in order to properly compare calibration functions\footnote{If you multiply a surrogate by a factor, the associated calibration function gets multiplied by the same factor.}.
It is important to note that this loss does not fall into our framework for $F$ different than the identity.
They provide the following lower bound on the calibration function
\begin{equation}\label{eq:lowerboundosokin}
    \zeta(\varepsilon) \geq \frac{\varepsilon^2}{2\max_{z'\neq z}\|P_F(e_z-e_{z'})\|_2^2},
\end{equation}
where $P_F=F(F^TF)^\dagger F^T$ is the orthogonal projection to the subspace generated by the columns of $F$. In order to compare with our work, we follow \cite{nowak2018sharp} and consider a decomposition $L=F\cdot U^T$ with $F\in\Rspace{\calZ\times k},U\in\Rspace{\calY\times k}$ and $S(v,y)=\frac{1}{2}\|v-U_{y}\|_2^2$ with decoding $d(v) = \argmin_{z\in\calZ}~F_z\cdot v$, where
$F_z=\psi(z)=F^T\cdot e_z$ and $U_{y}=\phi(y)=U^T\cdot e_y$. For this surrogate method, \cref{th:improvedlowerbound} provides the following lower bound:
\begin{equation}\label{eq:lowerboundours}
    \zeta_h(\varepsilon) \geq \frac{\varepsilon^2}{2\max_{z'\neq z}\|F^T(e_z-e_{z'})\|_2^2}.
\end{equation}
Note the similarity between expressions \eqref{eq:lowerboundours} and \eqref{eq:lowerboundosokin}. In particular, if $F\in\Rspace{\calY\times\calY}$ is the identity, (so that surrogate \eqref{eq:osokinquadraticsurrogate} enters our framework), both expressions are equal. For other $F$'s, both calibration functions are not comparable since their surrogate is larger than ours. For instance, if one takes $F\in\Rspace{\calZ\times k}$ with the smallest $k$ such that $L=F\cdot U^T$ for the Hamming loss, their calibration function is proportional to $\frac{|\calY|}{k}$ \cite{osokin2017structured}, while ours is linear in $k$ (see \cref{prop:hammingcalibration}). Indeed, their surrogate is larger by construction because it is defined in $\Rspace{\calY}$, while ours is defined in $\Rspace{k}$. It is important to note that our surrogates are the ones used in practice while theirs require a summation over $|\calY|$ elements (see \eqref{eq:osokinquadraticsurrogate}), which in structured prediction is in general exponentially large.

\subsection{Conditional Random Fields} \label{app:CRFs}
This subsection has two parts. In the first one, we show how changing the decoding procedure in CRFs from MAP assignment (what it is used in practice) to the decoding we propose, it is possible to calibrate CRFs to any discrete loss with affine decomposition $L(z,y)=\langle\psi(z),\phi(y)\rangle+c$, where $\phi(y)$ are the sufficient statistics of the CRF.
At the second part, we prove the convex lower bound on the calibration function.

\begin{figure}[ht!]
    \centering
    \includegraphics[width=0.6\textwidth]{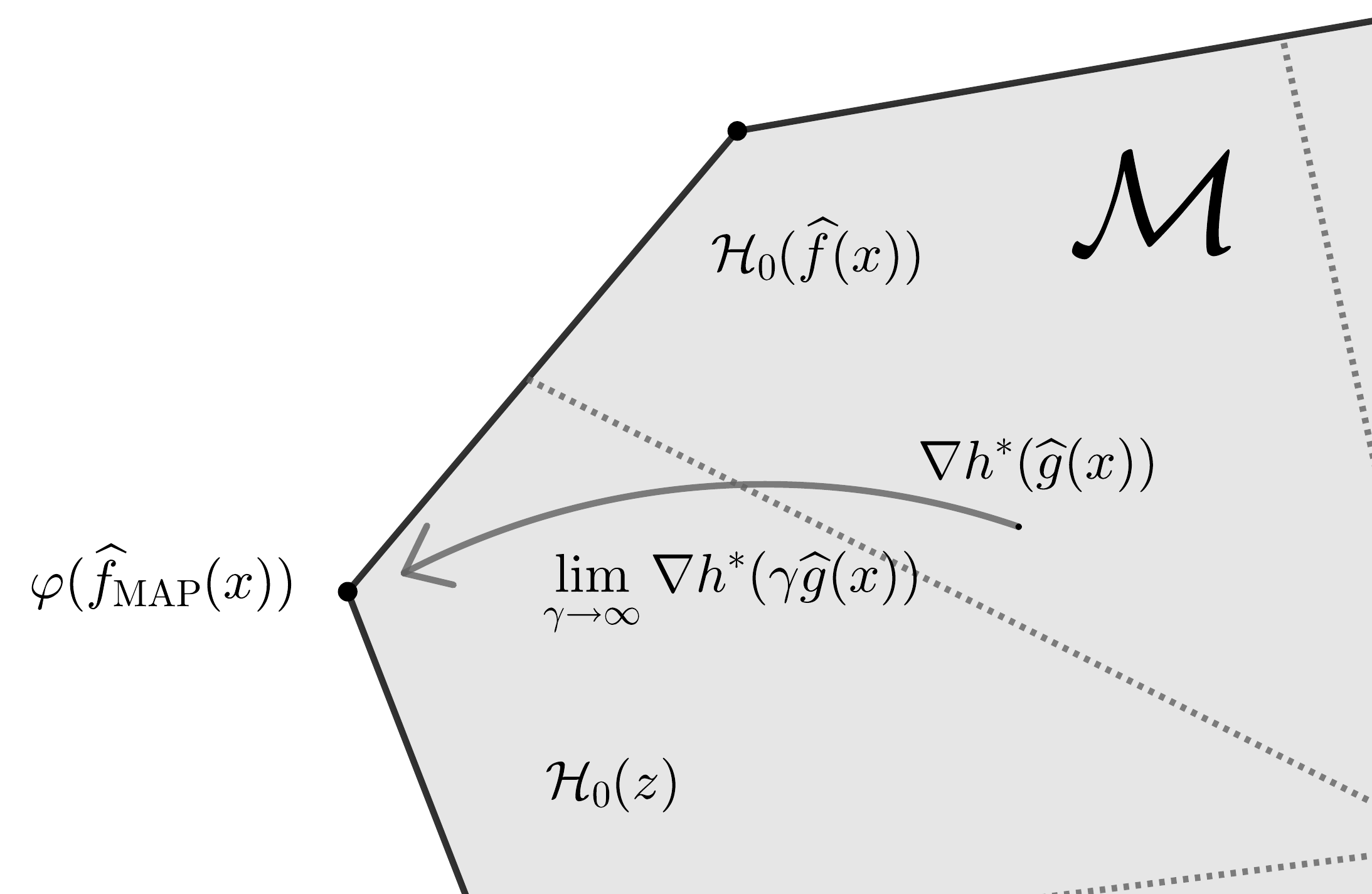}
    \caption{MAP assignment can be written as $\fhat_{\operatorname{MAP}}(x)=\phi^{-1}\left(\lim_{\gamma\to\infty}\nabla h^*(\gamma \widehat{g}(x))\right)$, which corresponds to continuously move the vector of predicted marginals $\nabla h^*( \widehat{g}(x))$ to a vertex of the marginal polytope $\lim_{\gamma\to\infty}\nabla h^*(\gamma \widehat{g}(x))$. The decoding $d_{h,\nabla h}$ corresponds to assign to $\nabla h^*(\ghat(x))$ an output element $z\in\calZ$ such that $\nabla h^*(\ghat(x))\in\calH_0(z)$.}
    \label{fig:MAP}
\end{figure}

\paragraph{On calibration of CRFs and MAP assignment. } 

Using MAP as a decoding mapping does not calibrate CRFs to any discrete loss in general. This is not the case for multinomial logistic regression (which is the equivalent method in multiclass classification), where our decoding corresponding to the multiclass 0-1 loss is exactly MAP assignment (see \cref{app:multinomiallogistic}). 

A way to understand the difference between both decoding mappings is to write MAP assignment in terms of $\nabla h^*$ as:
\begin{align*}
&\fhat_{\operatorname{MAP}}(x) = \argmax_{y\in\calY}~\langle\phi(y), \widehat{g}(x)\rangle \\
    &= \phi^{-1}\left(\lim_{\gamma\to\infty}\argmax_{\mu\in\calM}\left\{\langle\gamma \widehat{g}(x), \mu\rangle - h(\mu)\right\}\right) \\
    &= \phi^{-1}\left(\lim_{\gamma\to\infty}\nabla h^*(\gamma \widehat{g}(x))\right).
\end{align*}
With this form we can compare it to the decoding of our framework which is
\begin{equation*}
    \fhat(x) = \argmin_{z\in\calZ}~\langle\psi(z), \nabla h^*(\ghat(x))\rangle.
\end{equation*}
See \cref{fig:MAP} with explanation.

Finally, we provide the proof of the lower bound on $\zeta_h$ given by \cref{th:lowerboundcalibration}, which is based on the computation of the strong convexity constant w.r.t the Euclidean distance of the negative maximum-entropy potential.
\vspace{3pt}

\begin{proof}[Proof of \cref{prop:calibrationCRFs}]
Recall that the strong convexity constant of a Legendre-type function $h$  w.r.t a norm $\|\cdot\|$ is the inverse of the Lipschitz constant of $\nabla h^*$ w.r.t $\|\cdot\|_{*}$. In this case, $h^*$ corresponds to the partition function $h^*(v) = \log\left(\sum_{y'\in\calY}\exp(\langle v, \phi(y')\rangle\right)$,
and the Hessian corresponds to the Fisher Information matrix which in this case is equal to the covariance $\Sigma(v)$ of $\phi(y)$ under $p_{\phi}(\cdot|v)$, where $p_{\phi}(\cdot|v)=\exp\langle\phi(y),v\rangle/(\sum_{y'\in\calY}\exp\langle\phi(y'),v\rangle)$ is the exponential family with sufficient statistics $\phi$ and parameter vector $v$. Hence, the strong convexity constant of $h$ under the Euclidean norm is the maximal spectral norm of the covariance, which can be upper bounded as
\begin{align*}
   \sup_{v\in\calV} \|\Sigma(v)\|_2
    &= \sup_{v\in\calV}~\left\|\sum_{y\in\calY}q_{\phi}(y|v)\cdot\phi(y)\phi(y)^T\right\|_2 \\ 
    & \leq \sup_{q\in\operatorname{Prob}(\calY)}~\left\|\sum_{y\in\calY}q(y)\cdot\phi(y)\phi(y)^T\right\|_2 \\
    &= \sup_{y\in\calY}~\|\phi(y)\|_2^2. 
\end{align*}
\end{proof}

\section{Binary Classification} \label{app:binaryclassification}
We will present the cost-sensitive case to highlight the fact that a $\phi$-calibrated loss can be calibrated to multiple losses. In this case $\calZ=\calY=\{-1,1\}$ and consider the following cost-sensitive loss $L$ defined as $L(-1,1)=2-c, L(1,-1)=c$ and 0 otherwise with $0<c\leq 1$.
We consider the following embeddings
$\psi(1)=(0,c)^T$, $\psi(-1)=(2-c, 0)^T$, $\phi(1)=(1,0)^T$, $\phi(-1)=(0,1)^T$.
In this case $\calH=\Rspace{2}$, $\calM=\Delta_2$, $r=1$, $k=2$. Hence, the marginal polytope is not full-dimensional. The decoding corresponds to $d(v)=\operatorname{sign}(2q-c)$, where we will abuse notation and set $q=q(Y=1)=\mu_1=\Expect_{Y\sim q}\phi_1(Y)$.

We will focus on surrogate \emph{margin losses} \cite{bartlett2006convexity}, which are losses of the form $S(v, y) = \Phi(yv)$ with $\calV=\Rspace{}$, where $\Phi:\Rspace{}\rightarrow\Rspace{}$ is a non-increasing function with $\Phi(0)=1$. The link function is computed as
\begin{equation}\label{eq:linkmarginloss}
    t(q) = \argmin_{v\in\calV}~\Expect_{Y\sim q}\Phi(Yv).
\end{equation}
Note that it is always the case that the link is symmetric around $q=1/2$, i.e., $t(q)(2q-1)>0$ for $q\neq 1/2$. Hence, in the non-cost-sensitive case ($c=1$), the decoding can be simplified to $d(v) = \text{sign}(2t^{-1}(v)-1) = \text{sign}(v)$. Moreover, the potential function can be computed as
\begin{equation}\label{eq:hmarginloss}
    -h(q) = \min_{v\in\calV}~\Expect_{Y\sim q}\Phi(Yv),
\end{equation}
and it is also symmetric around 1/2.
In the following, we prove that logistic, exponential and square margin losses are $\phi$-calibrated, squared hinge and modified Huber satisfy \cref{eq:linkmarginloss} for injective $t:\Delta_2\rightarrow\calV$ but don't have a BD representation extension to $\calV$, and hinge loss does not satisfy \cref{eq:linkmarginloss} because the corresponding $t$ is not injective.

Here we present those examples and provide the corresponding BD representation (if applicable) and the calibration function using \cref{th:binarycalibration}, which states that $\zeta_h(\varepsilon)=h((1+\varepsilon)/2)-h(1/2)$ when $c=1$.

\begin{remark}[Notation] Throughout this section, we will identify 
    $\Delta_2$ with $[0,1]$ and $\operatorname{affhull}(\Delta_2)$ to $\Rspace{}$ by projecting onto the first coordinate.
\end{remark}

\paragraph{Logistic.} The logistic loss corresponds to $\Phi(u) = \log(1 + \exp(-u))$.
\begin{equation}
\calD=[0,1],\quad
    h(q) = -\operatorname{Ent}(q), \quad t(q)=\log\left(\frac{q}{1-q}\right), \quad \zeta_h(\varepsilon) = 1-\operatorname{Ent}\left(\frac{1+\varepsilon}{2}\right).
\end{equation}
The link is the canonical link and corresponds to $t(q)=h'(q)=\log(q/(1-q))$ with inverse $t^{-1}(v)=(h^*)^{'}(v)=(1+e^{-v})^{-1}$.

\paragraph{Exponential.} The exponential loss corresponds to $ \Phi(u) = \exp(-u)$.
\begin{align*}
    \calD=[0,1],\quad
    \Phi(u) = \exp(-u), \quad h(q) = -2\sqrt{q(1-q)} \\
    t(q) = \frac{1}{2}\log\left(\frac{q}{1-q}\right), \quad \zeta_h(\varepsilon) = 1 - \sqrt{1-\varepsilon^2}.
\end{align*}
The link corresponds to $t(q) = \frac{1}{2}\log(q/(1-q))$ with inverse $t^{-1}(v)=(1+e^{-2v})^{-1}$. It does not correspond to the canonical link, which is
$(h^*)^{'}(v)=\frac{1}{2}\left(1-\frac{u}{\sqrt{4+u^2}}\right)$ and $h'(q)=\frac{1-2q}{\sqrt{q(1-q)}}$.

\paragraph{Square.} The square loss corresponds to $\Phi(u) = (1-u)^2$.
\begin{equation}
       \calD=\Rspace{},\quad
       h(q) = -4q(1-q), \quad t(q) = 2q-1, \quad \zeta_h(\varepsilon) = \varepsilon^2.
    \end{equation}
The link corresponds to $t(q) = 2q-1$ with inverse $t^{-1}(v)=(v+1)/2$. It does correspond to the canonical link up to a multiplicative factor because $(h^*)^{'}(v)=(4+v)/8$ and $h'(p)=4(2q-1)$.

\paragraph{Squared Hinge.} The squared hinge loss corresponds to 
$\Phi(u) = (\max(1-u, 0))^2$.
The link and potential is the same in $\Delta_2$ as the square margin loss. However, in this case the excess bayes surrogate risk reads (see \cite{zhang2004statisticalbehavior})
\begin{equation}\label{eq:excessbayesrisksquaredhinge}
    \delta s(v, q) = (2q-1-v)^2 - q\max(v-1,0)^2 - (1-q)\min(0, v+1)^2.
\end{equation}
We know that for $v\in t(\Delta_2)=[-1,1]$,  $\delta s(v, q)=4(q-t^{-1}(v))^2=D_h(q, t^{-1}(v))$ as the square loss. However this BD representation can't be extended to $\calV=\Rspace{}$ (see \cref{prop:squaredhingenonphical}). 
   
\paragraph{Modified Huber loss.}
The Modified Huber loss \cite{zhang2004statisticalbehavior} corresponds to
\begin{equation}
    \Phi(u) = \left\{
    \begin{array}{ll}
        0 & \text{if } u\geq 1 \\
        -4u & \text{if } u\leq -1\\
        (1-u)^2 & \text{otherwise}
    \end{array}
    \right. .
\end{equation}
The excess Bayes surrogate risk reads (see \cite{zhang2004statisticalbehavior})
\begin{equation}\label{eq:excessbayesriskhuber}
    \delta s(v,q) = (2q-1-T(v))^2 + 2|2q-1-T(v)||q-T(v)|,
\end{equation}
where $T(v)=\min(\max(v, -1), 1)$.
As the squared hinge, it has the same BD representation as the squared margin loss in $t(\Delta_2)=[-1,1]$ but it can't be extended to $\calV=\Rspace{}$ (see \cref{prop:squaredhingenonphical}). 

\paragraph{Hinge. } The hinge loss corresponds to $\Phi(u)=\max(1-u,0)$. We have that
\begin{equation}
   t(q) = \operatorname{sign}(2q-1).
\end{equation}
Note that in this case $t$ is not injective.

\begin{proposition}\label[proposition]{prop:squaredhingenonphical} The Bregman divergence representation of the squared hinge and modified Huber can not be extended to $\Rspace{}$.
\end{proposition}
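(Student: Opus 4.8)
The plan is to argue by contradiction, exploiting the fact that for both surrogates $\Phi$ is identically $0$ on $[1,\infty)$, so the conditional surrogate risk at the degenerate distribution $\delta_1$ has a whole half-line of minimizers, something no injective link can reproduce. Suppose then that squared hinge (resp.\ modified Huber) is $\phi$-calibrated, i.e.\ by \cref{def:phicalibrated} it admits a $(h,t,\phi)$-BD representation on the full surrogate space $\calV=\R$ as in \cref{def:BDrepresentation}: there are an interval $\calD\supseteq\calM=\Delta_2$ (identified with $[0,1]$, as in the rest of the section), a strictly convex and differentiable $h:\calD\to\R$, and a continuous bijective link $t:\calD\to\R$ with $\delta s(v,q)=D_h(\mu(q),t^{-1}(v))$ for all $v\in\R$ and all $q\in\operatorname{Prob}(\{-1,1\})$.

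The decisive step is to specialize this identity to $q=\delta_1$ (the point mass on $y=1$), for which $\mu(\delta_1)=\phi(1)\in\calM\subseteq\calD$ is identified with $1\in[0,1]$; thus $\delta s(v,\delta_1)=D_h(1,t^{-1}(v))$ for every $v\in\R$. On the other hand, setting $q=1$ in the explicit formulas \eqref{eq:excessbayesrisksquaredhinge} and \eqref{eq:excessbayesriskhuber} and restricting to $v\ge 1$ (so $\max(v-1,0)=v-1$ and $T(v)=1$) gives $\delta s(v,\delta_1)=(1-v)^2-(v-1)^2=0$ for squared hinge and $\delta s(v,\delta_1)=0$ for modified Huber. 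Hence $D_h(1,t^{-1}(v))=0$ for all $v\ge 1$, and since $h$ is strictly convex and differentiable on $\calD$ its Bregman divergence vanishes only on the diagonal, so $t^{-1}(v)=1$ for every $v\ge 1$. This contradicts the injectivity of $t^{-1}$ (e.g.\ $t^{-1}(1)=t^{-1}(2)=1$), which holds because $t$ is a bijection $\calD\to\R$. The same argument handles both losses at once.

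The main (mild) obstacle is to keep the argument honest about what \cref{def:BDrepresentation} actually grants for the hypothetical extension: the version above never uses that the extended link must agree on $[-1,1]$ with the square-loss link $v\mapsto(v+1)/2$ (it must, by the essential uniqueness in \cref{th:compositerepresentation}, but we only need $t^{-1}$ to be globally defined and injective on $\R$ and the BD identity to hold at the single distribution $\delta_1$). If one wished to avoid the vertex distribution, an alternative is to match powers of $q$: for $v\ge 1$ formula \eqref{eq:excessbayesrisksquaredhinge} gives $\delta s(v,q)=-4q(1-q)+(1-q)(v+1)^2$, i.e.\ $-4q(1-q)$ plus an affine function of $q$, which forces any extension of $h$ past $1$ to satisfy $h(p)=h'(p)(p-1)$ with $h(1)=0$ and $h'(1)=4$; this ODE has only the affine solution $h(p)=4(p-1)$, again contradicting strict convexity. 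Either way, the phenomenon is exactly the one anticipated in \cref{rk:smoothnessconvexity}: smoothness of $S$ alone does not guarantee an injective link, and here the flatness of $\Phi$ on $[1,\infty)$ makes $\argmin_v s(v,\delta_1)$ a ray rather than a point.
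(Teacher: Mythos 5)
Your proposal is correct and rests on the same key observation as the paper's proof: at the point mass on $y=1$ the excess conditional surrogate risk vanishes identically for all $v\ge 1$, which is incompatible with a BD representation on all of $\R$. The only difference is how the contradiction is closed — the paper deduces that the extended $h$ would have to be affine past $1$ and then contradicts this with the $v$-dependence of $\delta s(v,q)$ for $q<1$, whereas you invoke strict convexity of $h$ to force $t^{-1}\equiv 1$ on $[1,\infty)$ and contradict injectivity of the link, a slightly more direct finish that also handles both losses at once.
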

\begin{proof}[Proof]
We will only do the proof for squared hinge, as the case for modified Huber is analogous.

Observe that $\delta s(v,1)=0$ for $v\geq 1$ and so for any right continuous extension of $t(q)=2q-1$ to $[1,+\infty)$, $\delta s(t(q'),1)=0$ for $q'\geq 1$. In particular, this means that the extension of $h$ must be linear for all $q'\geq 1$. And so 
$\delta s(v, q)$ should be independent of $v\geq 1$ for any $q<1$.
However, this is not the case. Hence, squared hinge does not have a BD representation extension to $\calV=\Rspace{}$.
\end{proof}

Finally, we prove the form of the calibration function for binary margin losses, which can be found at \cite{bartlett2006convexity} for $c=1$ and at \cite{scott2012calibrated} for the asymmetric case.

\begin{proposition}[Binary 0-1 calibration function]\label[proposition]{th:binarycalibration}
The calibration function for a $\phi$-calibrated margin loss can be written as
 $\zeta_h(\varepsilon) = \min_{\alpha\in\{-\varepsilon,+\varepsilon\}}D_h((c+\alpha)/2, c)$. Moreover, if $c=1$, then
the calibration function simplifies to $\zeta_h(\varepsilon) = h\left(\frac{1+\varepsilon}{2}\right)-h\left(\frac{1}{2}\right)$.
\end{proposition}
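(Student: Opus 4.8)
The plan is to read off both statements from the exact formula \cref{th:exactcalibration} (which applies because the margin loss is assumed $\phi$-calibrated), once the calibration sets of this particular loss are written down. Throughout I work in the scalar parametrization of the preceding Remark, identifying a moment vector $\mu$ with $q=\mu_1\in\Rspace{}$, so that $\calM=[0,1]$, $\calD\supseteq[0,1]$, and $D_h(q',q)=h(q')-h(q)-(q'-q)h'(q)$ live on the real line.

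First I would record the inner products entering the calibration sets. Since $\psi(1)-\psi(-1)=(c-2,c)$, for $\mu=(q,1-q)$ one gets $\langle\psi(1)-\psi(-1),\mu\rangle=c-2q$ and $\langle\psi(-1)-\psi(1),\mu\rangle=2q-c$; as $\calZ=\{-1,1\}$ has only this one nontrivial pair, this yields
\begin{align*}
\calH_0(1)&=\{q\ge c/2\}, & \calH_\varepsilon(1)^c\cap\calM&=\{q\in[0,1]:q<(c-\varepsilon)/2\},\\
\calH_0(-1)&=\{q\le c/2\}, & \calH_\varepsilon(-1)^c\cap\calM&=\{q\in[0,1]:q>(c+\varepsilon)/2\}.
\end{align*}
Substituting into \cref{eq:exactcalibration}, $\zeta_h(\varepsilon)$ is the smaller of $D_h(\calH_\varepsilon(1)^c\cap\calM,\calH_0(1)\cap\calD)$ and $D_h(\calH_\varepsilon(-1)^c\cap\calM,\calH_0(-1)\cap\calD)$; in each, the ``from'' interval lies entirely on one side of the decoding point $c/2$ and the ``to'' interval on the other, with $c/2$ as the facing boundary.

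The main step — the only one that is not bookkeeping — is a monotonicity lemma for Bregman divergences between intervals: if $h$ is convex and differentiable on $\calD$ and $I,J\subseteq\calD$ are intervals with every point of $I$ on one (fixed) side of every point of $J$, then $\inf_{u'\in I,\,u\in J}D_h(u',u)=D_h(p_I,p_J)$, where $p_I\in\overline I$, $p_J\in\overline J$ are the endpoints facing each other. I would prove it from the three-point identity $D_h(a,c)=D_h(a,b)+D_h(b,c)+(a-b)(h'(b)-h'(c))$: fixing $u'\in I$ and taking $a=u'$, $b=p_J$, $c=u$ gives $D_h(u',u)\ge D_h(u',p_J)$ since $D_h(p_J,u)\ge0$ and $(u'-p_J)$ and $(h'(p_J)-h'(u))$ have the same sign (monotonicity of $h'$, from convexity); then $a=u'$, $b=p_I$, $c=p_J$ gives $D_h(u',p_J)\ge D_h(p_I,p_J)$ by the same nonnegativity; and $D_h(p_I,p_J)$ is the infimum as a limit of feasible values ($h,h'$ continuous on $\calD$). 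Applying this with $I=\calH_\varepsilon(-1)^c\cap\calM$, $J=\calH_0(-1)\cap\calD$ (facing endpoints $(c+\varepsilon)/2$ and $c/2$) gives the $z=-1$ term $=D_h((c+\varepsilon)/2,c/2)$, and with $I=\calH_\varepsilon(1)^c\cap\calM$, $J=\calH_0(1)\cap\calD$ (facing endpoints $(c-\varepsilon)/2$ and $c/2$) the $z=1$ term $=D_h((c-\varepsilon)/2,c/2)$; when the relevant ``from'' set is empty ($\varepsilon$ too large) the term is $+\infty$, consistently with reading $D_h(\cdot,c/2)=+\infty$ off $\calD$. Hence $\zeta_h(\varepsilon)=\min_{\alpha\in\{-\varepsilon,+\varepsilon\}}D_h((c+\alpha)/2,c/2)$, the second argument being the decoding threshold $c/2$.

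For the non-cost-sensitive case $c=1$ I would then use that the potential $h$ of a margin loss is symmetric about $1/2$ (established earlier in this section), so $h'(1/2)=0$ and $h((1-\varepsilon)/2)=h((1+\varepsilon)/2)$; thus $D_h((1\pm\varepsilon)/2,1/2)=h((1\pm\varepsilon)/2)-h(1/2)\mp(\varepsilon/2)\,h'(1/2)=h((1+\varepsilon)/2)-h(1/2)$ for both signs, and the minimum over $\alpha$ collapses to $\zeta_h(\varepsilon)=h\big(\tfrac{1+\varepsilon}{2}\big)-h\big(\tfrac12\big)$. The one genuine obstacle is the monotonicity lemma: because $D_h$ is neither symmetric nor jointly convex, one must argue carefully that the infimum of the set-to-set divergence sits at the pair of nearest endpoints; everything else reduces to the explicit form of $\psi$ and of $\calH_\varepsilon(z)$.
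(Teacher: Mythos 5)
Your proposal is correct and follows essentially the same route as the paper's proof: compute the calibration sets explicitly in the scalar parametrization, plug them into \cref{th:exactcalibration}, reduce each set-to-set Bregman divergence to the divergence between the facing endpoints $(c\pm\varepsilon)/2$ and $c/2$, and use the symmetry of $h$ about $1/2$ when $c=1$. The only difference is that you spell out the endpoint-reduction step via a three-point-identity monotonicity lemma (and note the correct second argument $c/2$), whereas the paper asserts that reduction directly; this is a welcome elaboration, not a different argument.
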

\begin{proof}[Proof]
    Note that $\langle\psi(1)-\psi(-1),u\rangle = cu_{2}-(2-c)u_{1}$. Hence,
    \begin{equation}
        \calH_{\varepsilon}(1)=\{u\in\Rspace{2}~|~cu_{2}-(2-c)u_{1}\leq \varepsilon\}.
    \end{equation}
    Taking the intersection with $\calM$ gives 
    \begin{equation*}
        \calH_{\varepsilon}(1)\cap\Delta_2=\{q\in[0,1]~|~c-2q\leq \varepsilon\}=\left[\left(\frac{c-\varepsilon}{2}\right), 1\right].
    \end{equation*}
    Analogously, we obtain $\calH_{\varepsilon}(-1)=\{u\in\Rspace{2}~|~(2-c)u_{1}-cu_{2}\leq \varepsilon\}$ and $\calH_{\varepsilon}(-1)\cap\Delta_2=\left[0, \left(\frac{c+\varepsilon}{2}\right)\right]$.
    Recall that $\calD\supseteq[0,1]$.
    We obtain
    \begin{equation}
        D_h(\calH_{\varepsilon}(1)^c\cap\Delta_2, \calH_0(1)\cap\calD) = 
        D_h([0, (c-\varepsilon)/2], [c/2, \infty)\cap\calD) = D_h((c-\varepsilon)/2, c/2),
    \end{equation}
    and 
    \begin{equation}
        D_h(\calH_{\varepsilon}(-1)^c\cap\Delta_2, \calH_0(-1)\cap\calD) = 
        D_h([(c+\varepsilon)/2, 1], \calD\cap(-\infty,c/2]) = D_h((c+\varepsilon)/2, c/2).
    \end{equation}
    Hence, we obtain the desired result:
    \begin{equation}
        \zeta_h(\varepsilon) = \min_{\alpha\in\{-\varepsilon,+\varepsilon\}}D_h((c+\alpha)/2, c/2).
    \end{equation}
    Finally, setting $c=1$ gives $\zeta_h(\varepsilon) = \min_{\alpha\in\{-\varepsilon,+\varepsilon\}}D_h((1+\varepsilon)/2, 1/2)$.
    Note that if $h$ is convex differentiable and symmetric around $1/2$, then $h((1+\varepsilon)/2)=h((1-\varepsilon)/2)$ and $h'(1/2)=0$, which simplifies the expression to
    \begin{equation}
        \zeta_h(\varepsilon) = h\left(\frac{1+\varepsilon}{2}\right)-h\left(\frac{1}{2}\right).
    \end{equation}
\end{proof}

\section{Multiclass Classification}\label{app:multiclassclassification}

In this case, the loss considered is the 0-1 loss with $\calZ=\calY=\{1,\ldots,k\}$.
The 0-1 loss can be written as $L(z,y) = 1 - \langle e_z,e_y\rangle$,
where $e_z,e_y\in\Rspace{k}$ are vectors of the natural basis in $\Rspace{k}$. Setting $\psi(z)=-e_z$, $\phi(y)=e_y$, the marginal polytope $\calM=\Delta_k$ corresponds to the simplex in $\Rspace{k}$, $r=k-1$, which means that the marginal polytope is not full-dimensional. We write $q_j=\mu_j=q(Y_j=1)$. The decoding mapping can be written as $d(v) = \argmax_{j\in[k]}t_j^{-1}(v)$.
See \cref{fig:multiclass_calibration} for a visualization of the calibration sets.

\subsection{One-vs-all Method}

The \emph{one-vs-all method} \cite{zhang2004statistical} corresponds to $S(v,y) = \Phi(v_y) + \sum_{j\neq y}^k\Phi(-v_j)$ with  $\calV=\Rspace{k}$. The surrogate Bayes risk reads $s(v,q) = \sum_{j=1}^k\{q_j\Phi(v_j) + (1-q_j)\Phi(-v_j)\}$.
Note that one can compute $h$ and $t$ as in \eqref{eq:linkmarginloss} and 
\eqref{eq:hmarginloss} independently for each coordinate. Hence, we obtain
\begin{equation}\label{eq:handtonevsall}
    h(q) = \sum_{j=1}^k\bar{h}(q_j) \hspace{1.5cm} t(q) = (\bar{t}(q_j))_{j=1}^k,
\end{equation}
where $\bar{h},\bar{t}$ are the potential and the link corresponding to the associated margin loss. As the individual link $\bar{t}$ is invertible and $\bar{t}(q)(2q-1)>0$ for $q\neq 1/2$, it means that it is increasing and so $t=(\bar{t})_{j=1}^k$ is order preserving. This implies that the decoding can be simplified to $d(v) = \argmax_{j\in[k]}~v_j$.
Note that if the margin loss is $\phi$-calibrated, then the associated one-vs-all method is $\phi$-calibrated for multiclass with 
$h,t$ given by \eqref{eq:handtonevsall} and $\calD=\bar{\calD}^k$, where $\bar{\calD}$ is the (extended) domain of the margin loss. Note that in this case the marginal polytope is always a strict subset of $\calD$: $\Delta_k\subsetneq[0,1]^k\subseteq\bar{\calD}^k$.

We know provide the proof of \cref{th:onevsallcalibration}, which computes the exact calibration function for the one-vs-all method.

\begin{proof}[Proof of \cref{th:onevsallcalibration}]
    Note that by exploiting the symmetries of the problem, one can considerably simplify the problem to 
    \begin{equation}\label{eq:calibrationmulticlass}
        \zeta_h(\varepsilon) = D_h(\{p_2\geq p_1+\varepsilon,p_2\geq p_j\}\cap\Delta_k, \{q_1\geq q_2\}\cap\bar{\calD}^k).
    \end{equation}
    Indeed, as all of the quantities in \cref{eq:exactcalibration} are invariant by permutation, hence, one can get rid of the minimization over $\calZ$ and set $z=1$. Then, also by symmetry, one can reduce to problem to the comparison between $z=1$ and $z=2$.
    
    The idea of the proof is to show that the minimizer of the following minimization problem
    \begin{equation}\label{eq:reductiononesvall}
        \min_{\substack{q_1\geq q_2 \\ p_2\geq p_1 + \varepsilon}} D_h(p,q)
    \end{equation}
    is achieved at the points $q^\star=(1/2,1/2,0,\ldots,0)$ and $p^\star=((1+\varepsilon)/2, (1-\varepsilon)/2, 0, \ldots, 0)$.
    Then, as the constraints in \cref{eq:reductiononesvall} are included in \cref{eq:calibrationmulticlass} and $q^\star\in\calH_0(1)\cap\bar{\calD}^k\subset \{q_1\geq q_2\}$ and 
    $p^\star\in\calH_{\varepsilon}(1)^c\cap\Delta_k\subset \{p_2\geq p_1 + \varepsilon\}$, the result follows.
    
    Note that as $D_h(p,q)=\sum_{j=1}^kD_{\bar{h}}(p_j, q_j)$, we already have $p^\star_j=q^\star_j=0$ for $j=3,\ldots,k$.
    Hence, we reduce the problem to minimizing $D_{\bar{h}}(p_1,q_1) + D_{\bar{h}}(p_2,q_2)$ in $\{q_1\geq q_2\}\cap\{p_2\geq p_1 + \varepsilon \}$.
    Note that the minimum must be necessarily achieved at the boundary. So by setting $\bar{q}=q_1=q_2$ and $\bar{p}=p_1$, one has the following unconstrained problem $D_{\bar{h}}(\bar{p},\bar{q}) + D_{\bar{h}}(\bar{p}+\varepsilon,\bar{q})$.
    Now, using the fact that $\bar{h}$ is symmetric around $1/2$ and its Hessian is non-decreasing in $\bar{\calD}\cap[1/2,\infty)$, we have that $\bar{q}=1/2$ and $\bar{p}=(1-\varepsilon)/2$ is a minimizer. Hence, the result follows.
\end{proof}
\paragraph{Comparison with lower bounds. }
Note that if $\bar{h}$ is $({1}/{\beta_{\|\cdot\|_2}})$-strongly convex in $\bar{\calD}$, then $h$ is $({1}/{\beta_{\|\cdot\|_2}})$-strongly convex in $\calD=\bar{\calD}^k$. Moreover, using that $\max_{z'\neq z}\|e_z-e_{z'}\|_2^2=2$, \cref{th:improvedlowerbound} gives
\begin{equation}
    \zeta_h(\varepsilon)\geq \frac{\varepsilon^2}{4\beta}.
\end{equation}
Note that for square margin loss, where $\bar{h}(q)=-4q(1-q)$ with $\beta_{\|\cdot\|_2}=\frac{1}{8}$, this lower bound is tight.

\subsection{Multinomial Logistic}
\label{app:multinomiallogistic}

Another important example is the \emph{multinomial logistic} surrogate, which corresponds to the loss \eqref{eq:CRFloss} with $\calM=\Delta_k$, which is $S(v,y) = \log(\sum_{j=1}^k\exp(v_j)) - v_y$ where $\calV=\{v\in\Rspace{k}~|~\sum_{j=1}^kv_j=0\}\cong\Rspace{k-1}$. In this case $\calD=\Delta_k$, $t_j^{-1}(v)=\frac{\exp(v_j)}{\sum_{\ell=1}^k\exp(v_{\ell})}$ and so the decoding is also simplified to $d(v)=\argmax_{j\in[k]}~v_j$ by taking the logarithm coordinate-wise, which is a monotone function.
\paragraph{Lower bound on calibration function. }
Note that the entropy is $1$-strongly convex w.r.t the $\|\cdot\|_1$ norm over the simplex. As $\|\cdot\|_{\infty}$ is the associated dual norm and $\max_{z}\|\psi(z)\|_{\infty}=\max_{z}\|e_z\|_{\infty}=1$, \cref{th:lowerboundcalibration} gives
\begin{equation}
    \zeta_h(\varepsilon)\geq \frac{\varepsilon^2}{8}.
\end{equation}

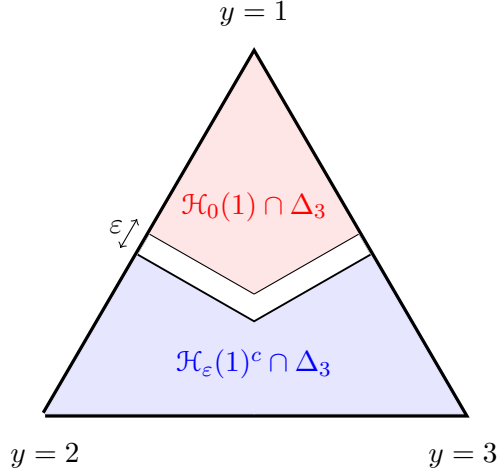
\begin{figure}[t!]
\begin{center}
    \begin{tikzpicture}[domain=0:1, xscale=5.5, yscale=5.5]
    \draw [line width=1mm, color=black] (0,0) -- (1, 0) -- (0.5, 0.866) -- (0, 0); 
    \draw [fill, color=blue!10] (0, 0) -- (0.25, 0.433) -- (0.5, 0.866/3)
    -- (0.5, 0) -- (0,0);
    \draw [fill, color=blue!10] (0.5, 0) --(0.5, 0.866/3) --
    (0.75, 0.433) -- (1, 0) --  (0.5, 0);
    
    \draw[line width=6.5mm, color=black] (0.25, 0.433) -- (0.5, 0.866/3) -- (0.75, 0.433);
    \draw[line width=6mm, color=white] (0.25, 0.433) -- (0.5, 0.866/3) -- (0.75, 0.433);
    \draw [fill, color=red!10] (0.25, 0.433) -- (0.5, 0.866) --
    (0.75, 0.433) -- (0.5, 0.866/3) -- (0.25, 0.433);
    \draw(0.25, 0.433) -- (0.5, 0.866/3);
    \draw(0.75, 0.433) -- (0.5, 0.866/3);
    
    \draw[<->] (0.18, 0.4) -- +  (60:0.08);
    \node at (0.17, 0.45) {$\varepsilon$};
    
    \node at (0, -0.1) {$y=2$};
    \node at (1, -0.1) {$y=3$};
    \node at (0.5, 0.966) {$y=1$};
    
    \node[color=red] at (0.5, 0.5) {$\calH_0(1)\cap\Delta_3$};
    \node[color=blue] at (0.5, 0.12) {$\calH_\varepsilon(1)^c\cap\Delta_3$};
\end{tikzpicture}
    \caption{Illustration of calibration sets for the multiclass with 0-1 loss and $k=3$ labels.}
    \label{fig:multiclass_calibration}
\end{center}
\end{figure}

\section{Multilabel Classification with Hamming Loss} \label{app:multilabelclassification}

 In this case $\calZ=\calY=\{-1,1\}^k$. We have that 
 \begin{equation}\label{eq:hamming}
     L(z,y) = \frac{1}{k}\sum_{j=1}^k1(z_j\neq y_j) = 
    \frac{1}{2} +\sum_{j=1}^k\psi_j(z)\phi_j(y),
 \end{equation}
with $\psi(z)=-z/(2k)$ and $\phi(y)=y$.
In this case $\calM=[-1,1]^k$ which corresponds to the cube.

\subsection{Independent Classifiers}\label{sec:independentclassifier}

The surrogates considered are $S(v,y) = \sum_{j=1}^m\Phi(y_jv_j)$, with $\calV=\Rspace{k}$.
The surrogate Bayes risk reads $s(v,q) = \sum_{j=1}^k\{\frac{\mu_j+1}{2}\Phi(v_j) + \frac{1-\mu_j}{2}\Phi(-v_j)\}$. Note the similarity with the one-vs-all method from multiclass classification. If $\bar{h}$ and $\bar{t}$ are the potential and link for the associated margin loss, we have:
\begin{equation}\label{eq:handthamming}
    h(\mu) = \sum_{j=1}^k\bar{h}\left(\frac{\mu_j+1}{2}\right) \hspace{1.5cm} t(\mu) = \left(\bar{t}\left(\frac{\mu_j+1}{2}\right)\right)_{j=1}^k.
\end{equation}
The decoding simplifies to
$d(v) = (\text{sign}(v_j))_{j=1}^k$.

The calibration function $\zeta_h$ can be computed exactly and it is $k$ times the calibration function of the margin loss: $\zeta_h(\varepsilon) = k\cdot\zeta_{\bar{h}}(\varepsilon)$.
\begin{proposition}[Calibration function for Hamming loss] \label{prop:hammingcalibration}
The calibration function for the Hamming loss is
$$\zeta(\varepsilon) = k\cdot \zeta_{\bar{h}}(\varepsilon).$$
\end{proposition}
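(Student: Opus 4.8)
The plan is to feed the product structure of the Hamming setting into the exact formula of \cref{th:exactcalibration} and to reduce the resulting optimization to $k$ decoupled copies of the binary problem solved in \cref{th:binarycalibration}. Throughout I would work in the ``probability'' coordinates $p_j=(\mu_j+1)/2$, which turn $\calM=[-1,1]^k$ into $[0,1]^k$, turn $\calD=\bar\calD^k$ into $\bar\calD^k$ in the new variables (with $\bar\calD\supseteq[0,1]$ symmetric around $\tfrac12$), and, by \cref{eq:handthamming}, make the Bregman divergence separate coordinatewise: $D_h(u',u)=\sum_{j=1}^{k}D_{\bar h}(p'_j,p_j)$.

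\emph{Symmetrization and identification of the two sets.} Since $\psi(z)=-z/(2k)$ and $\phi(y)=y$, a permutation of coordinates and a sign flip $z_j\mapsto -z_j$ (paired with $u_j\mapsto -u_j$, i.e.\ $p_j\mapsto 1-p_j$) leave $\calM$, $\calD$, $D_h$ and every calibration set $\calH_\varepsilon(z)$ invariant; hence the minimand in \cref{th:exactcalibration} takes the same value at all vertices $z\in\calZ=\{-1,1\}^k$, and I may fix $z=\mathbf 1=(1,\dots,1)$. For this $z$ the decoding $d(v)=(\operatorname{sign}(v_j))_{j=1}^{k}$ yields $\calH_0(\mathbf 1)\cap\calD=\{p\in\bar\calD^k:p_j\ge\tfrac12\ \forall j\}$. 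For the $\varepsilon$-set, the competitor $z'$ maximizing $\langle\psi(\mathbf 1)-\psi(z'),u\rangle$ flips exactly the coordinates with $u_j<0$, so $\max_{z'\in\calZ}\langle\psi(\mathbf 1)-\psi(z'),u\rangle=\tfrac1k\sum_j(u_j)_-=\tfrac2k\sum_j(\tfrac12-p_j)_+$; thus $\calH_\varepsilon(\mathbf 1)^c\cap\calM$ is (the closure of) $\{p\in[0,1]^k:\sum_j(\tfrac12-p_j)_+\ge\tfrac{k\varepsilon}{2}\}$. Substituting into \cref{th:exactcalibration} and using the coordinatewise splitting of $D_h$,
\[ \zeta_h(\varepsilon)=\inf\Big\{\textstyle\sum_{j}D_{\bar h}(p_j,q_j)\ :\ p\in[0,1]^k,\ \sum_j(\tfrac12-p_j)_+\ge\tfrac{k\varepsilon}{2},\ q\in\bar\calD^k,\ q_j\ge\tfrac12\ \forall j\Big\}. \]

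\emph{Decoupling and Jensen.} For fixed $p$, convexity of $\bar h$ makes $q\mapsto D_{\bar h}(p,q)$ nondecreasing for $q\ge p$ (the tangent of $\bar h$ at $q$ evaluated at $p<q$ decreases in $q$), so the inner minimum over $q_j\in\bar\calD\cap[\tfrac12,\infty)$ equals $0$ when $p_j\ge\tfrac12$ and $D_{\bar h}(p_j,\tfrac12)$ when $p_j<\tfrac12$ (recall $\tfrac12\in\bar\calD$). Writing $s_j=(\tfrac12-p_j)_+$ and, using $\bar h'(\tfrac12)=0$ (symmetry of $\bar h$), $g(s):=D_{\bar h}(\tfrac12-s,\tfrac12)=\bar h(\tfrac12-s)-\bar h(\tfrac12)$, the problem becomes $\zeta_h(\varepsilon)=\inf\{\sum_j g(s_j):s_j\in[0,\tfrac12],\ \sum_j s_j\ge\tfrac{k\varepsilon}{2}\}$. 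Since $g$ is convex (affine precomposition of the convex $\bar h$), increasing, with $g(0)=0$, Jensen's inequality gives $\sum_j g(s_j)\ge k\,g(\tfrac1k\sum_j s_j)\ge k\,g(\tfrac\varepsilon2)$, with equality at $s_j=\tfrac\varepsilon2$ (that is $p_j=\tfrac{1-\varepsilon}{2}$, $q_j=\tfrac12$), which is admissible. Finally $g(\tfrac\varepsilon2)=\bar h(\tfrac{1-\varepsilon}{2})-\bar h(\tfrac12)=\bar h(\tfrac{1+\varepsilon}{2})-\bar h(\tfrac12)=\zeta_{\bar h}(\varepsilon)$ by \cref{th:binarycalibration}, so $\zeta_h(\varepsilon)=k\,\zeta_{\bar h}(\varepsilon)$.

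The main obstacle is the middle step: collapsing the exponentially many linear constraints defining $\calH_\varepsilon(z)$ into the single separable hinge-sum inequality $\sum_j(\tfrac12-p_j)_+\ge\tfrac{k\varepsilon}{2}$, and handling open-versus-closed constraint sets so that the extremal configuration $p_j=\tfrac{1-\varepsilon}{2}$ is admissible (this is what makes the bound tight, and at the endpoint $\varepsilon=1$ it is what keeps $\zeta_h$ finite). Once the optimization is in separable scalar form, the decoupling over $q$ and the Jensen step are routine.
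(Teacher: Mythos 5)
Your argument is correct, but it follows a genuinely different route from the paper's. You work entirely inside the exact geometric formula of \cref{th:exactcalibration}: after symmetrizing to $z=\mathbf 1$, you explicitly collapse the exponentially many constraints defining $\calH_\varepsilon(\mathbf 1)$ into the single separable condition $\sum_j(\tfrac12-p_j)_+\geq\tfrac{k\varepsilon}{2}$ (which is indeed what $\max_{z'}\langle\psi(\mathbf 1)-\psi(z'),u\rangle=\tfrac1k\sum_j(u_j)_-$ gives), minimize out each $q_j$ using monotonicity of $q\mapsto D_{\bar h}(p,q)$ for $q\geq p$, and solve the resulting separable scalar program by Jensen, landing on $k\,\zeta_{\bar h}(\varepsilon)$ with the witness $p_j=\tfrac{1-\varepsilon}{2}$, $q_j=\tfrac12$. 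The paper instead proves the lower bound $\zeta_h\geq k\,\zeta_{\bar h}$ at the level of excess Bayes risks, writing $\delta\ell(d(v),q)=\tfrac1k\sum_{j:\,z_j\neq d_j(v)}|\mu_j|$ and applying Jensen to $\zeta_{\bar h}$ together with the fact that $\zeta_{\bar h}$ calibrates each individual binary classifier, thereby never needing to identify the sets $\calH_\varepsilon(z)^c\cap\calM$; tightness is then obtained by plugging the two explicit points $\mu_0=0$ and $\mu_\varepsilon=(-\varepsilon)_{j=1}^k$ into \cref{eq:exactcalibration}. Both proofs rest on the coordinatewise splitting \cref{eq:handthamming}, convexity/Jensen, and the symmetry $\bar h'(\tfrac12)=0$ via \cref{th:binarycalibration}; what your version buys is an exact description of the feasible sets and hence a fully self-contained evaluation of the variational problem (and you are right to flag the open-versus-closed issue: the complement set is defined by a strict inequality, so the optimum is attained only in the closure and one concludes by continuity of $D_h$ --- a point the paper itself glosses over, since its witness $\mu_\varepsilon$ sits exactly on the boundary $\tfrac1k\sum_j(u_j)_-=\varepsilon$). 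What the paper's route buys is brevity and independence from the explicit polyhedral description of $\calH_\varepsilon(z)$. Two minor polish points: your monotonicity claim for $q\mapsto D_{\bar h}(p,q)$ is cleanest via $D_{\bar h}(p,q_2)-D_{\bar h}(p,q_1)=D_{\bar h}(q_1,q_2)+(q_1-p)(\bar h'(q_2)-\bar h'(q_1))\geq 0$ for $p\leq q_1\leq q_2$, which avoids second derivatives; and the symmetry of $\bar\calD$ around $\tfrac12$ that you invoke for the permutation/sign-flip reduction should be stated as a standing assumption on the margin loss (it holds for the logistic, exponential and square cases treated in \cref{app:binaryclassification}).
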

\begin{proof}[Proof]
The proof consists of two parts. First, we show that the lower bound $\zeta_h(\varepsilon) \geq k\cdot\zeta_{\bar{h}}(\varepsilon)$ holds, and then we prove that it is actually tight, by showing it is achieved at a pair of points on the minimization problem \eqref{eq:exactcalibration}.

The excess Bayes risk can be written as 
$\delta\ell(z,q)=\frac{1}{k}\sum_{j|z_j\neq z_j(\mu)}|\mu_j|$, where $z(\mu)$ denotes the optimal prediction (see \cref{eq:optimalprediction}). Note that $|\mu_j|$ is the excess Bayes risk of the binary 0-1 loss, hence, $\zeta_{\bar{h}}(|\mu_j|)\leq \delta s_j(\bar{t}(\mu_j),q)$, where $\delta s_j(v_j,q)=\frac{\mu_j+1}{2}\Phi(v_j) + \frac{1-\mu_j}{2}\Phi(-v_j)$ is the excess surrogate Bayes risk of the $j$-th independent classifier.
Hence, 
\begin{align*}
    \zeta_{\bar{h}}(\delta\ell(d(v),q))
    & = \zeta_{\bar{h}}\left(\frac{1}{k}\sum_{j|z_j\neq d_j(v)}|2\bar{t}^{-1}(v_j)-1|\right) &&(\mu_j=2t^{-1}(v_j)-1) \\ 
    &\leq  \frac{1}{k}\sum_{j|z_j\neq d_j(v)}\zeta_{\bar{h}}(|2\bar{t}^{-1}(v_j)-1|) && (\text{Jensen ineq.}) \\
    &\leq  \frac{1}{k}\sum_{j|z_j\neq d_j(v)}\delta s_j(v_j, q) &&(\zeta_{\bar{h}} \text{ calibrates individual classifiers.}) \\
    & \leq\frac{1}{k}\delta s(v, q). &&
\end{align*}
Hence, $\zeta_h(\varepsilon) \geq k\cdot\zeta_{\bar{h}}(\varepsilon)$.
To prove tightness, consider the point $\mu_0=0=(0)_{j=1}^k$ and the point $\mu_{\varepsilon} = (-\varepsilon)_{j=1}^k$ for all $0\leq\varepsilon\leq 1$. If we denote by $\textbf{1}$ the output $(1,\ldots,1)\in\calZ$, we have
that $\mu_0\in\calH_0(\textbf{1})\cap\calM\subset\calH_0(\textbf{1})\cap\calD$ and $\mu_{\varepsilon}\in\calH_{\varepsilon}(\textbf{1})^c\cap\calM$ for all $0\leq\varepsilon\leq 1$ because
\begin{equation}\label{eq:calibrationsetshamming}
    \calH_{\varepsilon}(\textbf{1}) = \left\{u\in\calH~|~ -\frac{1}{k}\sum_{j~|~b_j=1}u_j\leq\varepsilon, \forall b\in\{0,1\}^k
    \right\}.
\end{equation}
Moreover, its Bregman divergence is $D_h(\mu_{\varepsilon}, \mu_0) = \sum_{j=1}^kD_{\bar{h}}(((\mu_{\varepsilon})_j + 1)/2, (1+(\mu_{0})_j)/2) =\sum_{j=1}^kD_{\bar{h}}((1-\varepsilon)/2, 1/2) = k\cdot \zeta_{\bar{h}}(\varepsilon)$. Hence, the lower bound is tight.
\end{proof}


\section{Ordinal Regression} \label{app:ordinalregression}

In this case $\calZ=\calY=\{1,\ldots, k\}$ and these are ordinal output variables instead of categorical. Which means that there is an intrinsic order between them: $1\prec\cdots\prec k$. This is captured by the absolute error loss function defined as
\begin{equation}
    L(z,y) = |z-y|.
\end{equation}
Note that in this case the loss matrix is full-rank, because it is a Toeplitz matrix. Hence, it can be seen as a ``structured'' cost-sensitive multiclass loss.

Let's consider the following embedding $\phi(y)=(2\cdot1(y\geq j)-1)_{j=1}^{k-1}\in\{-1,1\}^{k-1}$ for both $\calZ$ and $\calY$. In this embedding, we have that
\begin{equation}\label{eq:ordinalhamming}
    L(z,y) =  \frac{k-1}{2}-\frac{1}{2}\sum_{j=1}^{k-1}\phi_j(z)\phi_j(y).
\end{equation}
Comparing the expression above with the affine decomposition of the Hamming loss from the section above, we observe that \cref{eq:hamming} and \cref{eq:ordinalhamming} are proportional by a factor $k-1$. 

The decoding from $\calH$ can be written as
\begin{equation}
    z(\mu) = 1 + \sum_{j=1}^{k-1}1(\mu_j\geq 0)\in\calZ.
\end{equation}
The excess Bayes risk is
\begin{equation}
    \delta\ell(z,q) = \sum_{\substack{z(\mu(q))<j\leq z \\ z<j\leq z(\mu(q))}}|\mu_j(q)|.
\end{equation}
By choosing an embedding $\phi$ different than the canonical one used in multiclass classification ($\phi(y)=e_y$), we have performed an affine transformation to the simplex in $k$ dimensions and project it inside the cube $[-1,1]^{k-1}$. What we have gained is that under this transformation the calibration sets have the same structure as for the Hamming loss for $k-1$ labels. Note however, that the marginal polytope for the Hamming loss is the entire cube, while here is a strict subset of the cube. See \cref{fig:ordinalsimplex} for the calibration sets $\calH_0(z)$ for $k=3$ using the canonical embedding.

\begin{figure}[ht!]
    \centering
    \begin{tikzpicture}[domain=0:1, xscale=5, yscale=5]
    \draw [line width=1mm, color=black] (0,0) -- (1, 0) -- (0.5, 0.866) -- (0, 0); 
    \draw [fill, color=blue!10] (0, 0) -- (0.25, 0.433) -- (0.5, 0.866/3)
    -- (0.5, 0) -- (0,0);
    \draw [fill, color=green!10] (0.5, 0) --(0.5, 0.866/3) --
    (0.75, 0.433) -- (1, 0) --  (0.5, 0);

    \draw [fill, color=red!10] (0.25, 0.433) -- (0.5,0) --
    (0.75, 0.433) -- (0.5, 0.866) -- (0.25, 0.433);

    \draw(0.25, 0.433) -- (0.5, 0);
    \draw(0.75, 0.433) -- (0.5, 0);
    
    \node at (0, -0.1) {$y=1$};
    \node at (1, -0.1) {$y=3$};
    \node at (0.5, 0.966) {$y=2$};
    
    \node[color=red] at (0.5, 0.866/2) {${\scriptstyle \calH_0(2)\cap\Delta_3}$};
    \node[color=blue] at (0.25, 0.866/6) {${\scriptstyle \calH_0(1)\cap\Delta_3}$};
    \node[color=green] at (0.75, 0.866/6) {${\scriptstyle \calH_0(3)\cap\Delta_3}$};
\end{tikzpicture}
    \caption{Partition of the simplex corresponding to the absolute loss for ordinal regression with $\calZ=\calY=\{1,2,3\}$.}
    \label{fig:ordinalsimplex}
\end{figure}
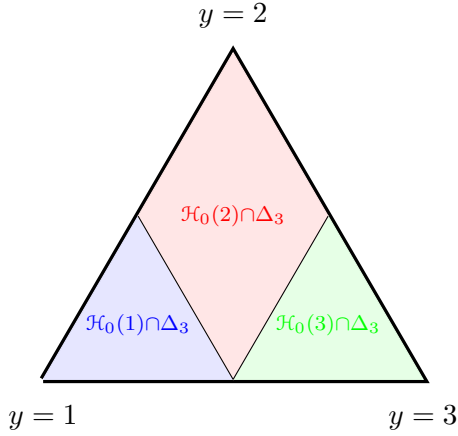

\subsection{All thresholds (AT)} \label{app:AT}

AT methods \cite{lin2006large} correspond to apply an independent classifier (see \cref{sec:independentclassifier}) to the embedding $\phi$. It corresponds to
\begin{equation}
    S(v,y) = \sum_{j=1}^{k-1}\Phi(\phi_j(y)v_j),
\end{equation}
with $\calV=\Rspace{k-1}$. We have that $h(\mu) = \sum_{j=1}^{k-1}\bar{h}((\mu_j+1)/2)$ and $t(\mu) = (\bar{t}((\mu_j+1)/2))_{j=1}^{k-1}$, exactly as for the Hamming loss. Note that in this case however $\calM\subsetneq\calD$ and the decoding mapping is $d(v) = 1 + \sum_{j=1}^{k-1}1(v_j\geq 0)\in\calZ$.

Using the fact that with the embedding $\phi$, the ordinal loss is a $(k-1)$ factor away from the Hamming loss,
and that the marginal polytope is included in the cube, so the minimization is done in a smaller domain, we have that
\begin{equation}\label{eq:calibrationCL}
    \zeta_h(\varepsilon)\geq \zeta_{\text{ham},h}(\varepsilon/(k-1)) = (k-1)\cdot \zeta_{\bar{h}}(\varepsilon/(k-1)),
\end{equation}
where $\zeta_{\text{ham},h}$ is the calibration function of the Hamming loss. With this, we recover the calibration results from \cite{pedregosa2017consistency}.
    
\subsection{Cumulative link (CL)} \label{app:CL}

These methods are of the form \cite{mccullagh1980regression}
\begin{equation}
    S(v, y) = \left\{ \begin{array}{ll}
        -\log(\bar{t}^{-1}(v_1)) & \text{if}~ y=1 \\
        -\log(\bar{t}^{-1}(v_y) - \bar{t}^{-1}(v_{y-1})) & \text{if}~ 1<y<k \\
        -\log(1 - \bar{t}^{-1}(v_{k-1})) & \text{if}~ y=k
    \end{array}\right. ,
\end{equation}
with $\calV=\Rspace{k-1}$.
In this case, it corresponds to consider the decomposition $L(z,y)=\langle L^T\cdot e_z,e_y\rangle$,  (i.e., $\phi(y)=e_y\in\Rspace{k}$), $h(q)=-\operatorname{Ent}(q)$ and $\calD=\Delta_k$ with inverse link,
\begin{equation}
    t^{-1}(v)=\left\{ \begin{array}{ll}
        \bar{t}^{-1}(v_1) & \text{if}~ y=1 \\
        \bar{t}^{-1}(v_y) - \bar{t}^{-1}(v_{y-1}) & \text{if}~ 1<y<k \\
        1 - \bar{t}^{-1}(v_{k-1}) & \text{if}~ y=k
    \end{array}\right. ,
\end{equation}
which is not the canonical one. It is called cumulative link because the link is applied to the cumulative probabilities $\bar{t}^{-1}(v_y) = \sum_{j=1}^yq_j=(\mu_y+1)/2$. The decoding can be written as $d(v) = 1 + \sum_{j=1}^{k-1}1(\bar{t}^{-1}(v_j)\geq 1/2)$.
In the case that $\bar{t}(q)(2q-1)>0$ for $p\neq\frac{1}{2}$, then one can directly write (as for AT) $d(v) = 1 + \sum_{j=1}^{k-1}1(v_j\geq 0)$.

The most common link is the logistic link $\bar{t}^{-1}(v) = 1/(1+e^{-v}))$. With this link CL surrogate is convex (see Lemma 8 in \cite{pedregosa2017consistency}).

In this case, the exact calibration function is not easy to calculate due to the lack of symmetry of the calibration sets (see \cref{fig:ordinalsimplex}). However, it is straightforward to apply the lower bound by using the fact that the entropy is $1$-strongly convex w.r.t the $\|\cdot\|_1$ norm and $c_{\psi,\|\cdot\|_{\infty}} = \max_{z\in\calZ,y\in\calY}|z-y| = k-1$ using the fact that $F_z=L^T\cdot e_z=L_z$.
Hence, applying \cref{th:lowerboundcalibration} we obtain:
\begin{equation}\label{eq:lowerboundCL}
\zeta_h(\varepsilon) \geq \frac{\varepsilon^2}{8(k-1)^2}.
\end{equation}
Note that this lower bound has a factor $(k-1)^{-2}$ instead of the $(k-1)^{-1}$
of \cref{eq:calibrationCL}. 

This explains the experiment of Fig. 1 from \cite{pedregosa2017consistency}, where they show that the calibration function \eqref{eq:calibrationCL} of AT is larger than the calibration function for CL. However, they were not able to provide any result such as \eqref{eq:lowerboundCL}.

\section{Ranking with NDCG Measure} \label{app:ranking}

Let $\calZ=\mathfrak{S}_m$ be the set of permutations of $m$ elements and $\calY=[\bar{R}]^m$ the set of relevance scores for $m$ documents. Let
the \emph{gain} $G:\Rspace{}\rightarrow\Rspace{}$ be an increasing function and the \emph{discount} vector $D=(D_j)_{j=1}^m$ be a coordinate-wise decreasing vector. The NDCG-type losses are defined as the normalized discounted sum of the gain of the relevance scores ordered by the predicted permutation: \begin{equation}\label{eq:ndcgtype-0}
    L(\sigma, r) = 1 - \frac{1}{N(\bar{r})}\sum_{j=1}^m G([\bar{r}]_j)D_{\sigma(j)},
\end{equation}
where $N(\bar{r})=\max_{\sigma\in\mathfrak{S}_m}\sum_{j=1}^{m}G([\bar{r}]_j)D_{\sigma(j)}$ is a normalizer.

Note that looking at \cref{eq:ndcgtype-0}, we have the following affine decomposition \cite{ramaswamy2013convex, nowak2018sharp}:
\begin{equation}
    \psi(\sigma) = -(D_{\sigma(j)})_{j=1}^m, \quad \phi(\bar{r}) = \left(\frac{G([\bar{r}]_j)}{N(\bar{r})}\right)_{j=1}^m.
\end{equation}
Inference from $\calH$ corresponds to $z(\mu) =  \operatorname{argsort}_{\sigma\in\mathfrak{S}_m}(\mu_j)_{j=1}^m$.
If we now consider a strictly convex potential defined in $\Rspace{m}$ and the canonical link, we recover the group of surrogates presented in \cite{ravikumar2011ndcg}. With our framework, Fisher consistency comes for free by construction, we recover the same lower bound on the calibration function of their Thm. 10 from \cref{th:lowerboundcalibration} and the same improvement under low noise of their Thm. 11 from \cref{th:calibrationriskslownoise}.

\section{Graph Matching}\label{app:graphmatching}


In graph matching, the input space $\calX$ encodes features of two graphs $G_1,G_2$ with the same set of nodes, and the goal is to map the nodes from $G_1$ to the nodes of $G_2$. The loss used for graph matching is the Hamming loss between permutations defined as
\begin{equation}
    L(\sigma, \sigma') = \frac{1}{m}\sum_{j=1}^m1(\sigma(j)\neq\sigma'(j)) = 1 - \frac{\langle X_{\sigma},X_{\sigma'}\rangle_{F}}{m} = 1 + \langle\psi(\sigma), \phi(\sigma')\rangle_F,
\end{equation}
where $X_{\sigma}\in\Rspace{m\times m}$ is the permutation matrix associated to the permutation $\sigma$ and the embeddings are $\psi(\sigma)=-X_{\sigma}/m$ and 
$\phi(\sigma)=X_{\sigma}$.
In this case, the Bayes risk reads
\begin{equation}
    \ell(z,q) = 1 - \frac{\langle X_{\sigma}, P(q)\rangle_F}{m},
\end{equation}
where $P(q) = \sum_{\sigma'}q(\sigma')X_{\sigma'}$ and $\calH=\Rspace{k}$ with $k=m^2$. The Bayes optimum is computed through \emph{linear assignment} as
\begin{equation}
    z(P)=\argmax_{\sigma'\in\mathfrak{S}}~\langle X_{\sigma'},P\rangle_F.
\end{equation}
In this case, the marginal polytope corresponds to the polytope of doubly stochastic matrices (also called Birkhoff polytope),
\begin{equation}
    \calM = \{P\in\Rspace{m\times m}~|~P^T\textbf{1}=\textbf{1}, P\textbf{1}=\textbf{1},0\leq P_{ij}\leq 1, 1\leq i,j\leq m\},
\end{equation}
which has dimension $r=\operatorname{dim}(\calM)=k^2-2k+1<k^2$.
One might consider CRFs \cite{petterson2009exponential}, however, the inverse of the canonical link requires performing inference to the associated exponential family (see \cref{app:CRFs}) and this corresponds to computing the permanent matrix which is a \#P-complete problem. 
A possible workaround is to estimate the rows of the matrix $P$ independently with a multiclass classification algorithm and then perform linear assignment with the estimated probabilities. For instance, if one performs multinomial logistic regression independently at each row, it corresponds to the potential 
$h(P)=-\sum_{j=1}^m\operatorname{Ent}(P_j)$ where $P_j$ is the $j$-th row of the matrix $P$ and $\calD$ is the polytope of row-stochastic matrices,
\begin{equation}
    \calD = \{P\in\Rspace{m\times m}~|~P\textbf{1}=\textbf{1},0\leq P_{ij}\leq 1, 1\leq i,j\leq m\} = \prod_{j=1}^m\Delta_m\supsetneq\calM,
\end{equation}
which has dimension $k^2-k<k^2$ strictly larger than the dimension of the marginal polytope.
As the sum of entropies is $1$-strongly convex w.r.t the $\|\cdot\|_1$ norm and $c_{\psi, \|\cdot\|_{\infty}} = \frac{1}{m}$, \cref{th:lowerboundcalibration} gives,
\begin{equation}
    \zeta_h(\varepsilon) \geq \frac{m^2\varepsilon^2}{8}.
\end{equation}

\end{document}